\def\eqref#1{equation~\ref{#1}}
\def\1{\bm{1}}
\DeclareMathAlphabet{\mathsfit}{\encodingdefault}{\sfdefault}{m}{sl}
\SetMathAlphabet{\mathsfit}{bold}{\encodingdefault}{\sfdefault}{bx}{n}
\newcommand{\R}{\mathbb{R}}
\DeclareMathOperator*{\argmin}{arg\,min}
\theoremstyle{plain}
\newtheorem{theorem}{Theorem}[section]
\newtheorem{proposition}[theorem]{Proposition}
\theoremstyle{definition}
\newtheorem{definition}[theorem]{Definition}
\theoremstyle{remark}
\newcommand\blfootnote[1]{%
  \begingroup
  \renewcommand\thefootnote{}\footnote{#1}%
  \addtocounter{footnote}{-1}%
  \endgroup
}
\title{Regularized Neural Ensemblers}
\author[1,*]{Sebastian Pineda Arango}
\author[1,*]{Maciej Janowski}
\author[1]{Lennart Purucker}
\author[1]{Arber Zela}
\author[2,3,1]{\\Frank Hutter}
\author[4]{Josif Grabocka}
\affil[1]{University of Freiburg}
\affil[2]{PriorLabs}
\affil[3]{ELLIS Institute Tübingen} 
\affil[4]{University of Technology Nürnberg}
\begin{document}
\blfootnote{*Equal contribution.}

\maketitle

\begin{abstract}
Ensemble methods are known for enhancing the accuracy and robustness of machine learning models by combining multiple base learners. However, standard approaches like greedy or random ensembling often fall short, as they assume a constant weight across samples for the ensemble members. This can limit expressiveness and hinder performance when aggregating the ensemble predictions. In this study, we explore employing regularized neural networks as ensemble methods, emphasizing the significance of dynamic ensembling to leverage diverse model predictions adaptively. Motivated by the risk of learning low-diversity ensembles, we propose regularizing the ensembling model by randomly dropping base model predictions during the training. We demonstrate this approach provides lower bounds for the diversity within the ensemble, reducing overfitting and improving generalization capabilities. Our experiments showcase that the regularized neural ensemblers yield competitive results compared to strong baselines across several modalities such as computer vision, natural language processing, and tabular data.
\end{abstract}

\section{Introduction}
\label{sec:introduction}

\begin{figure*}[t] 
    \centering
    \includegraphics[width=0.8\linewidth]{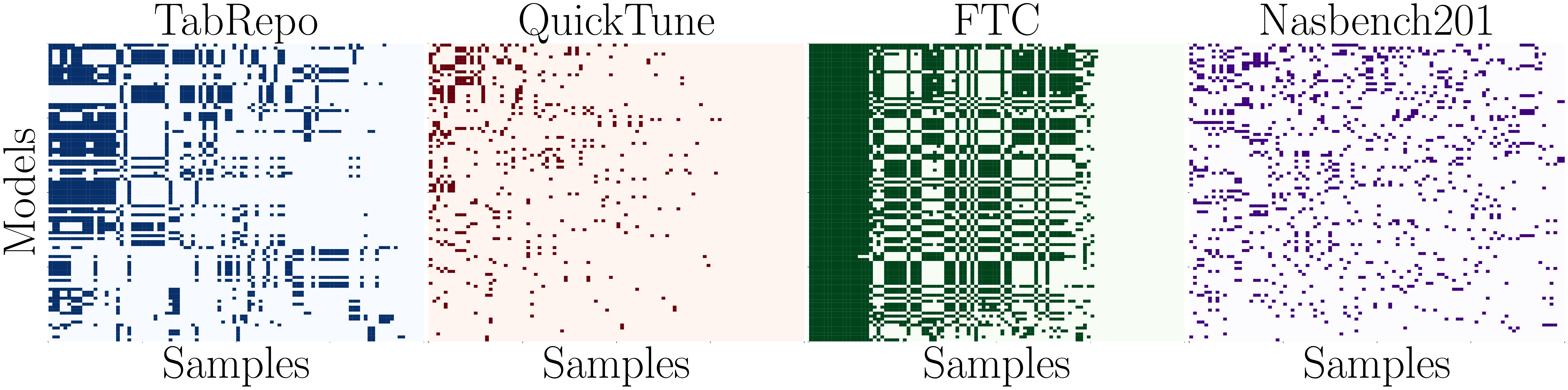}
    \caption{\textbf{Wrong Models Per Samples Across Meta-Datasets.} Every dark cell represents data instances where a model's prediction is wrong. Different models fail on different instances, therefore, only instance-specific dynamic ensembles are optimal.}
    \label{fig:motivation-metadatasets}
\end{figure*}

Ensembling machine learning models is a well-established practice among practitioners and researchers, primarily due to its enhanced generalization performance over single-model predictions~\citep{ganaie2022ensemble,erickson2020autogluon,auto-sklearn,Wang2020WisdomOC}. Ensembles are favored for their superior accuracy and ability to provide calibrated uncertainty estimates and increased robustness against covariate shifts~\citep{lakshminarayanan2017simple}. Combined with their relative simplicity, these properties make ensembling the method of choice for many applications, such as medical imaging and autonomous driving, where reliability is paramount. A popular set-up consists of ensembling from a pool, after training them separately, a.k.a. post-hoc ensembling~\citep{purucker2023cma}. This allows users to use models trained during hyperparameter optimization.

Despite these advantages, selecting post-hoc models that are accurate and diverse remains a challenging combinatorial problem, especially as the pool of candidate models grows. Commonly used heuristics, particularly in the context of tabular data, such as greedy selection~\citep{caruana2004ensemble} and various weighting schemes, attempt to optimize ensemble performance based on metrics evaluated on a held-out validation set or through cross-validation. However, these methods face significant limitations. Specifically, choosing models to include in the ensemble and determining optimal ensembling strategies (e.g., stacking weights) are critical decisions that, if not carefully managed, can lead to overfitting on the validation data. Although neural networks are good candidates for generating ensembling weights, few studies rely on them as a post-hoc ensembling approach. We believe this happens primarily due to a lack of ensemble-related inductive biases that provide regularization.

In this work, we introduce a novel approach to post-hoc ensembling using neural networks. Our proposed \emph{Neural Ensembler} dynamically generates the weights for each base model in the ensemble on a per-instance basis, a.k.a dynamical ensemble selection~\citep{ko2008dynamic,cavalin2013dynamic}. To mitigate the risk of overfitting the validation set, we introduce a regularization technique inspired by the inductive biases inherent to the ensembling task. Specifically, we propose randomly dropping base models during training, inspired by previous work on DropOut in Deep Learning~\citep{srivastava2014dropout}. Furthermore, our method is modality-agnostic, as it only relies on the base model predictions. For instance, for classification, we use the class predictions of the base models as input.

In summary, our contributions are as follows:

\begin{enumerate}[leftmargin=2em, itemsep=1pt] 
\item We propose a simple yet effective post-hoc ensembling method based on a regularized neural network that dynamically ensembles base models and is modality-agnostic.
\item To prevent the formation of low-diversity ensembles, the regularization technique randomly drops base model predictions during training. We demonstrate theoretically that this lower bounds the diversity of the generated ensemble, and validate its effect empirically.
\item Through extensive experiments, we show that Neural Ensemblers consistently select competitive ensembles across a wide range of data modalities, including tabular data (for both classification and regression), computer vision, and natural language processing. 
\end{enumerate}

To promote reproducibility, we have made our code publicly available in the following anonymous repository\footnote{\url{https://github.com/machinelearningnuremberg/RegularizedNeuralEnsemblers}}. 
We hope that our codebase, 
along with the diverse set of benchmarks used in our experiments, will serve as a valuable resource for the development and evaluation of future post-hoc ensembling methods.

\section{Background and Motivation}
\label{sec:motivation}


Post-hoc ensembling uses set of fitted base models $\{z_1,...,z_M\}$ such that every model outputs predictions $z_m(x): \mathbb{R}^D \rightarrow \mathbb{R}$. These outputs are combined by a stacking ensembler $f(z(x); \theta) := f(z_1(x), ..., z_M(x) ; \theta): \R^{M} \rightarrow \R$, where $z(x) = [z_1(x),...,z_M(x)]$ is the concatenation of the base models predictions. While the base models are fitted using a training set $\mathcal{D}_{\mathrm{Train}}$, the ensembler's parameters $\theta$ are typically obtained by minimizing a loss function on a validation set $\mathcal{D}_{\mathrm{Val}}$ such that:
\begin{equation}
    \theta \in \argmin_{\theta} \;\sum_{(x,y) \in \mathcal{D}_{\mathrm{Val}}}\mathcal{L}(f(z(x); \theta), y).
    \label{eq:post-hoc-ensembling}
\end{equation}
In the general case, this objective function can be optimized using gradient-free optimization methods such as evolutionary algorithms~\citep{purucker2023cma} or greedy search~\citep{caruana2004ensemble}. Commonly, $f$ is a linear combination $\theta \in \R^M$ of the model outputs:
\begin{equation}
    f(z(x); \theta) =  \sum_m \theta_m z_m(x).
    \label{eq:post-hoc-linear}
\end{equation}
Additionally, if we constraint the ensembler weights such that $\forall_i \theta_i \in \R_{+}$ and $\sum_i \theta_i = 1$ and assume probabilistic base models $z_m(x)=p(y|x,m)$, then we can interpret Equation \ref{eq:post-hoc-linear} as:
\begin{equation}
p(y |x) = \sum_i p(y|x,m)p(m), 
\label{eq:post-hoc-bma}
\end{equation}
which is referred to as Bayesian Model Average~\citep{raftery1997bayesian}, and uses $\theta_m=p(m)$. 
In the general case, the probabilistic ensembler $p(y|x)=p(y|z_1(x),...,z_M(x), \beta) $ is a stacking model parametrized by $\beta$.

\subsection{Motivating Dynamic Ensembling}
\label{sec:motivation_the_need}

We motivate in this Section the need for dynamic ensembling by analyzing base models' predictions in real data taken from our experimental meta-datasets. 
Generally, the distribution $p(m)$ can take different forms. 
\textit{Dynamic ensembling} assumes that the performance associated with an ensembler $f(z_m(x), \theta)$ is optimal if we select the optimal aggregation $\theta_m(x)=p(x|m)$ on a per-data point basis, instead of a static $\theta$.  To motivate this observation, we selected four datasets from different modalities: \textit{TabRepo} (Tabular data~\citep{tabrepo2023}), \textit{QuickTune} (Computer Vision~\citep{arango2024quicktune}), \textit{FTC}~\citep{arango2024ensembling} and \textit{NasBench 201} (NAS for Computer Vision~\citep{dong2020bench}). Then, we compute the per-sample error for 100 models in 100 samples.
We report the results in Figure \ref{fig:motivation-metadatasets}, indicating failed predictions with dark colors. We observe that models make different errors across samples, indicating a lack of optimality in static ensembling weights.

\section{Neural Ensemblers}
\label{sec:method}

\begin{figure*}
    \centering
    \includegraphics[width=1.\linewidth]{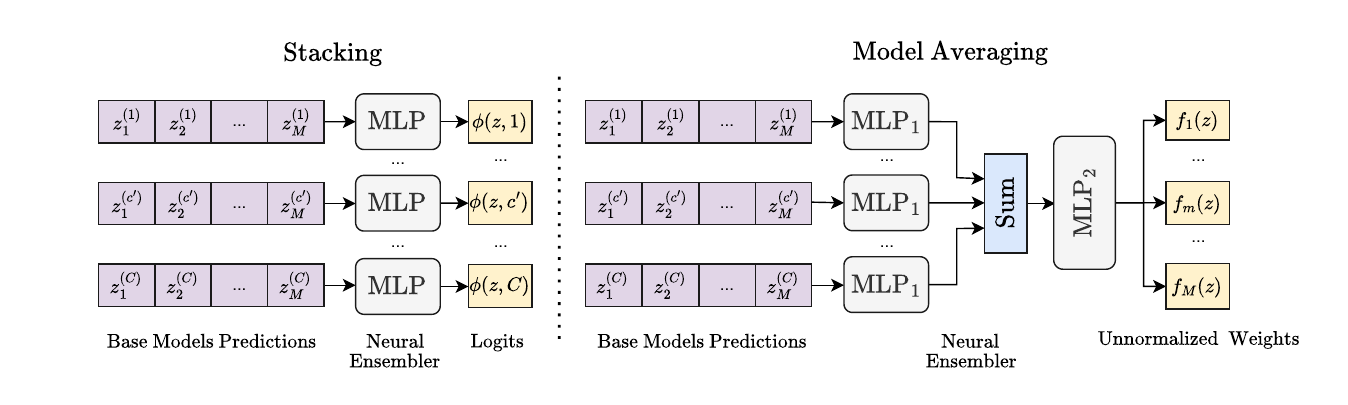}
    \caption{\textbf{Architecture of Neural Ensemblers (classification)}. The stacking mode uses a single MLP shared across base model class predictions. It outputs the logit per class, used for computing the final probability via SoftMax. In Model Averaging mode, it generates the unnormalized weights for every model, which are normalized with SoftMax.}
    \label{fig:architecture}
\end{figure*}
We build the \textit{Neural Ensemblers} by taking the predictions of $M$ base models $z(x)=[z_1(x),...,z_M(x)]$ as input. For regression, $z(x): \mathbb{R}^D \rightarrow \mathbb{R}^M$ outputs the base models' point predictions given by $x \in \mathbb{R}^D$, while for classification $z^{(c)}(x):  \mathbb{R}^D \rightarrow   [0,1]^M$ returns the probabilities predicted by the base models for class $c$\footnote{To simplify notation, we will henceforth make the dependency implicit, denoting $z(x)$ just as $z$.}. In our discussion, we consider two functional modes for the ensemblers: as a network outputting weights for model averaging, or as a stacking model that directly outputs the prediction. In \textbf{stacking} mode for regression, we aggregate the base model point predictions using a neural network to estimate the final prediction $\phi(z; \beta)$, where $\beta$ are the network parameters. In the \textbf{model-averaging} mode, we use a neural network to compute the weights $\theta_m(z;\beta)$ to combine the model predictions as in Equation \ref{eq:y_pred_with_ma}.


\begin{equation}
    \sum_m \theta_m({z};\beta) \cdot  z_m\label{eq:y_pred_with_ma}
\end{equation}


Regardless of the functional mode, the Neural Ensembler has a different output $\hat{y}$ for regression and classification. In regression, the output $\hat y$ is a point estimation of the mean for a normal distribution such that $p(y|x; \beta) = \mathcal{N}(\hat y, \sigma)$. For classification, the input is the probabilistic prediction of the base models per class $z^{(c)}_m= p(y=c|x,m)$, while the output is a categorical distribution $\hat y = p(y=c|z, \beta)$. We optimize $\beta$ by minimizing the negative log-likelihood over the validation dataset $\mathcal{D}_{\mathrm{Val}}$ as: 
\begin{equation}
    \min_{\beta} \mathcal{L}(\beta; D_{\mathrm{Val}}) = \min_{\beta}\sum_{(x,y) \in \mathcal{D}_{\mathrm{Val}}} -\mathrm{log}\ p(y|x; \beta)
    \label{eq:loss_function}
\end{equation}

\subsection{An Architecture with Parameter Sharing}
\label{sec:architecture}

We discuss the architectural implementation of the Neural Ensembler for the classification case, which we show in Figure \ref{fig:architecture}. For the \textbf{stacking} mode, we use an MLP that receives as input the base models' predictions $z^{(c)}$ for the class $c$ and outputs the corresponding predicted logit for this class, i.e. $\phi(z,c;\beta)$. The model predictions per class are fed independently into the same network, enabling the sharing of the network parameters $\beta$ as shown in Figure \ref{eq:architecture_ma}.
Subsequently, we compute the probability $p(y =c|x) = \frac{\exp^{\phi(z,c;\beta)}}{\sum_{c^{\prime}}\exp^{\phi(z,c^{\prime};\beta)}}$, with $\phi(x,c;\beta)=\mathrm{MLP}(z^{(c)}; \beta) $. In regression, the final prediction is the output $\phi(z; \beta)$.

For \textbf{model averaging} mode, we use a novel architecture based on a Deep Set~\citep{zaheer2017deep} embedding of the base models predictions. We compute the dynamic weights $\theta_m(z;\beta)= \frac{\exp f_m(z;\beta)}{\sum_{m^{\prime}} f_{m^{\prime}}(z;\beta)}$, where the unnormalized weight per model $f_m(z;\beta)$ is determined via two MLPs. The first one $\text{MLP}_1: \mathbb{R}^M \rightarrow \mathbb{R}^H$ embeds the predictions per class $z^{(c^{\prime})}$ into a latent dimension of size $H$, whereas the second network $\text{MLP}_2: \mathbb{R}^H \rightarrow \mathbb{R}^M$ aggregates the embeddings and outputs the unnormalized weights, as shown in Equation \ref{eq:architecture_ma}. Notice that the Neural Ensemblers' input dimension and number of parameters are independent of the number of classes, due to our proposed parameter-sharing mechanism. In Appendix $\ref{app:space_complexity}$, we further discuss the advantages of this approach from the perspective of space complexity.

\begin{equation}
    f_m(z; \beta) = \mathrm{MLP}_2 \left(\sum_{c^{\prime}} \mathrm{MLP}_1 \left(z^{(c^{\prime})}; \beta_1 \right); \beta_2 \right) \label{eq:architecture_ma}
\end{equation}






\subsection{Ensemble Diversity}
\label{sec:motivation_the_risk}

An important aspect when building ensembles is guaranteeing diversity among the models~\citep{wood2023unified,jeffares2024joint}. 
This has motivated some approaches to explicitly account for diversity when searching the ensemble configuration ~\citep{shen2022divbo,purucker2023q}. A common way to measure diversity is the \textit{ambiguity}~\citep{krogh1994neural}, which can be derived after decomposing the loss function~\citep{jeffares2024joint}. Measuring diversity is essential as it helps to avoid overfitting in the ensemble.
\begin{definition}[Ensemble Diversity via Ambiguity Measure]\label{th:div_collapse}
An ensemble with base models $\mathcal{Z}={z_1,..., z_M}$ with prediction $\bar z= \sum_m \theta_m z_m$ has diversity $\alpha := \mathbb{E} \left[\sum_m \theta_m (z_m- \bar z)^2 \right]$.
\end{definition}



\subsection{Base Models' DropOut}

\begin{algorithm}[t]
\SetAlgoLined
\KwIn{Base model predictions $\{z_1(x),...,z_M(x) \}$, validation data $\mathcal{D}_\mathrm{Val}$, probability of retaining $\gamma$, \textit{mode} $\in  \{\mathrm{Stacking, Averaging}\}$.}
\KwOut{Neural Ensembler's parameters $\beta$ }

Initialize randomly parameters $\beta$ \;

\While{$done$}{

Sample masking vector $r \in \mathbb{R}^M, r_i \sim \mathrm{Ber}(\gamma)$\;
\smallskip

Mask base models predictions $z_{\mathrm{drop}}=r \odot z $ \;
\smallskip

\eIf{$\mathit{mode} = \mathrm{Stacking}$}
{
    Compute predictions $\phi\left(\frac{1}{\gamma}z_{\mathrm{drop}}\right)$ \;
}
{

    Compute weights $\theta_m \left(\frac{1}{\gamma}z_{\mathrm{drop}} \right)$ using Equation \ref{eq:weight_mask_ma} \;
    \smallskip
    
    Compute predictions using Equation \ref{eq:y_pred_with_ma} \;
}
Update parameters $\beta$ using $\nabla \mathcal{L}(\beta; \mathcal{D}_{\mathrm{Val}})$
}
    
\smallskip
\KwRet{$\beta$}\;
\caption{Training Algorithm for Neural Ensemblers with Base Models' DropOut }
\label{alg:training_algorithm}
\vspace{-0.1em}
\end{algorithm}

Using Neural Ensemblers tackles the need for dynamical ensembling. Moreover, it gives additional expressivity associated with neural networks. However, there is also a risk of overfitting caused by a low diversity. Although the error might effectively decrease the validation loss (Equation \ref{eq:loss_function}), it does not necessarily generalize to test samples. Inspired by previous work~\citep{srivastava2014dropout}, we propose to drop some base models during training forward passes. Intuitively, this forces the ensembler to rely on different base models to perform the predictions, instead of merely using the preferred base model(s).

Formally, we mask the inputs such as $r_m \cdot z_m$, where $r_m \sim \mathrm{Ber}(\gamma)$, where $\mathrm{Ber}(\gamma)$ is the Bernoulli distribution with parameter $\gamma$ with represents the probability of keeping the base model, while $\delta=1-\gamma$ represents the DropOut rate. We also mask the weights 
when using model averaging:
\begin{equation}
    \theta_m(z;\beta,r) = \frac{r_m\cdot \exp{f_{m}}({z}; \beta)}{\sum_{m^{\prime}} r_{m^{\prime}}\cdot  \exp{f_{m^{\prime}}}({z};  \beta)}.
    \label{eq:weight_mask_ma}
\end{equation}
As \textit{DropOut} changes the scale of the inputs, we should apply the \textit{weight scaling rule} during inference by multiplying the dropped variables by the retention probability $\gamma$. Alternatively, we can scale the variables during training by $\frac{1}{\gamma}$. In Algorithm \ref{alg:training_algorithm}, we detail how to train the Neural Ensemblers by dropping base model predictions. It has two modes, acting as a direct stacker or as a model averaging ensembler.
We demonstrate that base models' DropOut lower bounds the diversity for a simple ensembling case in Proposition~\ref{th:avoid_div_collapse}.

\begin{definition}[Preferred Base Model]\label{def:preferred}
Consider a target variable $y \in R$ and a set of uncorrelated base models predictions $\mathcal{Z}=\{z_m | z_m \in \mathrm{R}, m=1,...,M \}$. $z_p$ is the \textit{Preferred Base Model} if it has the highest \textit{sample} correlation to the target, i.e. $\rho_{z_p, y} \in [0,1], \rho_{z_p, y} >\rho_{z_m, y}, \forall z_m \in \mathcal{Z}/\{z_p\}$.
\end{definition}



\begin{proposition}[Diversity Lower Bound]\label{th:avoid_div_collapse}
As the correlation of the preferred model increases $\rho_{p_m,y} \rightarrow 1$, the diversity decreases $\alpha \rightarrow 1-\gamma$, when using Base Models' DropOut with probability of retaining $\gamma$.
\end{proposition}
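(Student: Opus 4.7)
The plan is to decompose the diversity by conditioning on whether the preferred base model $z_p$ survives the Bernoulli mask $r$. By the tower property, $\alpha = \gamma \cdot A_1 + (1-\gamma) \cdot A_0$, where $A_j := \mathbb{E}\bigl[\sum_m \theta_m (z_m - \bar z)^2 \bigm| r_p = j \bigr]$. It therefore suffices to evaluate $A_1$ and $A_0$ in the limit $\rho_{z_p,y} \to 1$ and combine.

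First I would treat $A_1$, the case where the preferred model is retained. Since $z_p \to y$ on the validation set, the loss in Equation~\ref{eq:loss_function} is minimized by placing all weight on $z_p$. By the softmax form in Equation~\ref{eq:weight_mask_ma} this means $f_p$ dominates $f_m$ for $m \neq p$, so $\theta_p(r) \to 1$ and $\theta_m(r) \to 0$ for $m \neq p$ whenever $r_p = 1$. Then $\bar z \to z_p$ and the sum $\sum_m \theta_m (z_m - \bar z)^2 \to 0$, giving $A_1 \to 0$.

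Next I would treat $A_0$, the fallback case where $z_p$ is masked out and only non-preferred models contribute. Using the identity $\sum_m \theta_m (z_m - \bar z)^2 = \sum_m \theta_m z_m^2 - \bar z^2$ (valid because the $\theta_m$ sum to one), together with pairwise uncorrelation and unit second moment of the $z_m$, the conditional ambiguity simplifies to $1 - \sum_{m \neq p} \theta_m^2$. By the symmetry implied by Definition~\ref{def:preferred} — in the limit each non-preferred model has zero sample correlation with $y$, so the loss is invariant to permutations of $\{f_m\}_{m \neq p}$ — the optimal non-preferred logits coincide, making the weights uniform over the surviving non-preferred models. Marginalizing over the remaining Bernoulli coordinates then drives $A_0 \to 1$.

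Combining the two cases yields $\alpha \to \gamma \cdot 0 + (1-\gamma) \cdot 1 = 1-\gamma$. The step I expect to be the main obstacle is the symmetry claim inside $A_0$: an adversarial configuration of the non-preferred predictions could in principle tilt the optimal logits asymmetrically and spoil the uniform-weight conclusion. I would address this via strict convexity of the training loss in the softmax parameters combined with exchangeability of $\{f_m\}_{m \neq p}$ inherited from the uncorrelated, equally-uninformative status of the non-preferred models in the limit, and finally check that the weight-scaling rule at inference preserves the limiting value.
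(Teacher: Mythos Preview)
Your conditioning on $r_p$ is a genuinely different route from the paper. The paper never splits into cases; it works with the \emph{non-renormalized} dropout prediction $\bar z = \sum_m r_m\,\theta_m z_m$ where the $\theta_m$ are \emph{fixed} (independent of the mask), shows $\mathbb{V}(r_m z_m) = \gamma$ for standardized $z_m$, hence $\mathbb{V}(\bar z) = \gamma \sum_m \theta_m^2$, and then reads off $\alpha = 1 - \gamma \sum_m \theta_m^2 \to 1 - \gamma$ because $\theta_p \to 1$ forces $\sum_m \theta_m^2 \to 1$. Your version instead uses the renormalized softmax weights of Equation~\ref{eq:weight_mask_ma}, which is arguably closer to the actual training rule but changes the bookkeeping in an essential way.

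That choice creates a real gap in your $A_0$ step. Even granting your symmetry argument so that the surviving non-preferred models share weight uniformly, with $k$ survivors the conditional ambiguity is $1 - \sum_m \theta_m^2 = 1 - 1/k$, and marginalizing over $k \sim \mathrm{Binomial}(M-1,\gamma)$ yields $A_0 = 1 - \mathbb{E}\bigl[1/k \mid k \geq 1\bigr] < 1$ for every finite $M$. The proposition takes only the limit $\rho_{z_p,y} \to 1$, not $M \to \infty$, so ``marginalizing over the remaining Bernoulli coordinates'' cannot drive $A_0$ to $1$; your combination lands at $(1-\gamma)\bigl(1 - \mathbb{E}[1/k]\bigr)$ rather than $1-\gamma$. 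The paper avoids this precisely because it keeps the weights fixed: when $r_p = 0$, the term $\theta_p(z_p - \bar z)^2$ is still present in the ambiguity sum with $\theta_p \approx 1$ and $\bar z \approx 0$, so that single term already contributes $\mathbb{E}[z_p^2] = 1$ and no appeal to the non-preferred models is needed at all. To repair your argument you would either have to revert to the paper's fixed-weight model for the diversity computation (in which case the symmetry discussion becomes unnecessary), or add an explicit large-$M$ hypothesis that is absent from the statement.
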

\textit{Sketch of Proof.} We want to compute $\lim_{\rho_{z_p,y}\to1}  \alpha = \lim_{\rho_{z_p,y}\to1} \mathbb{E} \left[\sum_m \theta_m (z_m- \bar z)^2  \right] $. By using $\bar z = \sum_m r_m \ \theta_m z_m$, and assuming, without loss of generality, that the predictions are standardized, we obtain $\mathbb{V}(r \cdot z_m) =  \gamma $. This lead as to  $\lim_{\rho_{z_p,y}\to1}  \alpha = 1-\gamma$, after following a procedure similar to Proposition~\ref{th:div_collapse}. We provide the complete proof in Appendix \ref{sec:proofs}.

\section{Experiments and Results}
\label{sec:experiments}
In this section, we provide empirical evidence of the effectiveness of our approach. 

\subsection{Experimental Setup}

 \paragraph{Meta-Datasets.} In our experiments, we utilize four meta-datasets with pre-computed predictions, which allows us to simulate ensembles without the need to fit models. These meta-datasets cover diverse data modalities, including Computer Vision, Tabular Data, and Natural Language Processing. Additionally, we evaluate the method on datasets without pre-computed predictions to assess the performance of ensembling methods that rely on fitted models during the evaluation. Table \ref{tab:metadatasets} reports the main information related to these datasets. Particularly for \textit{Nasbench}, we created 2 versions by subsampling 100 and 1000 models. The meta-dataset \textit{Finetuning Text Classifiers (FTC)}~\citep{arango2024ensembling} contains the predictions of several language models to evaluate ensembling techniques on text classification tasks by finetuning language models such as GPT2~\citep{radford2019language}, Bert~\citep{DBLP:journals/corr/abs-1810-04805} and Bart~\citep{bart}. We also generate a set of fitted \textit{Scikit-Learn Pipelines} on classification datasets. In this case, we stored the pipeline in memory, allowing us to evaluate our method in practical scenarios where the user has fitted models instead of predictions. We detailed information about the creation of this meta-dataset in Appendix \ref{sec:details_metadatasets}. Altogether, the meta-datasets comprise over 240 datasets in total. The information about each data set lies in the referred work under the column \textit{Task Information} in Table \ref{tab:metadatasets}. 


\begin{table*}[t]
    \caption{Average Normalized NLL across Metadatasets.}
    \vspace{0.5em}
    \centering
    \resizebox{\textwidth}{!}{%
    \begin{tabular}{llllllll}
\toprule
 & \bfseries FTC & \bfseries NB-Micro & \bfseries NB-Mini & \bfseries QT-Micro & \bfseries QT-Mini & \bfseries TR-Class & \bfseries TR-Reg \\
\midrule
\bfseries Single-Best & 1.0000$_{\pm0.0000}$ & 1.0000$_{\pm0.0000}$ & 1.0000$_{\pm0.0000}$ & 1.0000$_{\pm0.0000}$ & 1.0000$_{\pm0.0000}$ & 1.0000$_{\pm0.0000}$ & 1.0000$_{\pm0.0000}$ \\
\bfseries Random & 1.5450$_{\pm0.5289}$ & 0.6591$_{\pm0.2480}$ & 0.7570$_{\pm0.2900}$ & 6.8911$_{\pm3.1781}$ & 5.8577$_{\pm3.2546}$ & 1.7225$_{\pm1.9645}$ & 1.8319$_{\pm2.1395}$ \\
\bfseries Top5 & 0.8406$_{\pm0.0723}$ & 0.6659$_{\pm0.1726}$ & 0.6789$_{\pm0.3049}$ & 1.5449$_{\pm1.8358}$ & 1.1496$_{\pm0.3684}$ & 1.0307$_{\pm0.5732}$ & \underline{\textit{0.9939}}$_{\pm0.0517}$ \\
\bfseries Top50 & 0.8250$_{\pm0.1139}$ & 0.5849$_{\pm0.2039}$ & 0.6487$_{\pm0.3152}$ & 3.3068$_{\pm2.6197}$ & 3.0618$_{\pm2.2960}$ & 1.0929$_{\pm1.0198}$ & 1.0327$_{\pm0.2032}$ \\
\bfseries Quick & 0.7273$_{\pm0.0765}$ & 0.5957$_{\pm0.1940}$ & 0.6497$_{\pm0.3030}$ & 1.1976$_{\pm1.1032}$ & 0.9747$_{\pm0.2082}$ & \underline{\textit{0.9860}}$_{\pm0.2201}$ & 1.0211$_{\pm0.1405}$ \\
\bfseries Greedy & \underline{\textit{0.6943}}$_{\pm0.0732}$ & \underline{\textit{0.5785}}$_{\pm0.1972}$ & 0.7617$_{\pm0.3435}$ & \underline{\textit{0.9025}}$_{\pm0.2378}$ & \underline{\textit{0.9093}}$_{\pm0.1017}$ & \textbf{0.9665}$_{\pm0.0926}$ & 1.0149$_{\pm0.1140}$ \\
\bfseries CMAES & 1.2356$_{\pm0.5295}$ & 1.0000$_{\pm0.0000}$ & 1.0000$_{\pm0.0000}$ & 4.1728$_{\pm2.8724}$ & 4.6474$_{\pm3.0180}$ & 1.3487$_{\pm1.3390}$ & 1.0281$_{\pm0.1977}$ \\
\bfseries Random Forest & 0.7496$_{\pm0.0940}$ & 0.8961$_{\pm0.3159}$ & 0.9340$_{\pm0.4262}$ & 3.7033$_{\pm2.8145}$ & 2.2938$_{\pm2.2068}$ & 1.2655$_{\pm0.4692}$ & 1.0030$_{\pm0.0871}$ \\
\bfseries Gradient Boosting & 0.7159$_{\pm0.1529}$ & 1.7288$_{\pm1.2623}$ & 1.2575$_{\pm0.4460}$ & 1.9373$_{\pm1.2839}$ & 2.6193$_{\pm2.3159}$ & 1.4288$_{\pm1.2083}$ & 1.0498$_{\pm0.2128}$ \\
\bfseries SVM & 0.7990$_{\pm0.0909}$ & 0.7744$_{\pm0.2967}$ & 0.9358$_{\pm0.5706}$ & 5.4377$_{\pm3.3807}$ & 4.0019$_{\pm3.6601}$ & 1.3884$_{\pm1.4276}$ & 2.7975$_{\pm3.0219}$ \\
\bfseries Linear & 0.7555$_{\pm0.0898}$ & 0.7400$_{\pm0.2827}$ & 0.8071$_{\pm0.2206}$ & 1.3960$_{\pm1.2334}$ & 1.1031$_{\pm0.7038}$ & 1.1976$_{\pm1.1024}$ & 3.1488$_{\pm3.2813}$ \\
\bfseries XGBoost & 0.8292$_{\pm0.1434}$ & 0.7389$_{\pm0.2326}$ & 0.9092$_{\pm0.5304}$ & 3.7822$_{\pm3.1194}$ & 2.6119$_{\pm2.3911}$ & 1.7697$_{\pm1.4672}$ & 1.2580$_{\pm0.4875}$ \\
\bfseries CatBoost & \textbf{0.6887}$_{\pm0.0953}$ & 0.8092$_{\pm0.2513}$ & 0.9512$_{\pm0.5083}$ & 2.6262$_{\pm2.6482}$ & 2.4145$_{\pm1.8989}$ & 1.2570$_{\pm1.2859}$ & 1.0454$_{\pm0.1550}$ \\
\bfseries LightGBM & 0.7973$_{\pm0.1946}$ & 3.6004$_{\pm2.5822}$ & 5.3943$_{\pm4.7980}$ & 3.0378$_{\pm2.7945}$ & 3.6860$_{\pm3.2856}$ & 1.8298$_{\pm1.1596}$ & 1.6250$_{\pm2.1651}$ \\
\bfseries Akaike & 0.8526$_{\pm0.1403}$ & 0.5838$_{\pm0.2031}$ & \underline{\textit{0.6485}}$_{\pm0.3166}$ & 3.1574$_{\pm2.5898}$ & 2.6888$_{\pm2.0620}$ & 1.0930$_{\pm1.0203}$ & 1.0221$_{\pm0.1793}$ \\
\bfseries MA & 0.9067$_{\pm0.1809}$ & 0.5970$_{\pm0.2034}$ & 0.6530$_{\pm0.3028}$ & 4.7921$_{\pm3.0780}$ & 4.0168$_{\pm2.8560}$ & 1.4724$_{\pm1.9401}$ & 1.3342$_{\pm1.3515}$ \\
\bfseries DivBO & 0.7695$_{\pm0.1195}$ & 0.7307$_{\pm0.3061}$ & 0.6628$_{\pm0.3435}$ & 1.2251$_{\pm1.0293}$ & 0.9430$_{\pm0.2036}$ & 1.0023$_{\pm0.3411}$ & 1.0247$_{\pm0.1473}$ \\
\bfseries EO & 0.7535$_{\pm0.1156}$ & 0.5801$_{\pm0.2051}$ & 0.6911$_{\pm0.2875}$ & 1.3702$_{\pm1.6389}$ & 0.9649$_{\pm0.2980}$ & 1.0979$_{\pm1.0289}$ & 1.0183$_{\pm0.0993}$ \\ \midrule
\bfseries NE-Stack & 0.7562$_{\pm0.1836}$ & \textbf{0.5278}$_{\pm0.2127}$ & \textbf{0.6336}$_{\pm0.3456}$ & \textbf{0.7486}$_{\pm0.6831}$ & \textbf{0.6769}$_{\pm0.2612}$ & 1.3268$_{\pm0.7498}$ & 1.2379$_{\pm0.4083}$ \\
\bfseries NE-MA & 0.6952$_{\pm0.0730}$ & 0.5822$_{\pm0.2147}$ & 0.6522$_{\pm0.3131}$ & 1.0177$_{\pm0.5151}$ & 0.9166$_{\pm0.0936}$ & 1.0515$_{\pm1.0003}$ & \textbf{0.9579}$_{\pm0.0777}$ \\
\bottomrule
\end{tabular}

    }
        \label{tab:rq1_nll_aggregated}
\end{table*}

\paragraph{Baselines.} We compare the \textbf{Neural Ensemblers (NE)} with other common and competitive ensemble approaches. 1) \textbf{Single best} selects the best model according to the validation metric; 2) \textbf{Random} chooses randomly $N=50$ models to ensemble, 3) \textbf{Top-N} ensembles the best $N$ models according to the validation metric; 4) \textbf{Greedy} creates an ensemble with $N=50$ models by iterative selecting the one that improves the metric as proposed by previous work~\citep{caruana2004ensemble}; 5) \textbf{Quick} builds the ensemble with 50 models by adding model subsequently only if they strictly improve the metric; 6) \textbf{CMAES}~\citep{purucker2023cma} uses an evolutive strategy with a post-processing method for ensembling, 7) \textbf{Model Average (MA)} computed the sum of the predictions with constant weights as in Equation \ref{eq:post-hoc-bma}. We also compare to methods that perform ensemble search iteratively 
via Bayesian Optimization such as 8) \textbf{DivBO}~\citep{shen2022divbo}, and 9) \textbf{Ensemble Optimization (EO)}~\citep{levesque2016bayesian}. We also report results by 
using common ML models as stackers, such as \textbf{SVM} and \textbf{Random Forest}.
Finally, we include models from Dynamic Ensemble Search (DES) literature such as \textbf{KNOP}~\citep{cavalin2013dynamic}, \textbf{KNORAE}~\citep{ko2008dynamic} and \textbf{MetaDES}~\citep{cruz2015meta}. We provide further details on the baselines setup in Appendix \ref{app:baselines}.

\begin{wrapfigure}{r}{0.5\textwidth}
\begin{center}
    \includegraphics[width=0.7\linewidth]{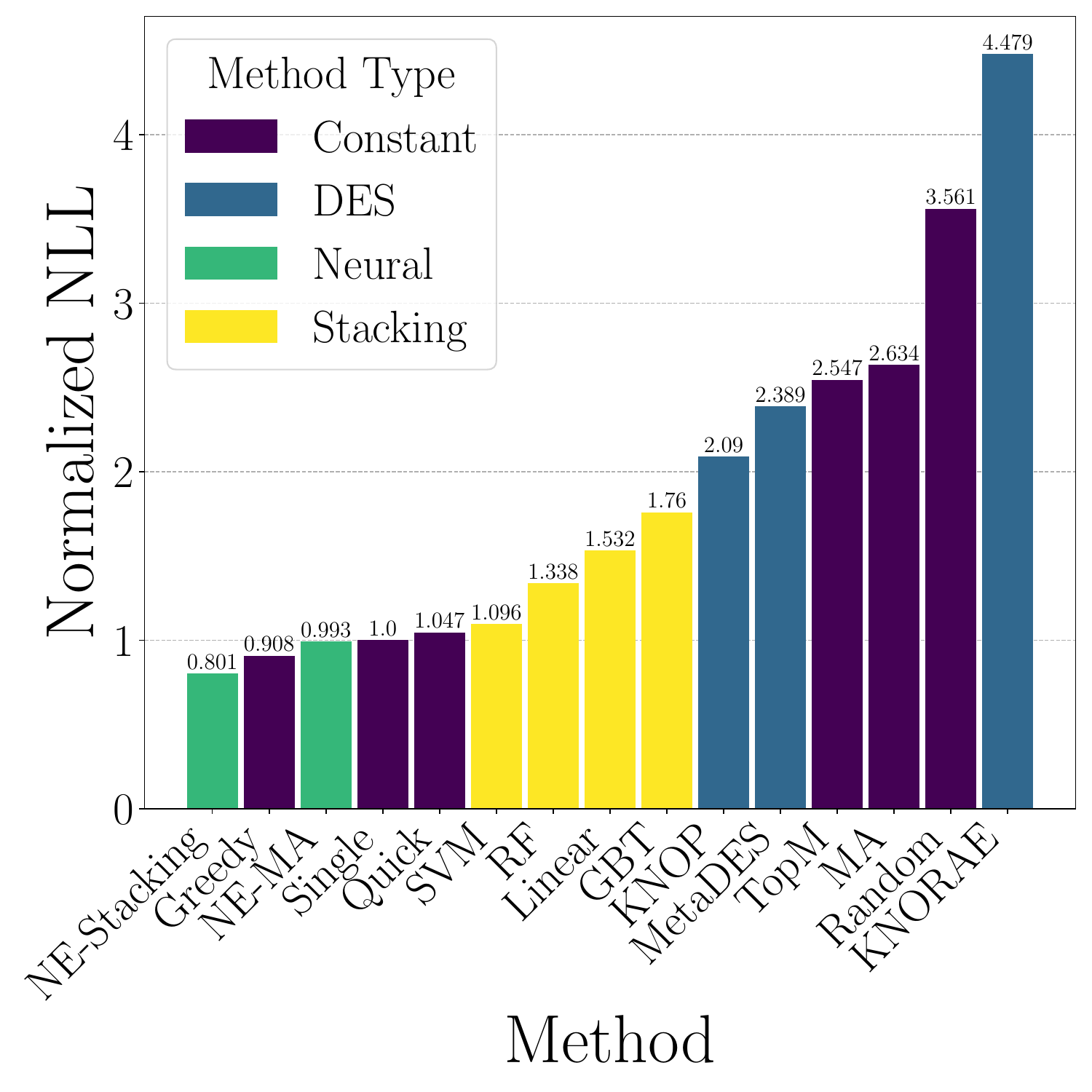} 

\end{center}
\vspace{-1.3em}
\caption{ Results on Scikit-Learn Pipelines.}

\label{fig:scikit_learn_pipelines}
\end{wrapfigure}

\paragraph{Neural Ensemblers' Setup.} We train the neural networks for $10 000$ update steps, with a batch size of $2048$. If the GPU memory is not enough for fitting the network because of the number of base models, or the number of classes, we decreased the batch size to $256$. Additionally, we used the Adam optimizer and a network with four layers, $32$ neurons, and a probability of keeping base models $\gamma=0.25$, or alternatively a DropOut rate $\delta=0.75$.  Notice that the architecture of the ensemblers slightly varies depending on the mode, \textit{Stacking} or \textit{Model Average} (MA). For the Stacking mode, we use an MLP with four layers and $32$ neurons with ReLU activations. For MA mode, we use two MLPs as in Equation \ref{eq:architecture_ma}: 1) $\textrm{MLP}_1$ has $3$ layers with $32$ neurons, while 2) $\textrm{MLP}_2$ has one layer with the same number of neurons. Although changing some of these hyperparameters might improve the performance, we keep this setup constant for all the experiments, after checking that the Neural Ensemblers perform well in a subset of the Quick-Tune meta-dataset (\textit{extended version}). 


\begin{table*}[t]
    \caption{Average Normalized Error across Metadatasets.}
            \vspace{0.5em}
    \centering
    \resizebox{\textwidth}{!}{%
    \begin{tabular}{llllllll}
\toprule
 & \bfseries FTC & \bfseries NB-Micro & \bfseries NB-Mini & \bfseries QT-Micro & \bfseries QT-Mini & \bfseries TR-Class & \bfseries TR-Class (AUC) \\
\midrule
\bfseries Single-Best & 1.0000$_{\pm0.0000}$ & 1.0000$_{\pm0.0000}$ & 1.0000$_{\pm0.0000}$ & 1.0000$_{\pm0.0000}$ & 1.0000$_{\pm0.0000}$ & 1.0000$_{\pm0.0000}$ & 1.0000$_{\pm0.0000}$ \\
\bfseries Random & 1.3377$_{\pm0.2771}$ & 0.7283$_{\pm0.0752}$ & 0.7491$_{\pm0.2480}$ & 6.6791$_{\pm3.4638}$ & 4.7284$_{\pm2.9463}$ & 1.4917$_{\pm1.6980}$ & 1.7301$_{\pm1.8127}$ \\
\bfseries Top5 & 0.9511$_{\pm0.0364}$ & 0.6979$_{\pm0.0375}$ & 0.6296$_{\pm0.1382}$ & \underline{\textit{0.6828}}$_{\pm0.3450}$ & 0.8030$_{\pm0.2909}$ & 0.9998$_{\pm0.1233}$ & 0.9271$_{\pm0.2160}$ \\
\bfseries Top50 & 1.1012$_{\pm0.1722}$ & 0.6347$_{\pm0.0395}$ & 0.5650$_{\pm0.1587}$ & 1.0662$_{\pm0.9342}$ & 1.0721$_{\pm0.4671}$ & 0.9800$_{\pm0.1773}$ & 0.9297$_{\pm0.2272}$ \\
\bfseries Quick & 0.9494$_{\pm0.0371}$ & 0.6524$_{\pm0.0436}$ & 0.5787$_{\pm0.1510}$ & 0.7575$_{\pm0.2924}$ & \underline{\textit{0.7879}}$_{\pm0.2623}$ & 0.9869$_{\pm0.1667}$ & \underline{\textit{0.9054}}$_{\pm0.2232}$ \\
\bfseries Greedy & 0.9494$_{\pm0.0374}$ & 0.7400$_{\pm0.1131}$ & 0.6033$_{\pm0.1572}$ & 0.9863$_{\pm0.4286}$ & 0.9297$_{\pm0.1435}$ & 0.9891$_{\pm0.1693}$ & 0.9090$_{\pm0.2197}$ \\
\bfseries CMAES & \underline{\textit{0.9489}}$_{\pm0.0392}$ & 0.6401$_{\pm0.0343}$ & 0.5797$_{\pm0.1575}$ & 1.0319$_{\pm0.5000}$ & 0.9086$_{\pm0.1121}$ & 0.9935$_{\pm0.1953}$ & 1.1878$_{\pm1.1457}$ \\
\bfseries Random Forest & 0.9513$_{\pm0.0359}$ & 0.6649$_{\pm0.0427}$ & 0.6891$_{\pm0.3039}$ & 1.4738$_{\pm1.3510}$ & 1.2530$_{\pm0.4875}$ & 1.0041$_{\pm0.2330}$ & 1.0924$_{\pm0.6284}$ \\
\bfseries Gradient Boosting & 1.0097$_{\pm0.1033}$ & 1.2941$_{\pm0.5094}$ & 1.2037$_{\pm0.3528}$ & 0.8514$_{\pm0.5003}$ & 1.6121$_{\pm1.7023}$ & 1.0452$_{\pm0.3808}$ & 1.0663$_{\pm0.4884}$ \\
\bfseries SVM & \textbf{0.9453}$_{\pm0.0383}$ & 0.6571$_{\pm0.0483}$ & 0.7015$_{\pm0.3067}$ & 1.1921$_{\pm0.8266}$ & 1.4579$_{\pm0.6233}$ & \textbf{0.9585}$_{\pm0.2160}$ & 1.4701$_{\pm1.3486}$ \\
\bfseries Linear & 0.9609$_{\pm0.0347}$ & 0.7891$_{\pm0.1978}$ & 0.7782$_{\pm0.1941}$ & 0.7333$_{\pm0.4457}$ & 0.9291$_{\pm0.3580}$ & 0.9776$_{\pm0.2844}$ & 1.0329$_{\pm0.4022}$ \\
\bfseries XGBoost & 0.9784$_{\pm0.0334}$ & 0.7886$_{\pm0.0903}$ & 0.7183$_{\pm0.2628}$ & 3.3875$_{\pm3.1419}$ & 2.1067$_{\pm1.9570}$ & 1.3310$_{\pm1.4402}$ & 1.1490$_{\pm1.0202}$ \\
\bfseries CatBoost & 1.0271$_{\pm0.0952}$ & 0.9247$_{\pm0.0652}$ & 0.8526$_{\pm0.2627}$ & 1.1858$_{\pm0.5737}$ & 1.4011$_{\pm0.7817}$ & 1.1507$_{\pm1.0190}$ & 1.0179$_{\pm0.2395}$ \\
\bfseries LightGBM & 0.9774$_{\pm0.0369}$ & 1.2866$_{\pm0.6738}$ & 1.2278$_{\pm0.8503}$ & 2.4846$_{\pm2.5747}$ & 3.1337$_{\pm2.9066}$ & 1.0954$_{\pm0.4326}$ & 1.1228$_{\pm0.9022}$ \\
\bfseries Akaike & 1.0242$_{\pm0.0723}$ & \underline{\textit{0.6328}}$_{\pm0.0384}$ & 0.5667$_{\pm0.1578}$ & 1.0323$_{\pm0.8817}$ & 1.0346$_{\pm0.4554}$ & 0.9832$_{\pm0.1784}$ & 1.1146$_{\pm1.0218}$ \\
\bfseries MA & 1.0709$_{\pm0.0845}$ & 0.6381$_{\pm0.0349}$ & \textbf{0.5610}$_{\pm0.1490}$ & 1.1548$_{\pm0.8465}$ & 1.2173$_{\pm0.6107}$ & 1.0917$_{\pm1.0135}$ & 0.9977$_{\pm0.2278}$ \\
\bfseries DivBO & 1.0155$_{\pm0.1452}$ & 0.6915$_{\pm0.0536}$ & 0.9120$_{\pm0.1524}$ & 1.3935$_{\pm1.4316}$ & 1.0635$_{\pm0.7587}$ & 1.0908$_{\pm1.0104}$ & 1.0899$_{\pm1.0297}$ \\
\bfseries EO & 1.0208$_{\pm0.1159}$ & 0.6365$_{\pm0.0445}$ & 0.5704$_{\pm0.1619}$ & 1.0185$_{\pm0.6464}$ & 1.0367$_{\pm0.4394}$ & 1.0851$_{\pm1.0136}$ & 0.9377$_{\pm0.2310}$ \\ \midrule
\bfseries NE-Stack & 0.9491$_{\pm0.0451}$ & 0.6331$_{\pm0.0378}$ & 0.5836$_{\pm0.1592}$ & \textbf{0.6104}$_{\pm0.3656}$ & \textbf{0.7545}$_{\pm0.2960}$ & 1.0440$_{\pm0.3309}$ & 1.0035$_{\pm0.5295}$ \\
\bfseries NE-MA & 0.9527$_{\pm0.0402}$ & \textbf{0.6307}$_{\pm0.0363}$ & \underline{\textit{0.5621}}$_{\pm0.1483}$ & 0.8297$_{\pm0.4974}$ & 0.8236$_{\pm0.2240}$ & \underline{\textit{0.9592}}$_{\pm0.2144}$ & \textbf{0.9028}$_{\pm0.2157}$ \\
\bottomrule
\end{tabular}

    }
        \label{tab:rq1_error_aggregated}
\end{table*}

\subsection{Research Questions and Associated Experiments} 
\label{sec:rq_and_exeriments}

\textbf{RQ 1: Can Neural Ensemblers outperform other common and competitive ensembling methods across data modalities?}

\textbf{Experimental Protocol.} To answer this question, we compare the neural ensembles in stacking and averaging mode to the baselines across all the meta-datasets.
We run every ensembling method three times for every dataset. In all the methods we use the validation data for fitting the ensemble, while we report the results on the test split. Specifically, we report the average across datasets of two metrics: negative log-likelihood (NLL) and classification error. For the tabular classification, we compute the ROC-AUC. As these metrics vary for every dataset, we normalize metrics by dividing them by the \textit{single-best} metric. Therefore, a method with a normalized metric below one is improving on top of using the single best base model. We report the standard deviation across the experiments per dataset and highlight in bold the best method.

\textbf{Results.} The results reported in Table \ref{tab:rq1_error_aggregated} and Table \ref{tab:rq1_nll_aggregated}  show that our proposed regularized neural networks \textbf{are competitive post-hoc ensemblers}. In general, we observe that the Neural Ensemblers variants obtain either the best (in bold) or second best (italic) performance across almost all meta-datasets and metrics. Noteworthily, the greedy approach is very competitive, especially for the \textit{FTC} and \textit{TR-Class} meta-datasets. This is coherent with previous work supporting greedy ensembling as a robust method for tabular data~\citep{erickson2020autogluon}. We hypothesize that dynamic ensembling contributes partially to the strong results for the Neural Ensemblers. However, the expressivity gained is not enough, because it can lead to overfitting. To understand this, we compare to Dynamic Ensemble Selection (DES) methods. Specifically, we use \textit{KNOP}, \textit{MetaDES}, and \textit{KNORAE}, and evaluate all methods in \textit{Scikit-learn Pipelines} meta-dataset, as we can easily access the fitted models.  We report the results of the test split in Figure \ref{fig:scikit_learn_pipelines}, where we distinguish among four types of models to facilitate the reading: \textit{Neural, DES, Stacking} and \textit{Constant}. We can see that Neural Ensemblers are the most competitive approaches, especially on \textit{stacking} mode. Additionally, we report the metrics on the validation split in Figure \ref{fig:nll_sk_val} (Appendix~\ref{sec:additional_results}), where we observe that some dynamic ensemble approaches such as \textit{Gradient Boosting (GBT)}, \textit{Random Forest (RF)} and \textit{KNORAE} exhibit overfitting, while Neural Enemblers are more robust.

\textbf{RQ 2: What is the impact of the DropOut regularization scheme?}

\textbf{Experimental Protocol.} To understand how much the base learners DropOut helps the Neural Ensemblers, we run an ablation by trying the following values for the DropOut rate $\delta \in \{0.0, 0.1, \dots, 0.9\}$. We compute the average NLL for three seeds per dataset and divide this value by the one obtained for $\delta=0.0$ in the same dataset. In this setup, we realize that a specific DropOut rate is improving over the default network without regularization if the normalized NLL is below 1.

\textbf{Results.} Our ablation study demonstrates that non-existing or high DropOut are detrimental to the Neural Ensembler performance in general. As shown in Figure \ref{fig:ablation_dropout}, this behavior is consistent in all datasets and both modes, but \textit{TR-Reg} metadataset on \textit{Stacking} mode. In general, we observe that \textbf{Neural Ensemblers obtain better performance when using base models' DropOut}. Furthermore, we show how the mean weight per model is related to the mean error of the models for different DropOut rates in Figure \ref{fig:weight_vs_error} in the Appendix \ref{sec:weight_vs_error}.

\begin{figure*}[t]
    \centering
    \includegraphics[width=0.8\linewidth]{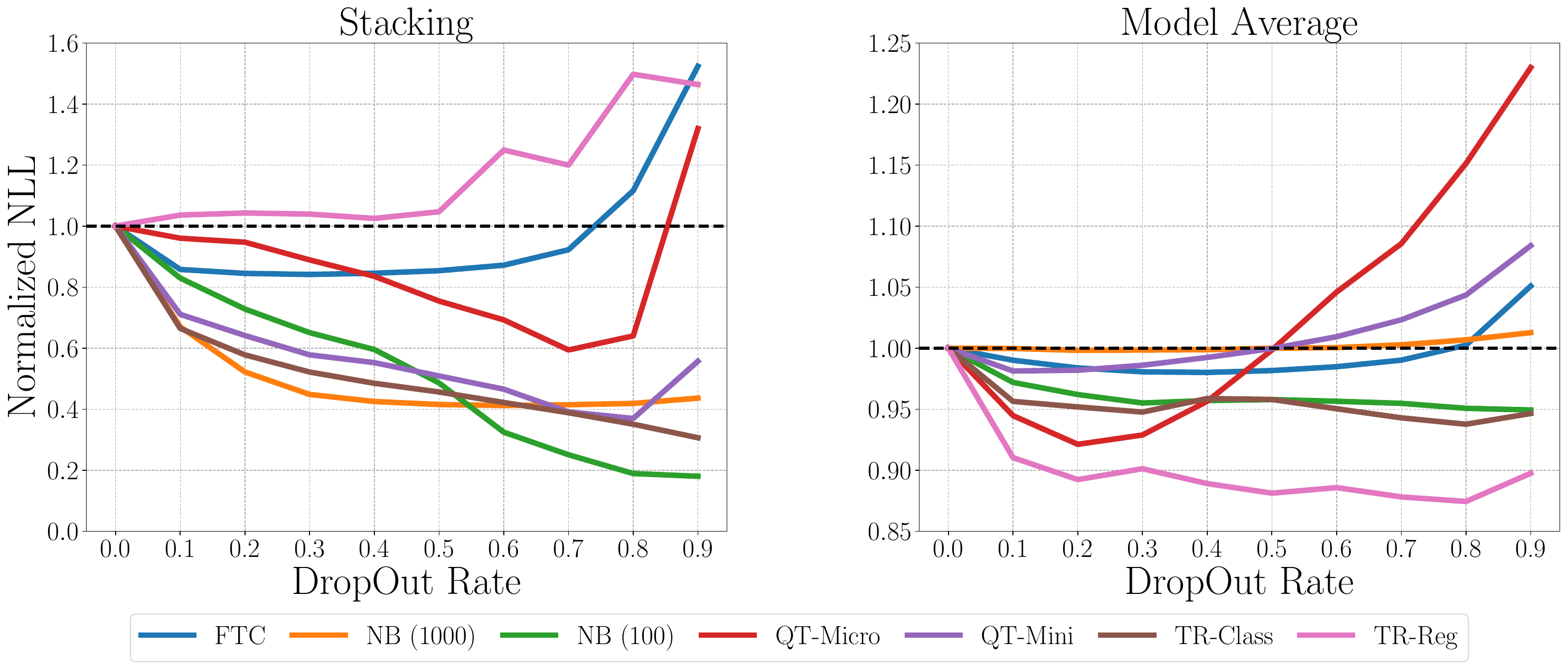}
    \caption{Ablation of the DropOut rate.}
    \label{fig:ablation_dropout}
    \vspace{-1em}
\end{figure*}

\textbf{RQ 3: How sensible are the Neural Ensemblers to its hyperparameters?}

\textbf{Experimental Protocol.} While our proposed Neural Ensembler (NE) uses MLPs with 4 layers and 32 neurons per layer by default, it is important to understand how sensitive the NE's performance is to changes in its hyperparameters. To this end, we conduct an extensive ablation study by varying the number of layers $L \in \{1, 2, 3, 4, 5\}$ and the number of neurons per layer (hidden dimension) $H \in \{8, 16, 32, 64, 128\}$ in the stacking mode of the NE. For each configuration, we compute the Negative Log Likelihood (NLL) on every task in the metadatasets and normalize these values by the performance obtained with the default hyperparameters ($L=4$, $H=32$). This normalization allows us to compare performance changes across different tasks and metadatasets, accounting for differences in metric scales. A normalized NLL value below one indicates improved performance compared to the default setting.


\textbf{Results.} Our findings indicate that there is no single hyperparameter configuration that is optimal across all datasets. However, our default configuration ($L=4$, $H=32$) strikes a balance, providing robust performance across diverse tasks without the need for extensive hyperparameter tuning, and can be effectively applied in various settings without the need for dataset-specific hyperparameter optimization.

\paragraph{Significance on Datasets with a lot of Classes}
Given the high performance between \textit{Greedy} and \textbf{NE-MA}, we wanted to understand when the second one would obtain strong significant results. We found that \textbf{NE-MA} is particularly well-performing in datasets with a large number of classes. Given Table \ref{tab:metadatasets}, we can see that four meta-datasets have a high ($>10$) number of classes. We selected these metadatasets (\textit{NB(100), NB(1000), QT-Micro, QT-Mini}), and plotted the significance compared to \textit{Greedy} and \textit{Random Search}. The results reported in Figure \ref{fig:cd_selected_datasets} demonstrate that our approach is significantly better than Greedy in these metadatasets.

\paragraph{Additional research questions} We further discuss interesting research questions in the Appendix due to the limited space. Some of these include: \textit{i)} how much the validation data size affects the Neural Ensembler (Appendix \ref{sec:validation_data_size}), \textit{ii)} what happens if we train the Neural Ensemblers on a merged split containing both training and validation data (Appendix \ref{sec:merging_training_and_validation_data}), and \textit{iii)} the disadvantages of Neural Ensemblers acting on the original input space (Appendix \ref{sec:original_input_space}).

\begin{figure}
    \centering
    \begin{subfigure}{0.40\textwidth}
        \centering
        \includegraphics[width=\textwidth]{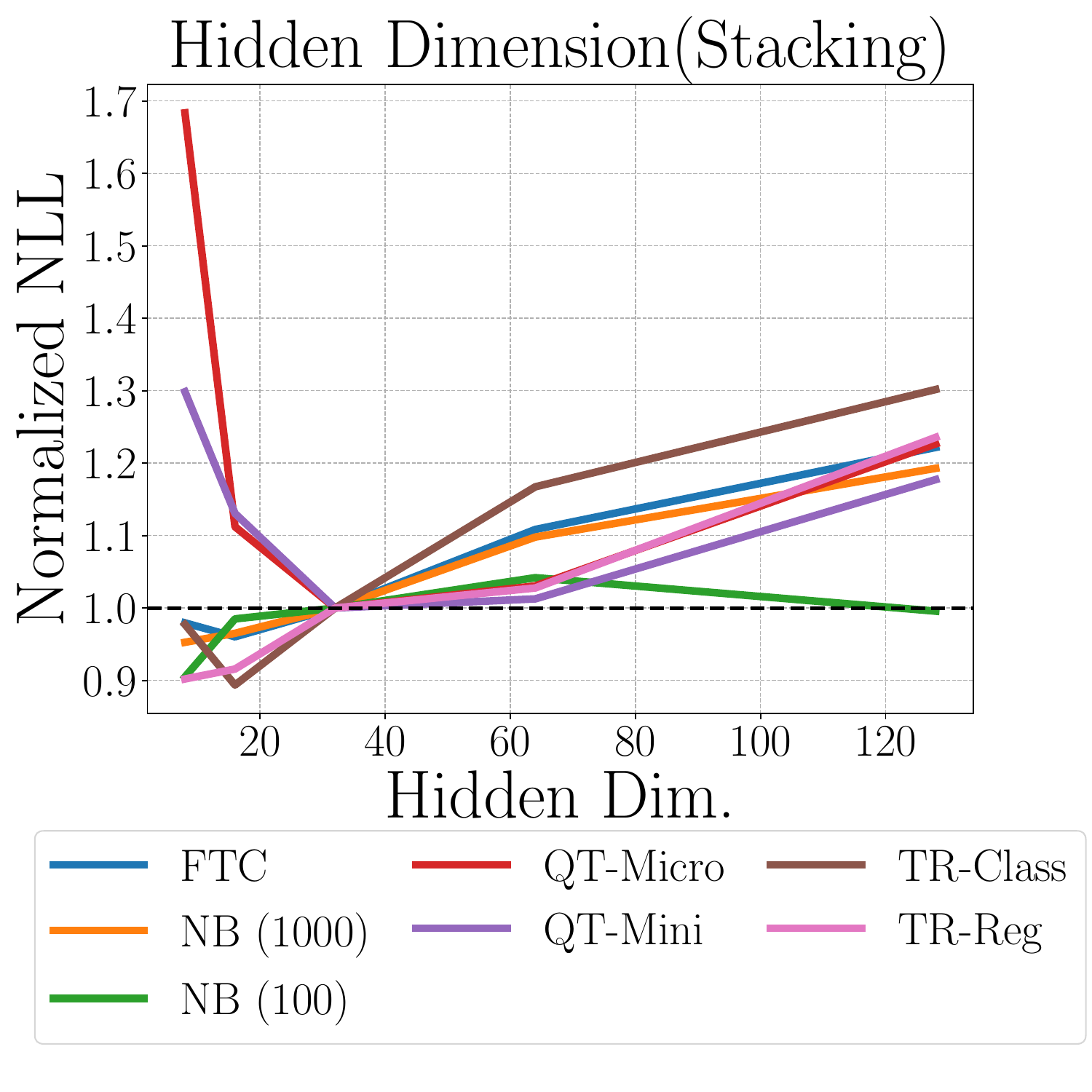}
        \caption{}
        \label{fig:ablation_hidden_dim}
    \end{subfigure}
    \vspace{1pt}
    \begin{subfigure}{0.40\textwidth}
        \centering
        \includegraphics[width=\textwidth]{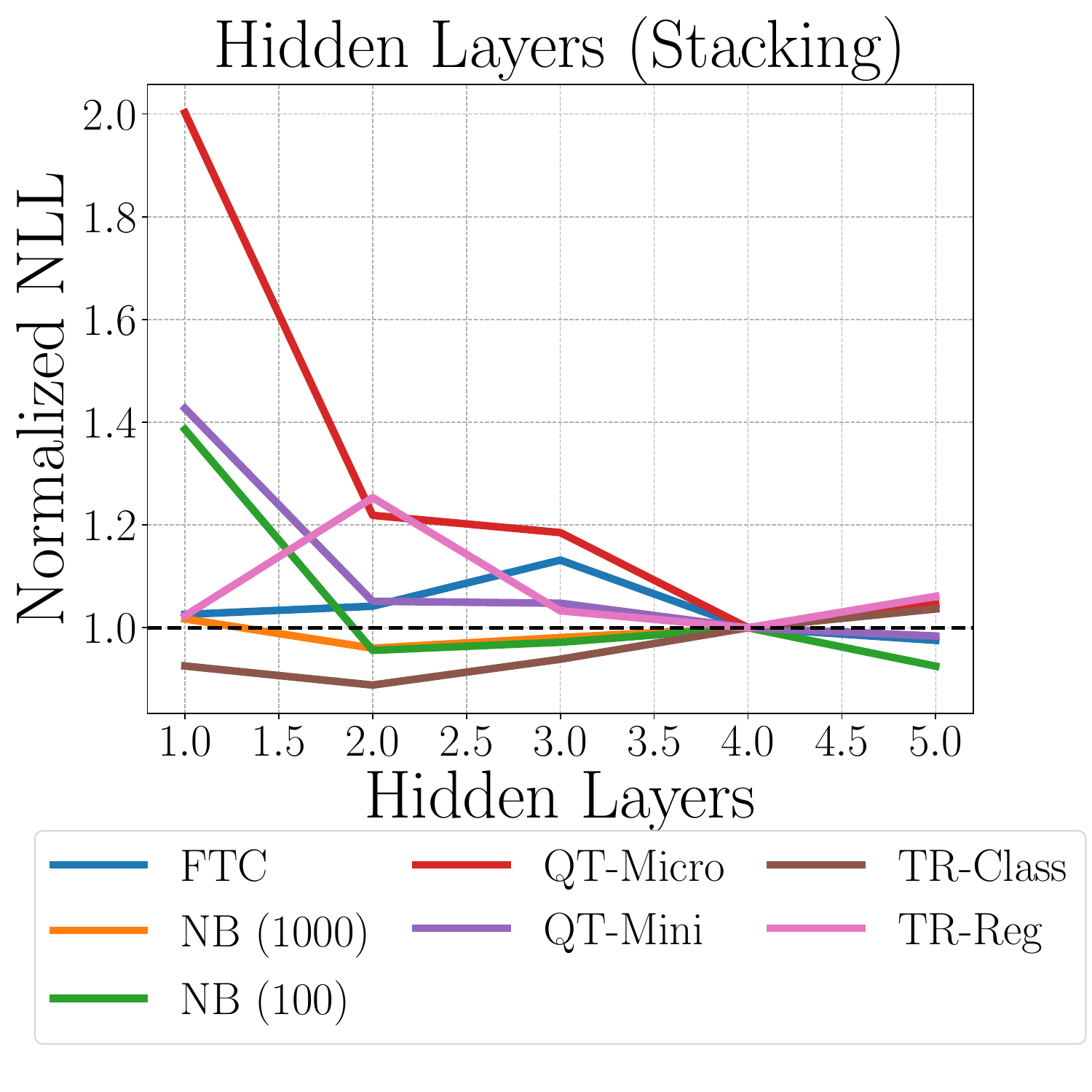}
        \caption{}
        \label{fig:ablation_num_layers}
    \end{subfigure}
    \label{fig:}
    \caption{Ablation study of Neural Ensembler hyperparameters: number of neurons per layers (a) and number of layers (b). Normalized NLL values below one indicate improved performance over the default setting ($L=4$, $H=32$).}
\end{figure}

\section{Related Work}
\label{sec:related_work}

\textbf{Ensembles for Tabular Data.}
For tabular data, ensembles are known to perform better than individual models \citep{sagi2018enssurvey,tabrepo2023}.
Therefore, ensembles are often used in real-world applications \citep{dong2022enssurvey}, 
to win competitions \citep{koren2009bellkor,automlgrandprix2024kaggle}, 
and by automated machine learning (AutoML) systems as a modeling strategy \citep{purucker2023cma,purucker2023q}.
Methods like Bagging \citep{breiman1996bagging} or Boosting \citep{freund1996experiments} are often used to boost the performance of individual models.
In contrast, post-hoc ensembling \citep{shen2022divbo,purucker2023assembled} 
aggregates the predictions of an arbitrary set of fitted base models. 
Post-hoc ensembles can be built by stacking \citep{wolpert1992stacked}
ensemble selection (a.k.a. pruning) \citep{caruana2004ensemble}
, or through a systematic search for an optimal ensemble \citep{levesque2016bayesian}.

\textbf{Ensembles for Deep Learning.} 
Ensembles of neural networks~\citep{hansen_ensembles, krogh1994neural, dietterich_ensemble_methods} have gained significant attention in deep learning research, both for their performance-boosting capabilities and their effectiveness in uncertainty estimation. Various strategies for building ensembles exist, with deep ensembles~\citep{lakshminarayanan2017simple} being the most popular one, which involves independently training multiple initializations of the same network. 
Extensive empirical studies~\citep{ovadia19, gustafsson2019evaluating} have shown that deep ensembles outperform other approaches for uncertainty estimation, such as Bayesian neural networks~\citep{pmlr-v37-blundell15, pmlr-v48-gal16, sgld11}. 

\textbf{Dynamic Ensemble Selection.} Our Neural Ensembler is highly related to dynamic ensemble selection.
Both dynamically aggregate the predictions of base models per instance~\citep{cavalin2013dynamic,ko2008dynamic}.
Traditional dynamic ensemble selection methods
aggregate the most competent base models by paring heuristics to measure competence with clustering, nearest-neighbor-based, or traditional tabular algorithms (like naive Bayes) as meta-models \citep{cruz2018dynamic,cruz2020deslib}.   
In contrast, we use an end-to-end trained neural network to select \emph{and weight} the base models per instance. 

\textbf{Mixture-of-Experts.} Our idea of generating ensemble base model weights is closely connected to the mixture-of-experts (MoE)~\citep{Jacobs1991AdaptiveMO,Jordan1993HierarchicalMO,shazeer2017}, where one network is trained with specialized sub-modules that are activated based on the input data. 
Alternatively, we could include a layer, aggregating predictions by encouraging diversity~\citep{zhang2020diversity}.
In contrast to these approaches, our Neural Ensemblers can ensemble any (black-box) model and are not restricted to gradient-based approaches. 


\section{Conclusions}
\label{sec:conclusion}


In this work, we tackled the challenge of post-hoc ensemble selection and the associated risk of overfitting on the validation set. We introduced the \emph{Neural Ensembler}, a neural network that dynamically assigns weights to base models on a per-instance basis. To reduce overfitting, we proposed a regularization technique that randomly drops base models during training, which we theoretically showed enhances ensemble diversity. Our empirical results demonstrated that Neural Ensemblers consistently form competitive ensembles across diverse data modalities, including tabular data (classification and regression), computer vision, and natural language processing. 

\newpage
\section*{Acknowledgments}

The authors gratefully acknowledge the scientific support and HPC resources provided by the Erlangen National High Performance Computing Center (NHR@FAU) of the Friedrich-Alexander-Universität Erlangen-Nürnberg (FAU) under the NHR project v101be. NHR funding is provided by federal and Bavarian state authorities. NHR@FAU hardware is partially funded by the German Research Foundation (DFG) – 440719683. 
This research was partially supported by the following sources: TAILOR, a project funded by EU Horizon 2020 research and innovation programme under GA No 952215; the Deutsche Forschungsgemeinschaft (DFG, German Research Foundation) under grant number 417962828 and 499552394 - SFB 1597; the European Research Council (ERC) Consolidator Grant “Deep Learning 2.0” (grant no. 101045765). Frank Hutter acknowledges financial support by the Hector Foundation. The authors acknowledge support from ELLIS and ELIZA. Funded by the European Union. Views and opinions expressed are however those of the author(s) only and do not necessarily reflect those of the European Union or the ERC. Neither the European Union nor the ERC can be held responsible for them.
\begin{center}\includegraphics[width=0.3\textwidth]{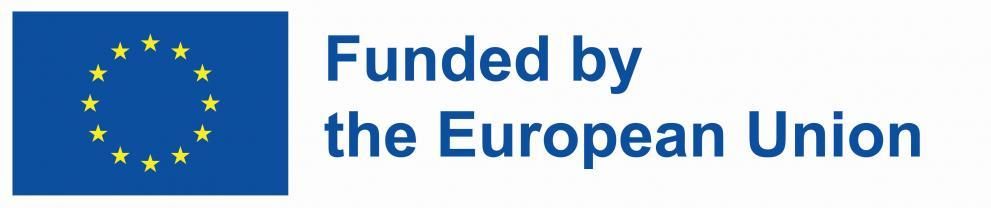}\end{center}

\bibliography{references,nes_bib,ftc_references}

\newpage
\section*{Submission Checklist}


\begin{enumerate}
\item For all authors\dots
  \begin{enumerate}
  \item Do the main claims made in the abstract and introduction accurately
    reflect the paper's contributions and scope?
\answerYes{}
  \item Did you describe the limitations of your work?
\answerYes{}
  \item Did you discuss any potential negative societal impacts of your work?
    \answerNA{}
  \item Did you read the ethics review guidelines and ensure that your paper
    conforms to them? (see \url{https://2022.automl.cc/ethics-accessibility/})
    \answerYes{}
  \end{enumerate}
\item If you ran experiments\dots
  \begin{enumerate}
  \item Did you use the same evaluation protocol for all methods being compared (e.g.,
    same benchmarks, data (sub)sets, available resources, etc.)?
    \answerYes{}   %
  \item Did you specify all the necessary details of your evaluation (e.g., data splits,
    pre-processing, search spaces, hyperparameter tuning details and results, etc.)?
    \answerYes{}   %
  \item Did you repeat your experiments (e.g., across multiple random seeds or
    splits) to account for the impact of randomness in your methods or data?
    \answerYes{}   %
  \item Did you report the uncertainty of your results (e.g., the standard error
    across random seeds or splits)?
    \answerYes{}  %
  \item Did you report the statistical significance of your results?
    \answerYes{See Appendix \ref{sec:critical_difference_diagram}.}
  \item Did you use enough repetitions, datasets, and/or benchmarks to support
    your claims?
    \answerYes{}    %
  \item Did you compare performance over time and describe how you selected the
    maximum runtime?
    \answerNo{}
  \item Did you include the total amount of compute and the type of resources
    used (e.g., type of \textsc{gpu}s, internal cluster, or cloud provider)?
    \answerNo{}
  \item Did you run ablation studies to assess the impact of different
    components of your approach?
    \answerYes{}
  \end{enumerate}
\item With respect to the code used to obtain your results\dots
  \begin{enumerate}
\item Did you include the code, data, and instructions needed to reproduce the
    main experimental results, including all dependencies (e.g.,
    \texttt{requirements.txt} with explicit versions), random seeds, an instructive
    \texttt{README} with installation instructions, and execution commands
    (either in the supplemental material or as a \textsc{url})?
    \answerYes{}
  \item Did you include a minimal example to replicate results on a small subset
    of the experiments or on toy data?
    \answerYes{}
  \item Did you ensure sufficient code quality and documentation so that someone else
    can execute and understand your code?
    \answerYes{}
  \item Did you include the raw results of running your experiments with the given
    code, data, and instructions?
    \answerNo{}
  \item Did you include the code, additional data, and instructions needed to generate
    the figures and tables in your paper based on the raw results?
    \answerNo{}
  \end{enumerate}
\item If you used existing assets (e.g., code, data, models)\dots
  \begin{enumerate}
  \item Did you cite the creators of used assets?
    \answerYes{}
  \item Did you discuss whether and how consent was obtained from people whose
    data you're using/curating if the license requires it?
    \answerNA{}
  \item Did you discuss whether the data you are using/curating contains
    personally identifiable information or offensive content?
    \answerNA{}
  \end{enumerate}
\item If you created/released new assets (e.g., code, data, models)\dots
  \begin{enumerate}
    \item Did you mention the license of the new assets (e.g., as part of your
    code submission)?
    \answerYes{}
    \item Did you include the new assets either in the supplemental material or as
    a \textsc{url} (to, e.g., GitHub or Hugging Face)?
    \answerYes{}
  \end{enumerate}
\item If you used crowdsourcing or conducted research with human subjects\dots
  \begin{enumerate}
  \item Did you include the full text of instructions given to participants and
    screenshots, if applicable?
    \answerNA{}
  \item Did you describe any potential participant risks, with links to
    institutional review board (\textsc{irb}) approvals, if applicable?
    \answerNA{}
  \item Did you include the estimated hourly wage paid to participants and the
    total amount spent on participant compensation?
    \answerNA{}
  \end{enumerate}
\item If you included theoretical results\dots
  \begin{enumerate}
  \item Did you state the full set of assumptions of all theoretical results?
    \answerYes{}
  \item Did you include complete proofs of all theoretical results?
    \answerYes{}
  \end{enumerate}
\end{enumerate}

\newpage
\appendix


\newpage
\section*{Appendix}

We want to summarize here all the appendix sections:
\begin{itemize}
    \item Section~\ref{sec:proofs} presents the proofs of propositions in the main paper.
    \item Section~\ref{sec:broader_impact_and_limitations} discusses the limitations of our proposed method and its broader impacts.
    \item Section \ref{sec:related_work_addendum} further details research similar to our work.
    \item Section~\ref{sec:details_metadatasets} explains the process of gathering and preparing the \textit{FTC} and \textit{Scikit-learn Pipelines} metadatasets used in our experiments.
    \item Section \ref{sec:additional_results} includes supplementary tables and figures, such as average ranks and detailed performance metrics, that support and expand upon the main experimental results reported in the paper.
    \item Section~\ref{sec:analysis_of_computation_cost} includes he computational cost associated with our method compared to baseline approaches, including runtime evaluations and discussions on efficiency.
    \item Section~\ref{sec:poc_with_overparm_models} includes the results of a proof-of-concept experiment using overparameterized base models (e.g., 10th-degree polynomials), demonstrating the effectiveness of our Neural Ensembler even when base models have high capacity.
    \item Section~\ref{sec:critical_difference_diagram} includes the Critical Difference  diagrams corresponding to our main results, illustrating the statistical significance of performance differences among methods and how to interpret these diagrams.
    \item Section~\ref{sec:validation_data_size} explores the impact of using different amounts of validation data to train the Neural Ensembler, assessing its sample efficiency and how performance scales with varying data sizes.
    \item Section~\ref{sec:merging_training_and_validation_data} explores the effect of merging the training and validation datasets on the performance of both base models and ensemblers.
    \item Section~\ref{sec:original_input_space} explores an alternative formulation of the Neural Ensembler that operates on the original input space rather than on the predictions of the base model, including experimental results and discussions of its effectiveness.

    \item In section \ref{sec:strong_group}, we study whether the performance of the models changes when using base models found by Bayesian Optimization or randomly.
\end{itemize}

\section{Proofs}
\label{sec:proofs}



\begin{proposition}[Diversity Lower Bound]\label{th:avoid_div_collapse}
As the correlation of the preferred model $\rho_{p_m,y} \rightarrow 1$, the diversity $\alpha \rightarrow 1-\gamma$, when using Base Models' DropOut with probability of retaining $\gamma$.
\end{proposition}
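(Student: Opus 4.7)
The plan is to start from the ambiguity-based diversity $\alpha = \mathbb{E}\left[\sum_m \theta_m (z_m - \bar z)^2\right]$ of Definition~\ref{th:div_collapse} and to track how it behaves under two simultaneous limits: the concentration of the learned weights $\theta$ onto the preferred model $z_p$ as $\rho_{z_p,y}\to 1$, and the expectation over the Bernoulli dropout mask $r$ that appears inside $\bar z = \sum_m r_m \theta_m z_m$.

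First I would establish that in the limit $\rho_{z_p,y}\to 1$, the loss-minimizing ensembler places all of its mass on the preferred base model, i.e.\ $\theta_p\to 1$ and $\theta_m\to 0$ for every $m\neq p$. Under Definition~\ref{def:preferred} the base predictions are mutually uncorrelated and (without loss of generality) standardized, so for a quadratic surrogate loss the optimal linear combination is proportional to the vector of correlations with $y$; as $\rho_{z_p,y}$ approaches its maximum value of $1$ while the other correlations remain strictly smaller, the dropout-renormalized softmax weights of Equation~\ref{eq:weight_mask_ma} can only stay optimal by converging to the one-hot vector on the preferred model. I would make this rigorous by invoking the monotonicity of the preference ordering in Definition~\ref{def:preferred} together with a dominated-convergence argument that lets the $\theta$-limit pass inside the outer expectation.

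With concentration in hand, the ambiguity collapses to $\alpha \to \mathbb{E}[(z_p - \bar z)^2]$, and substituting $\bar z \to r_p z_p$ gives $\alpha \to \mathbb{E}[z_p^2 (1-r_p)^2]$. Because $r_p$ is independent of $z_p$ and $(1-r_p)^2 = 1-r_p$ for $r_p\in\{0,1\}$, the expectation factors as $\mathbb{E}[z_p^2]\,\mathbb{E}[1-r_p] = 1\cdot(1-\gamma) = 1-\gamma$, where the factor $\mathbb{E}[z_p^2]=1$ uses the standardization. This reproduces the auxiliary identity $\mathbb{V}(r\cdot z_m) = \gamma$ cited in the sketch and yields the claimed limit $\alpha\to 1-\gamma$.

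The main obstacle I expect is the first step: showing that the dropout-renormalized softmax of Equation~\ref{eq:weight_mask_ma} actually realizes the one-hot limit when $\rho_{z_p,y}\to 1$, rather than merely approaching it in a loose sense. The softmax never attains $\theta_p = 1$ for finite logits, so the argument must be phrased as a limit and must commute with both the expectation over the data and the expectation over the mask $r$. A careful application of dominated convergence, using the boundedness of $r_m\in\{0,1\}$ and of standardized $z_m$ on compact support (or a uniform integrability argument in the unbounded case), should suffice to exchange the limit and the expectations and complete the proof.
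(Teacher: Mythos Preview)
Your argument is correct and reaches the same limit, but it is organized differently from the paper's own proof. The paper does not first argue that $\theta\to e_p$ and then evaluate the single surviving term; instead it keeps the full sum, computes $\mathbb{V}(r_m z_m)=\gamma$ and hence $\mathbb{V}(\bar z)=\gamma\sum_m\theta_m^2$ for arbitrary weights, plugs this into a general ambiguity identity (the paper's Equation~\ref{proof2:l7}, inherited from the no-dropout proposition) to obtain $\alpha = 1-\gamma\sum_m\theta_m^2$, identifies $\theta_m$ with $\rho_{z_m,y}$, and finally uses the orthonormality of the base predictions to force $\sum_m\rho_{z_m,y}^2\to 1$ as $\rho_{z_p,y}\to 1$. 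Your route---concentrate first, then evaluate $\mathbb{E}[z_p^2(1-r_p)^2]=(1-\gamma)$ directly via independence and the idempotence of $1-r_p$---is more elementary and self-contained, and it sidesteps the somewhat implicit step $\theta_m=\rho_{z_m,y}$ in the paper. What the paper's route buys in exchange is the intermediate expression $\alpha=1-\gamma\sum_m\theta_m^2$, which holds for any weight vector and makes visible how the diversity floor depends continuously on weight concentration, not just at the one-hot limit. Your acknowledged obstacle (justifying the interchange of limits for the softmax concentration) is real, but the paper's proof operates at the same informal level on this point, so your treatment is no less rigorous than the original.
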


\begin{proof}
Without loss of generality, we assume that the random variables are standardized. We follow a similar procedure as for Proposition 2, by considering $\bar z = \sum_m r_m \ \theta_m z_m$. We demonstrate that $\mathbb{V}(r \cdot z_m) =  \gamma $, given that $r \sim \mathrm{Bernoulli}(\gamma)$.
\begin{align}
    \mathbb{V}(r_m \cdot z_m) &=  \mathbb{V}(r_m)\cdot  \mathbb{V}(z_m) + \mathbb{V}(z_m) \cdot \mathbb{E}(r_m)^2 +  \mathbb{V}(r_m)\cdot \mathbb{E}(z_m)^2 \\
    \mathbb{V}(r_m \cdot z_m) &=  \mathbb{V}(r_m) + \mathbb{E}(r_m) ^2\\
    \mathbb{V}(r_m \cdot z_m) &=  \gamma(1-\gamma) +\gamma^2  \\
    \mathbb{V}(r_m \cdot z_m) &=  \gamma. 
\end{align}
Then, we evaluate the variance of the ensemble using DropOut $\mathbb{V}(\bar z)$:
\begin{align}
    \mathbb{V}(\bar z) =  \mathbb{V}{\left(\sum_m r_m\cdot \theta_m \cdot z_m \right)} \\
    \mathbb{V}(\bar z) =  \sum_m \mathbb{V}{\left(r_m\cdot \theta_m \cdot z_m \right)} \\
    \mathbb{V}(\bar z) =  \sum_m  \theta^2_m \mathbb{V}{\left(r_m\cdot z_m \right)} \\
    \mathbb{V}(\bar z) =  \gamma \sum_m  \theta^2_m. \label{proof3:e2_l4}
\end{align}
Applying Equation \ref{proof3:e2_l4} into Equation \ref{proof2:l7}, we obtain:
\begin{align}
  \lim_{\rho_{z_p,y}\to1}  \alpha &= \lim_{\rho_{z_p,y}\to1} \mathbb{E} \left[\sum_m \gamma_m \cdot \theta_m (z_m- \bar z)^2  \right] \\
    &=\lim_{\rho_{z_p,y}\to1}  \sum_m \theta_m  \left( 1-   \gamma \sum_m  \theta^2_m  \right)  \label{proof3:e3_l1} \\ 
   &= \lim_{\rho_{z_p,y}\to1} \left( 1- \gamma\sum_{m^{\prime}} \rho_{z_{m^{\prime}},y}^2 \right) \label{proof3:e3_l2a} \\
 &=  1- \gamma \cdot \lim_{\rho_{z_p,y}\to1} \left(\sum_{m^{\prime}} \rho_{z_{m^{\prime}},y}^2 \right) \label{proof3:e3_l2b} \\
 \lim_{\rho_{z_p,y}\to1}  \alpha &= 1-\gamma.
\end{align}

\end{proof}









\section{Limitations, Broader Impact and Future Work} 
\label{sec:broader_impact_and_limitations}
While our proposed method offers several advantages for post-hoc ensemble selection, it is important to recognize its limitations. Unlike simpler ensembling heuristics, our approach requires tuning multiple training and architectural hyperparameters. Although we employed a fixed set of hyperparameters across all modalities and tasks in our experiments, this robustness may not generalize to all new tasks. In such cases, hyperparameter optimization may be necessary to achieve optimal performance. However, this could also enhance the results presented in this paper. Additionally, some bayesian approaches~\citep{mcewen2021machine} could further increase the robustness to the size of the validation data set.

Our approach is highly versatile and can be seamlessly integrated into a wide variety of ensemble-based learning systems, significantly enhancing their predictive capabilities. Because our method is agnostic to both the task and modality, we do not expect any inherent negative societal impacts. Instead, its effects will largely depend on how it is applied within different contexts and domains, making its societal implications contingent on the specific use case.
In the future, we aim to explore in-context learning~\citep{brown20}, where a pretrained Neural Ensembler could generate base model weights at test time, using their predictions as contextual input. We discuss broader impact and limitations in Appendix \ref{sec:broader_impact_and_limitations}.

\section{Space complexity of Neural Ensemblers}
\label{app:space_complexity}
In this section, we discuss the memory advantage of the parameter-sharing mechanism by comparing it with a version without parameter-sharing. Take an instance $x$, such as an image or a text, and consider a set of predictions from the $M$ base models for C classes $z=\{z^{(1)}_1,...,z^{(1)}_M,...,z^{(C)}_M\}$. If we vectorize these predictions, then $z \in \mathbb{R}^{C \cdot M}$. Using a two-layer neural network with hidden size $H$ and without parameter sharing (stacking mode) will demand $C\cdot M \cdot H$ parameters in the first layer and $H\cdot C$ parameters in the last layer. Hence, the space complexity is $\mathcal{O}(C\cdot M + M)$. On the other hand, our parameter-sharing neural ensembler consist in a neural network that receives as input $z^{(c)}=\{z^{(c)}_1,...,z^{(c)}_M\}$, i.e. only the predictions for class $c$. This means that the first and second layer would have $M \cdot H$ and $H$ neurons respectively, with a space complexity $\mathcal{O}(M )$.

\section{Related Work Addendum}
\label{sec:related_work_addendum}
\textbf{Ensemble Search via Bayesian Optimization.}
Ensembles of models with different hyperparameters can be built using Bayesian optimization by iteratively swapping a model inside an ensemble with another one that maximizes the expected improvement~\citep{levesque2016bayesian}. 
DivBO~\citep{shen2022divbo} and subsequent work~\citep{pranav2024cash} combine the ensemble's performance and diversity as a measure for expected improvement. 
Besides Bayesian Optimization, an evolutionary search can find robust ensembles of deep learning models~\citep{zaidi2021neural}. 
Although these approaches find optimal ensembles, they can overfit the validation data used for fitting if run for many iterations.

\section{Baselines Details}
\label{app:baselines}
We describe the setup for the baselines evaluation. Most of the baselines, except \textit{random}, use the validation set for training the ensembling model or choosing the base models. For all the models that expected a fixed set of $N$ base models, we used $N=50$, following previous work insights~\citep{auto-sklearn}. Some metadatasets (\textit{Nasbench201-Mini,QuickTune-Mini} incurred in a large memory cost that made a few baselines fail in some datasets. Since this only happened in less than $10\%$ of the datasets, we imputed the metrics for this specific cases, by using the metric of \textit{single-best} as the imputation value. This mimics a real-world application scenario where the ensembler fall backs to just a \textit{single-best} approach, which uses few memory for ensembling.

\begin{itemize}
    \item \textbf{Single best} selects the best model according to the validation metric;
    \item \textbf{Random} chooses 50 random models to ensemble; 
    \item \textbf{Top-N} ensembles the best $N\in \{5,50\}$ models according to the validation metrics;
    \item \textbf{Greedy}~\citep{caruana2004ensemble} creates an ensemble with $N=50$ models by iterative selecting the one that improves the metric as proposed by previous work; 
    \item \textbf{Quick} builds the ensemble with 50 models by adding model subsequently only if they strictly improve the metric;
    \item \textbf{CMAES}~\citep{purucker2023cma} uses an evolutive strategy with a post-processing method for ensembling. We used the implementation in the \textit{Phem} library~\citep{phem_repo};
    \item \textbf{Model Average} computed the weighted sum of the predictions as in Equation \ref{eq:post-hoc-bma}. We learn the weight using gradient-descent for optimizing the validation metric;
    \item \textbf{DivBO}~\citep{shen2022divbo}: uses Bayesian Optimization for querying the best ensemble, by applying an acquisition function that accounts for both the diversity and performance. We implemented the method following the setup described by the authors in their paper;
    \item \textbf{Ensemble Optimization}~\citep{levesque2016bayesian}: uses Bayesian Optimization for selecting iteratively the model to replace another model inside the ensemble. We implemented the method following the default setup described by the authors;
    \item \textbf{Machine Learning Models as Stackers}: We used the default configurations provided by Sckit-learn~\citep{scikit-learn} for these stackers. The input to the models is the concatenation of all the base models' predictions. We concatenated the probabilistic predictions from all the classes in the classification tasks. This sometimes produced a large dimensional input space, and a large memory load. The model was trained on the validation set. Specifically, we apply popular models: \textit{SVM}, \textit{Random Forest}, \textit{Gradient Boosting}, \textit{Linear Regression}, \textit{XGBoost}, \textit{LightGBM};
    
    \item \textbf{Akaike Weighting or Pseudo Model Averaging}~\citep{wagenmakers2004aic} computes weights using the relative performance of every model. Assuming that the lowest metric from the pool of base models is $\ell_{\text{min}} \in \{ \ell_1, ..., \ell_M \}$, then the weight is computed using $\Delta_i=\ell_i-\ell_{\text{min}}$ as:
    \begin{equation}
        w_i = \frac{\exp^{-\frac{1}{2}\Delta_i}}{\sum_m \exp^{-\frac{1}{2}\Delta_m}}
    \end{equation}
    \item \textbf{Dynamic Ensemble Search Baselines} learn a model that ensembles different models per instance. We used baselines from the \textit{DESlib} library~\citep{cruz2020deslib} such as \textbf{KNOP}~\citep{cavalin2013dynamic}, \textbf{KNORAE}~\citep{ko2008dynamic} and \textbf{MetaDES}~\citep{cruz2015meta}. 
    
\end{itemize}

\section{Additional Results Related to Research Question \textit{\#1}}
\label{sec:additional_results}
In this Section we report additional results from our experiments:

\begin{itemize}
    \item Average Ranking of baselines for the Negative Log-likelihood (Table \ref{tab:rq1_nll_ranked}) and Classification Errors (Table~\ref{tab:rq1_error_ranked}).
    \item Average NLL in \textit{Scikit-learn Pipelines} metadataset (Figure~\ref{fig:nll_sk_val}).

\end{itemize}

\begin{table}[t]
    \caption{Average Ranked NLL}
    \centering
    \resizebox{\textwidth}{!}{%
    \begin{tabular}{llllllll}
\toprule
 & \bfseries FTC & \bfseries NB-Micro & \bfseries NB-Mini & \bfseries QT-Micro & \bfseries QT-Mini & \bfseries TR-Class & \bfseries TR-Reg \\
\midrule
\bfseries Single-Best & 17.6667$_{\pm0.9832}$ & 16.6667$_{\pm2.3094}$ & 15.3333$_{\pm2.7538}$ & 7.0167$_{\pm2.5103}$ & 7.5333$_{\pm2.5221}$ & 8.6325$_{\pm4.6487}$ & 9.0294$_{\pm4.5705}$ \\
\bfseries Random & 20.0000$_{\pm0.0000}$ & 9.3333$_{\pm5.6862}$ & 12.3333$_{\pm2.3094}$ & 19.0333$_{\pm1.3060}$ & 19.0167$_{\pm0.9143}$ & 14.0301$_{\pm5.0166}$ & 16.4118$_{\pm2.9803}$ \\
\bfseries Top5 & 13.6667$_{\pm3.2660}$ & 11.6667$_{\pm1.5275}$ & 8.3333$_{\pm1.5275}$ & 6.9000$_{\pm1.9360}$ & 8.5667$_{\pm2.3146}$ & 7.0542$_{\pm4.1083}$ & 8.4118$_{\pm5.1455}$ \\
\bfseries Top50 & 13.3333$_{\pm1.7512}$ & 5.3333$_{\pm1.5275}$ & 5.0000$_{\pm1.0000}$ & 13.9667$_{\pm2.0212}$ & 13.9500$_{\pm2.0776}$ & 7.6747$_{\pm3.6729}$ & 8.1176$_{\pm4.4984}$ \\
\bfseries Quick & 7.6667$_{\pm1.9664}$ & 7.3333$_{\pm3.7859}$ & \underline{\textit{4.3333}}$_{\pm3.5119}$ & 4.7333$_{\pm2.0331}$ & 6.1333$_{\pm2.7131}$ & 6.9036$_{\pm3.4273}$ & \underline{\textit{7.0000}}$_{\pm4.1231}$ \\
\bfseries Greedy & 4.5000$_{\pm1.3784}$ & \underline{\textit{4.3333}}$_{\pm3.2146}$ & 12.1667$_{\pm2.7538}$ & \underline{\textit{3.5167}}$_{\pm1.7786}$ & \underline{\textit{4.7000}}$_{\pm2.5278}$ & \underline{\textit{6.6506}}$_{\pm3.6138}$ & 7.8824$_{\pm4.1515}$ \\
\bfseries CMAES & 15.3333$_{\pm5.5737}$ & 16.6667$_{\pm2.3094}$ & 15.3333$_{\pm2.7538}$ & 15.8667$_{\pm4.0809}$ & 17.4000$_{\pm2.1066}$ & 11.4578$_{\pm4.7094}$ & 7.3529$_{\pm2.9356}$ \\
\bfseries Random Forest & 8.6667$_{\pm2.7325}$ & 15.6667$_{\pm3.0551}$ & 17.0000$_{\pm1.7321}$ & 15.0000$_{\pm1.5702}$ & 11.1500$_{\pm4.9674}$ & 13.3614$_{\pm6.1159}$ & 9.6471$_{\pm5.5895}$ \\
\bfseries Gradient Boosting & \underline{\textit{4.0000}}$_{\pm5.4037}$ & 17.3333$_{\pm3.0551}$ & 15.8333$_{\pm3.6171}$ & 11.9667$_{\pm6.8857}$ & 12.3000$_{\pm6.8778}$ & 13.2771$_{\pm5.8400}$ & 9.6471$_{\pm5.4076}$ \\
\bfseries SVM & 13.3333$_{\pm1.6330}$ & 11.6667$_{\pm8.5049}$ & 13.0000$_{\pm8.6603}$ & 17.5333$_{\pm1.5533}$ & 14.4500$_{\pm6.5828}$ & 12.4639$_{\pm5.0827}$ & 14.5882$_{\pm6.9826}$ \\
\bfseries Linear & 9.3333$_{\pm2.1602}$ & 13.0000$_{\pm2.6458}$ & 13.0000$_{\pm3.6056}$ & 7.0000$_{\pm3.1073}$ & 6.7833$_{\pm3.4433}$ & 11.8434$_{\pm6.0897}$ & 17.8824$_{\pm4.4565}$ \\
\bfseries XGBoost & 13.1667$_{\pm5.4559}$ & 12.6667$_{\pm4.1633}$ & 14.0000$_{\pm6.2450}$ & 14.1333$_{\pm2.1413}$ & 12.2500$_{\pm3.0562}$ & 16.9819$_{\pm3.4698}$ & 16.1765$_{\pm3.7953}$ \\
\bfseries CatBoost & \textbf{3.5000}$_{\pm3.3317}$ & 15.0000$_{\pm2.6458}$ & 15.6667$_{\pm2.3094}$ & 11.7000$_{\pm1.8919}$ & 12.9667$_{\pm3.5548}$ & 11.3012$_{\pm4.8583}$ & 10.4118$_{\pm4.8484}$ \\
\bfseries LightGBM & 10.6667$_{\pm6.1860}$ & 17.0000$_{\pm5.1962}$ & 17.0000$_{\pm5.1962}$ & 13.4333$_{\pm2.8093}$ & 15.1333$_{\pm2.8945}$ & 17.6867$_{\pm3.7900}$ & 13.2353$_{\pm5.2978}$ \\
\bfseries Akaike & 13.5000$_{\pm3.3166}$ & 5.0000$_{\pm1.0000}$ & 4.6667$_{\pm0.5774}$ & 12.5333$_{\pm2.0083}$ & 12.6833$_{\pm1.9761}$ & 7.9639$_{\pm4.0166}$ & 7.3824$_{\pm4.6788}$ \\
\bfseries MA & 14.1667$_{\pm3.8687}$ & 8.0000$_{\pm1.7321}$ & 5.0000$_{\pm3.4641}$ & 16.7667$_{\pm1.4003}$ & 16.1000$_{\pm2.1270}$ & 10.3916$_{\pm4.9226}$ & 9.9412$_{\pm6.5141}$ \\
\bfseries DivBO & 8.5000$_{\pm4.5497}$ & 11.0000$_{\pm6.2450}$ & \underline{\textit{4.3333}}$_{\pm4.0415}$ & 4.9500$_{\pm2.5876}$ & 5.2000$_{\pm2.7687}$ & 7.4337$_{\pm3.5585}$ & 7.8824$_{\pm3.6552}$ \\
\bfseries EO & 7.1667$_{\pm4.5789}$ & 6.0000$_{\pm2.0000}$ & 9.0000$_{\pm2.6458}$ & 5.8833$_{\pm2.2194}$ & 5.6167$_{\pm2.5484}$ & 7.0241$_{\pm3.4144}$ & 8.2353$_{\pm3.4192}$ \\ \midrule
\bfseries NE-Stack & 7.6667$_{\pm5.9889}$ & \textbf{1.0000}$_{\pm0.0000}$ & \textbf{4.0000}$_{\pm5.1962}$ & \textbf{3.3000}$_{\pm3.6874}$ & \textbf{2.4000}$_{\pm2.3282}$ & 11.7470$_{\pm7.1326}$ & 15.2353$_{\pm4.6169}$ \\
\bfseries NE-MA & 4.1667$_{\pm3.3116}$ & 5.3333$_{\pm3.5119}$ & 4.6667$_{\pm2.3094}$ & 4.7667$_{\pm2.1445}$ & 5.6667$_{\pm2.1389}$ & \textbf{6.1205}$_{\pm3.8553}$ & \textbf{5.5294}$_{\pm2.7184}$ \\
\bottomrule
\end{tabular}

    }
    \label{tab:rq1_nll_ranked}
\end{table}

\begin{table}[t]
    \caption{Average Ranked Error}
    \centering
    \resizebox{\textwidth}{!}{%
    \begin{tabular}{llllllll}
\toprule
 & \bfseries FTC & \bfseries NB-Micro & \bfseries NB-Mini & \bfseries QT-Micro & \bfseries QT-Mini & \bfseries TR-Class & \bfseries TR-Class (AUC) \\
\midrule
\bfseries Single-Best & 14.6667$_{\pm3.4593}$ & 18.3333$_{\pm0.2887}$ & 17.1667$_{\pm1.5275}$ & 11.0167$_{\pm4.8128}$ & 9.7333$_{\pm4.0231}$ & 10.7048$_{\pm5.2452}$ & 11.2840$_{\pm4.3236}$ \\
\bfseries Random & 19.3333$_{\pm1.6330}$ & 14.6667$_{\pm1.1547}$ & 14.8333$_{\pm0.7638}$ & 19.8667$_{\pm0.3198}$ & 19.6667$_{\pm0.6065}$ & 13.8193$_{\pm6.5698}$ & 15.0494$_{\pm5.8130}$ \\
\bfseries Top5 & 6.5833$_{\pm4.4768}$ & 13.3333$_{\pm2.0817}$ & 10.6667$_{\pm3.5119}$ & \underline{\textit{4.8833}}$_{\pm3.3725}$ & 4.8667$_{\pm3.3859}$ & 10.5181$_{\pm4.6751}$ & 8.5741$_{\pm5.0013}$ \\
\bfseries Top50 & 16.5000$_{\pm3.3317}$ & 4.3333$_{\pm1.5275}$ & \textbf{2.3333}$_{\pm1.5275}$ & 9.6333$_{\pm4.9444}$ & 10.9000$_{\pm3.2094}$ & 9.6988$_{\pm5.1234}$ & 8.6605$_{\pm4.7189}$ \\
\bfseries Quick & 6.8333$_{\pm4.8442}$ & 8.6667$_{\pm1.1547}$ & 7.3333$_{\pm2.3094}$ & 6.1000$_{\pm4.0480}$ & \underline{\textit{3.6333}}$_{\pm2.4066}$ & 10.0301$_{\pm4.0412}$ & \textbf{7.9136}$_{\pm4.9319}$ \\
\bfseries Greedy & 5.5833$_{\pm4.3865}$ & 13.0000$_{\pm3.4641}$ & 11.3333$_{\pm1.5275}$ & 10.1167$_{\pm4.3543}$ & 7.5000$_{\pm3.7093}$ & 9.7108$_{\pm4.1069}$ & 8.1975$_{\pm4.5830}$ \\
\bfseries CMAES & \underline{\textit{5.0833}}$_{\pm2.3752}$ & 6.0000$_{\pm3.6056}$ & 7.3333$_{\pm2.0817}$ & 9.6333$_{\pm3.9105}$ & 6.6167$_{\pm2.6152}$ & 10.5060$_{\pm4.8341}$ & 11.6173$_{\pm6.0220}$ \\
\bfseries Random Forest & 7.1667$_{\pm4.6224}$ & 9.0000$_{\pm5.1962}$ & 11.1667$_{\pm5.5752}$ & 13.8000$_{\pm4.5837}$ & 14.3667$_{\pm2.7697}$ & 10.5542$_{\pm4.6302}$ & 10.6667$_{\pm5.9119}$ \\
\bfseries Gradient Boosting & 12.5000$_{\pm4.4159}$ & 19.0000$_{\pm0.8660}$ & 17.6667$_{\pm2.2546}$ & 9.6667$_{\pm5.1551}$ & 14.2333$_{\pm4.1351}$ & 11.0181$_{\pm5.9671}$ & 11.0185$_{\pm6.6635}$ \\
\bfseries SVM & \textbf{3.8333}$_{\pm3.3714}$ & 9.0000$_{\pm1.7321}$ & 10.5000$_{\pm5.8949}$ & 12.0000$_{\pm6.0955}$ & 15.6500$_{\pm3.5261}$ & \textbf{8.3855}$_{\pm4.7788}$ & 13.0185$_{\pm7.1144}$ \\
\bfseries Linear & 8.8333$_{\pm3.1252}$ & 13.6667$_{\pm5.0332}$ & 14.1667$_{\pm3.6856}$ & 6.2500$_{\pm3.5834}$ & 7.9333$_{\pm3.2872}$ & 9.3916$_{\pm6.3745}$ & 10.9259$_{\pm6.7519}$ \\
\bfseries XGBoost & 11.4167$_{\pm5.5355}$ & 14.6667$_{\pm2.3094}$ & 13.0000$_{\pm1.7321}$ & 17.0000$_{\pm3.2350}$ & 16.1333$_{\pm3.0932}$ & 12.5120$_{\pm5.8304}$ & 11.6111$_{\pm4.4651}$ \\
\bfseries CatBoost & 14.6667$_{\pm3.3267}$ & 17.0000$_{\pm0.0000}$ & 17.5000$_{\pm1.5000}$ & 12.0500$_{\pm3.8155}$ & 14.5167$_{\pm3.2310}$ & 11.9699$_{\pm5.5179}$ & 11.4506$_{\pm4.2834}$ \\
\bfseries LightGBM & 10.5833$_{\pm5.0241}$ & 17.0000$_{\pm5.1962}$ & 17.0000$_{\pm5.1962}$ & 15.7833$_{\pm3.4433}$ & 17.5167$_{\pm2.4193}$ & 11.7470$_{\pm6.1947}$ & 11.4815$_{\pm4.6094}$ \\
\bfseries Akaike & 13.3333$_{\pm6.6608}$ & \textbf{2.6667}$_{\pm2.0817}$ & 3.3333$_{\pm0.5774}$ & 9.1500$_{\pm4.8551}$ & 10.1500$_{\pm3.5016}$ & 9.9639$_{\pm5.0677}$ & 11.7654$_{\pm4.2925}$ \\
\bfseries MA & 16.0000$_{\pm3.4641}$ & 5.3333$_{\pm1.1547}$ & \underline{\textit{2.6667}}$_{\pm2.0817}$ & 11.6833$_{\pm4.3519}$ & 12.7833$_{\pm3.7132}$ & 10.1024$_{\pm5.3487}$ & 10.4815$_{\pm4.7994}$ \\
\bfseries DivBO & 11.2500$_{\pm4.8760}$ & 12.3333$_{\pm1.1547}$ & 15.6667$_{\pm2.7538}$ & 11.2167$_{\pm3.8117}$ & 7.9000$_{\pm3.8336}$ & 9.8193$_{\pm4.5830}$ & 10.1667$_{\pm4.6657}$ \\
\bfseries EO & 12.6667$_{\pm5.7504}$ & 5.3333$_{\pm3.7859}$ & 4.6667$_{\pm2.5166}$ & 10.5167$_{\pm3.4851}$ & 9.0667$_{\pm3.0618}$ & 9.9036$_{\pm5.3905}$ & 8.8889$_{\pm4.0226}$ \\ \midrule
\bfseries NE-Stack & 5.5000$_{\pm3.3317}$ & 4.0000$_{\pm1.7321}$ & 8.3333$_{\pm0.5774}$ & \textbf{3.7500}$_{\pm2.7189}$ & \textbf{2.7833}$_{\pm2.2995}$ & 10.8313$_{\pm6.6403}$ & 9.3148$_{\pm7.0986}$ \\
\bfseries NE-MA & 7.6667$_{\pm2.2509}$ & \textbf{2.6667}$_{\pm1.5275}$ & 3.3333$_{\pm2.5166}$ & 5.8833$_{\pm4.3225}$ & 4.0500$_{\pm2.4046}$ & \underline{\textit{8.8133}}$_{\pm4.9137}$ & \textbf{7.9136}$_{\pm4.8456}$ \\
\bottomrule
\end{tabular}

    }
        \label{tab:rq1_error_ranked}
\end{table}



\begin{figure}[ht]
    \centering
    \includegraphics[width=0.5\linewidth]{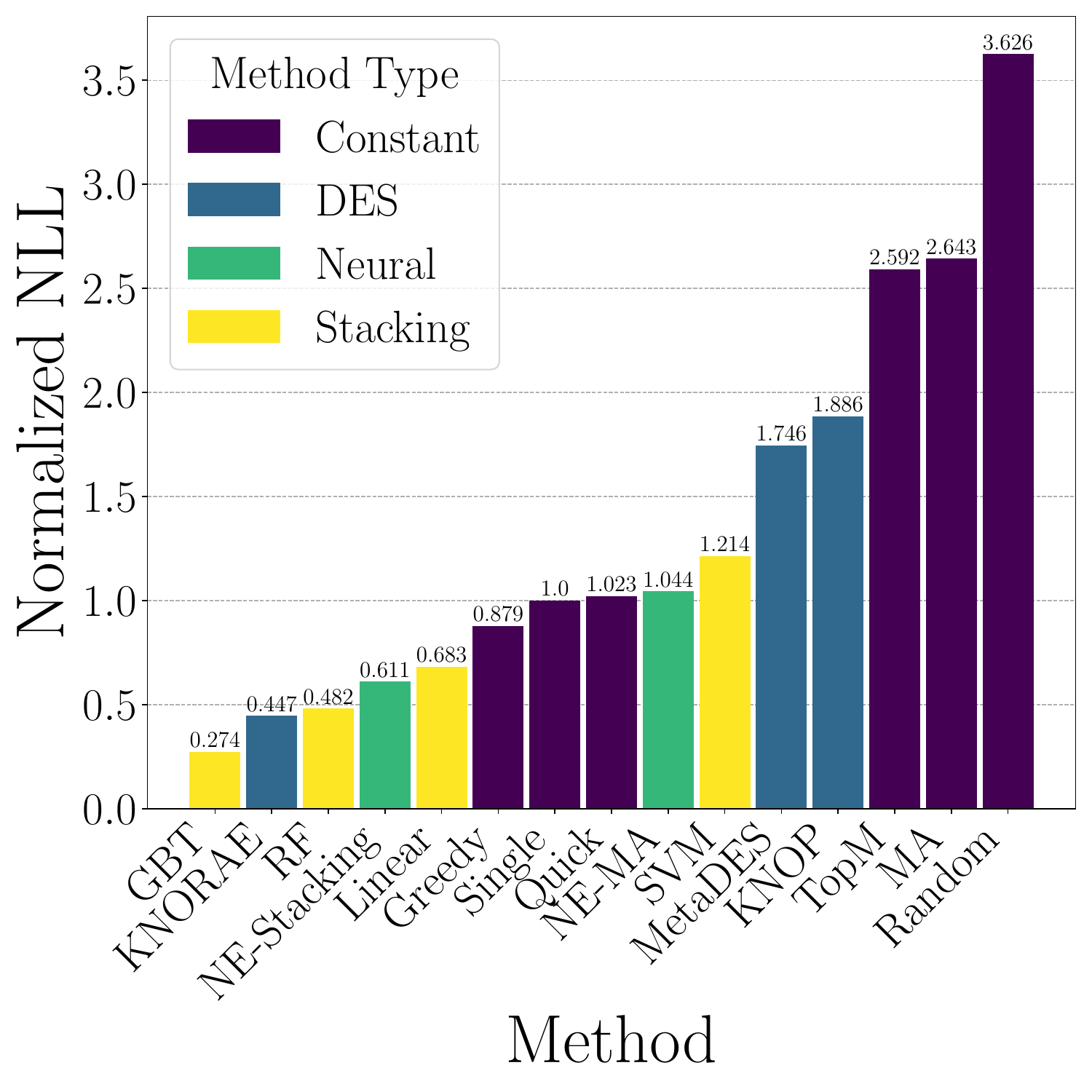}
    \caption{Average Normalized NLL in Scikit-learn pipelines (Validation). }
    \label{fig:nll_sk_val}
\end{figure}

\newpage
\subsection{Unnormalized Results}

We report the results for the research question 1 in Tables \ref{tab:rq1_nll_unnormalized} and \ref{tab:rq1_error_unnormalized}, omitting the normalization. This helps us to understand how much the normalization is changing the results. However, as the different datasets have metrics with different scales, it is important to normalize the results to get a better picture of the relative performances. This is especially problematic in the regression tasks, as it depends on the target scale, therefore we omit it. Even without the normalization, our proposed approach achieves the best results across different datasets. The ranking results remain the same independently of the normalization. 

\begin{table}[H]
    \caption{Average Unnormalized NLL.}
    \centering
    \resizebox{\textwidth}{!}{%
    \begin{tabular}{lllllll}
\toprule
 & \bfseries FTC & \bfseries NB-Micro & \bfseries NB-Mini & \bfseries QT-Micro & \bfseries QT-Mini & \bfseries TR-Class \\
\midrule
\bfseries Single-Best & 0.3412$_{\pm0.3043}$ & 1.9175$_{\pm1.2203}$ & 1.5696$_{\pm0.8119}$ & 0.4499$_{\pm0.5107}$ & 0.6339$_{\pm0.5276}$ & 0.3647$_{\pm0.3300}$ \\
\bfseries Random & 0.4521$_{\pm0.2982}$ & 1.2202$_{\pm1.0069}$ & 1.3005$_{\pm0.9988}$ & 2.1473$_{\pm0.6480}$ & 2.7139$_{\pm1.0686}$ & 0.4344$_{\pm0.3341}$ \\
\bfseries Top5 & 0.2975$_{\pm0.2799}$ & 1.2940$_{\pm0.9934}$ & 1.1911$_{\pm0.9833}$ & 0.4234$_{\pm0.4473}$ & 0.6702$_{\pm0.5330}$ & 0.3576$_{\pm0.3306}$ \\
\bfseries Top50 & 0.2819$_{\pm0.2631}$ & 1.1515$_{\pm0.9713}$ & \underline{\textit{1.1496}}$_{\pm0.9797}$ & 0.7813$_{\pm0.5492}$ & 1.2587$_{\pm0.8605}$ & \underline{\textit{0.3561}}$_{\pm0.3331}$ \\
\bfseries Quick & 0.2443$_{\pm0.2128}$ & 1.1679$_{\pm0.9614}$ & \textbf{1.1455}$_{\pm0.9608}$ & 0.3952$_{\pm0.4441}$ & 0.6030$_{\pm0.4869}$ & 0.3562$_{\pm0.3326}$ \\
\bfseries Greedy & 0.2366$_{\pm0.2107}$ & 1.1498$_{\pm0.9652}$ & 1.2953$_{\pm0.9657}$ & \underline{\textit{0.3895}}$_{\pm0.4453}$ & \underline{\textit{0.5817}}$_{\pm0.4870}$ & 0.3566$_{\pm0.3317}$ \\
\bfseries CMAES & 0.3552$_{\pm0.2130}$ & 1.9175$_{\pm1.2203}$ & 1.5696$_{\pm0.8119}$ & 1.1665$_{\pm0.7614}$ & 1.8487$_{\pm0.6070}$ & 0.3862$_{\pm0.3325}$ \\
\bfseries Random Forest & 0.2434$_{\pm0.2033}$ & 1.7354$_{\pm1.4580}$ & 1.6589$_{\pm1.4265}$ & 0.8846$_{\pm0.6550}$ & 0.9870$_{\pm0.8090}$ & 0.4452$_{\pm0.3734}$ \\
\bfseries Gradient Boosting & \textbf{0.2262}$_{\pm0.1915}$ & 2.2979$_{\pm0.5750}$ & 1.8206$_{\pm0.6327}$ & 1.2480$_{\pm1.6045}$ & 1.7043$_{\pm1.9723}$ & 0.4708$_{\pm0.4264}$ \\
\bfseries SVM & 0.2626$_{\pm0.2241}$ & 1.6698$_{\pm1.4024}$ & 1.6590$_{\pm1.4274}$ & 1.3608$_{\pm0.3953}$ & 1.8931$_{\pm1.2176}$ & 0.3951$_{\pm0.3356}$ \\
\bfseries Linear & 0.2496$_{\pm0.2130}$ & 1.3021$_{\pm0.9815}$ & 1.3626$_{\pm0.9836}$ & 0.4701$_{\pm0.5151}$ & 0.6582$_{\pm0.6088}$ & 0.4095$_{\pm0.3914}$ \\
\bfseries XGBoost & 0.2678$_{\pm0.2250}$ & 1.5484$_{\pm1.2487}$ & 1.5788$_{\pm1.2796}$ & 0.8328$_{\pm0.4807}$ & 0.9423$_{\pm0.5402}$ & 0.5503$_{\pm0.4738}$ \\
\bfseries CatBoost & \underline{\textit{0.2329}}$_{\pm0.2074}$ & 1.6813$_{\pm1.3589}$ & 1.6531$_{\pm1.3327}$ & 0.6392$_{\pm0.6006}$ & 1.0816$_{\pm0.7100}$ & 0.3954$_{\pm0.3839}$ \\
\bfseries LightGBM & 0.2442$_{\pm0.2000}$ & 6.7760$_{\pm5.5841}$ & 6.7783$_{\pm5.5802}$ & 0.7957$_{\pm0.7383}$ & 2.5903$_{\pm3.4557}$ & 0.6301$_{\pm0.5723}$ \\
\bfseries Akaike & 0.3012$_{\pm0.3010}$ & 1.1503$_{\pm0.9699}$ & 1.1499$_{\pm0.9822}$ & 0.7369$_{\pm0.5221}$ & 1.0655$_{\pm0.6000}$ & \underline{\textit{0.3561}}$_{\pm0.3334}$ \\
\bfseries MA & 0.2804$_{\pm0.2074}$ & 1.1612$_{\pm0.9664}$ & 1.1530$_{\pm0.9720}$ & 1.0974$_{\pm0.5102}$ & 1.5240$_{\pm0.8090}$ & 0.3788$_{\pm0.3342}$ \\
\bfseries DivBO & 0.2643$_{\pm0.2526}$ & 1.2355$_{\pm0.8746}$ & 1.1659$_{\pm0.9729}$ & 0.4065$_{\pm0.4489}$ & 0.5906$_{\pm0.4825}$ & 0.3588$_{\pm0.3339}$ \\
\bfseries EO & 0.2599$_{\pm0.2481}$ & 1.1532$_{\pm0.9801}$ & 1.2024$_{\pm0.9656}$ & 0.4144$_{\pm0.4542}$ & 0.5937$_{\pm0.4804}$ & 0.3572$_{\pm0.3329}$ \\ \midrule
\bfseries NE-Stack & 0.2375$_{\pm0.2031}$ & \textbf{1.0706}$_{\pm0.9554}$ & 1.1543$_{\pm1.0696}$ & \textbf{0.3747}$_{\pm0.4494}$ & \textbf{0.4923}$_{\pm0.4658}$ & 0.4587$_{\pm0.4479}$ \\
\bfseries NE-MA & 0.2399$_{\pm0.2176}$ & \underline{\textit{1.1486}}$_{\pm0.9892}$ & 1.1596$_{\pm0.9991}$ & 0.3998$_{\pm0.4474}$ & 0.5832$_{\pm0.4900}$ & \textbf{0.3536}$_{\pm0.3311}$ \\
\bottomrule
\end{tabular}

    }
    \label{tab:rq1_nll_unnormalized}
\end{table}

\begin{table}[H]
    \caption{Average Unnormalized Error.}
    \centering
    \resizebox{\textwidth}{!}{%
    \begin{tabular}{llllllll}
\toprule
 & \bfseries FTC & \bfseries NB-Micro & \bfseries NB-Mini & \bfseries QT-Micro & \bfseries QT-Mini & \bfseries TR-Class & \bfseries TR-Class (AUC) \\
\midrule
\bfseries Single-Best & 0.0868$_{\pm0.0853}$ & 0.4320$_{\pm0.3331}$ & 0.4509$_{\pm0.3019}$ & 0.1285$_{\pm0.1499}$ & 0.1667$_{\pm0.1477}$ & 0.1604$_{\pm0.1545}$ & 0.1169$_{\pm0.1164}$ \\
\bfseries Random & 0.1012$_{\pm0.0910}$ & 0.3092$_{\pm0.2444}$ & 0.3502$_{\pm0.2417}$ & 0.5677$_{\pm0.2487}$ & 0.5295$_{\pm0.2306}$ & 0.1840$_{\pm0.1632}$ & 0.1422$_{\pm0.1356}$ \\
\bfseries Top5 & 0.0831$_{\pm0.0825}$ & 0.3005$_{\pm0.2374}$ & 0.3007$_{\pm0.2251}$ & \underline{\textit{0.1079}}$_{\pm0.1375}$ & 0.1502$_{\pm0.1359}$ & 0.1594$_{\pm0.1527}$ & \textbf{0.1125}$_{\pm0.1142}$ \\
\bfseries Top50 & 0.0881$_{\pm0.0829}$ & 0.2778$_{\pm0.2266}$ & 0.2788$_{\pm0.2262}$ & 0.1334$_{\pm0.1495}$ & 0.1816$_{\pm0.1555}$ & 0.1584$_{\pm0.1524}$ & \underline{\textit{0.1139}}$_{\pm0.1160}$ \\
\bfseries Quick & 0.0831$_{\pm0.0824}$ & 0.2840$_{\pm0.2304}$ & 0.2842$_{\pm0.2289}$ & 0.1121$_{\pm0.1409}$ & 0.1504$_{\pm0.1425}$ & 0.1583$_{\pm0.1517}$ & 0.1145$_{\pm0.1192}$ \\
\bfseries Greedy & 0.0824$_{\pm0.0811}$ & 0.3253$_{\pm0.2827}$ & 0.2958$_{\pm0.2368}$ & 0.1251$_{\pm0.1487}$ & 0.1596$_{\pm0.1450}$ & 0.1562$_{\pm0.1523}$ & 0.1140$_{\pm0.1186}$ \\
\bfseries CMAES & \underline{\textit{0.0822}}$_{\pm0.0810}$ & 0.2788$_{\pm0.2247}$ & 0.2869$_{\pm0.2358}$ & 0.1234$_{\pm0.1478}$ & 0.1579$_{\pm0.1456}$ & 0.1589$_{\pm0.1541}$ & 0.1269$_{\pm0.1311}$ \\
\bfseries Random Forest & 0.0833$_{\pm0.0829}$ & 0.2958$_{\pm0.2444}$ & 0.3706$_{\pm0.3698}$ & 0.1666$_{\pm0.1955}$ & 0.2122$_{\pm0.1874}$ & 0.1591$_{\pm0.1528}$ & 0.1198$_{\pm0.1210}$ \\
\bfseries Gradient Boosting & 0.0838$_{\pm0.0809}$ & 0.4682$_{\pm0.2852}$ & 0.4920$_{\pm0.2559}$ & 0.1425$_{\pm0.1773}$ & 0.2092$_{\pm0.1855}$ & 0.1620$_{\pm0.1592}$ & 0.1240$_{\pm0.1293}$ \\
\bfseries SVM & \textbf{0.0819}$_{\pm0.0806}$ & 0.2917$_{\pm0.2431}$ & 0.3356$_{\pm0.2444}$ & 0.1627$_{\pm0.2031}$ & 0.2341$_{\pm0.2032}$ & \underline{\textit{0.1553}}$_{\pm0.1550}$ & 0.1407$_{\pm0.1358}$ \\
\bfseries Linear & 0.0830$_{\pm0.0816}$ & 0.3098$_{\pm0.2200}$ & 0.3469$_{\pm0.2097}$ & 0.1191$_{\pm0.1503}$ & 0.1670$_{\pm0.1547}$ & 0.1577$_{\pm0.1568}$ & 0.1195$_{\pm0.1203}$ \\
\bfseries XGBoost & 0.0848$_{\pm0.0840}$ & 0.3546$_{\pm0.2824}$ & 0.3554$_{\pm0.2836}$ & 0.1903$_{\pm0.1452}$ & 0.2144$_{\pm0.1598}$ & 0.1654$_{\pm0.1574}$ & 0.1193$_{\pm0.1199}$ \\
\bfseries CatBoost & 0.0874$_{\pm0.0863}$ & 0.4022$_{\pm0.3034}$ & 0.4217$_{\pm0.3411}$ & 0.1450$_{\pm0.1754}$ & 0.2090$_{\pm0.1822}$ & 0.1629$_{\pm0.1606}$ & 0.1199$_{\pm0.1224}$ \\
\bfseries LightGBM & 0.0843$_{\pm0.0833}$ & 0.5941$_{\pm0.4476}$ & 0.5907$_{\pm0.4475}$ & 0.1666$_{\pm0.1604}$ & 0.3442$_{\pm0.3146}$ & 0.1633$_{\pm0.1586}$ & 0.1196$_{\pm0.1222}$ \\
\bfseries Akaike & 0.0857$_{\pm0.0802}$ & \textbf{0.2767}$_{\pm0.2252}$ & 0.2797$_{\pm0.2273}$ & 0.1311$_{\pm0.1487}$ & 0.1769$_{\pm0.1538}$ & 0.1589$_{\pm0.1527}$ & 0.1228$_{\pm0.1305}$ \\
\bfseries MA & 0.0874$_{\pm0.0803}$ & 0.2795$_{\pm0.2271}$ & \textbf{0.2765}$_{\pm0.2248}$ & 0.1435$_{\pm0.1618}$ & 0.1971$_{\pm0.1651}$ & 0.1609$_{\pm0.1580}$ & 0.1186$_{\pm0.1262}$ \\
\bfseries DivBO & 0.0846$_{\pm0.0832}$ & 0.3024$_{\pm0.2489}$ & 0.3817$_{\pm0.1893}$ & 0.1295$_{\pm0.1431}$ & 0.1616$_{\pm0.1458}$ & 0.1573$_{\pm0.1540}$ & 0.1170$_{\pm0.1200}$ \\
\bfseries EO & 0.0851$_{\pm0.0829}$ & 0.2786$_{\pm0.2283}$ & 0.2813$_{\pm0.2274}$ & 0.1330$_{\pm0.1561}$ & 0.1659$_{\pm0.1471}$ & 0.1589$_{\pm0.1577}$ & 0.1177$_{\pm0.1237}$ \\ \midrule
\bfseries NE-Stack & 0.0824$_{\pm0.0817}$ & 0.2781$_{\pm0.2275}$ & 0.2879$_{\pm0.2339}$ & \textbf{0.1033}$_{\pm0.1348}$ & \textbf{0.1424}$_{\pm0.1314}$ & 0.1607$_{\pm0.1511}$ & 0.1152$_{\pm0.1159}$ \\
\bfseries NE-MA & 0.0828$_{\pm0.0818}$ & \underline{\textit{0.2772}}$_{\pm0.2264}$ & \underline{\textit{0.2773}}$_{\pm0.2262}$ & 0.1093$_{\pm0.1344}$ & \underline{\textit{0.1490}}$_{\pm0.1383}$ & \textbf{0.1551}$_{\pm0.1515}$ & 0.1149$_{\pm0.1211}$ \\
\bottomrule
\end{tabular}

    }
    \label{tab:rq1_error_unnormalized}
\end{table}

\newpage

\begin{figure}
    \centering
    \begin{subfigure}{0.49\textwidth}
        \centering
        \includegraphics[width=\textwidth]{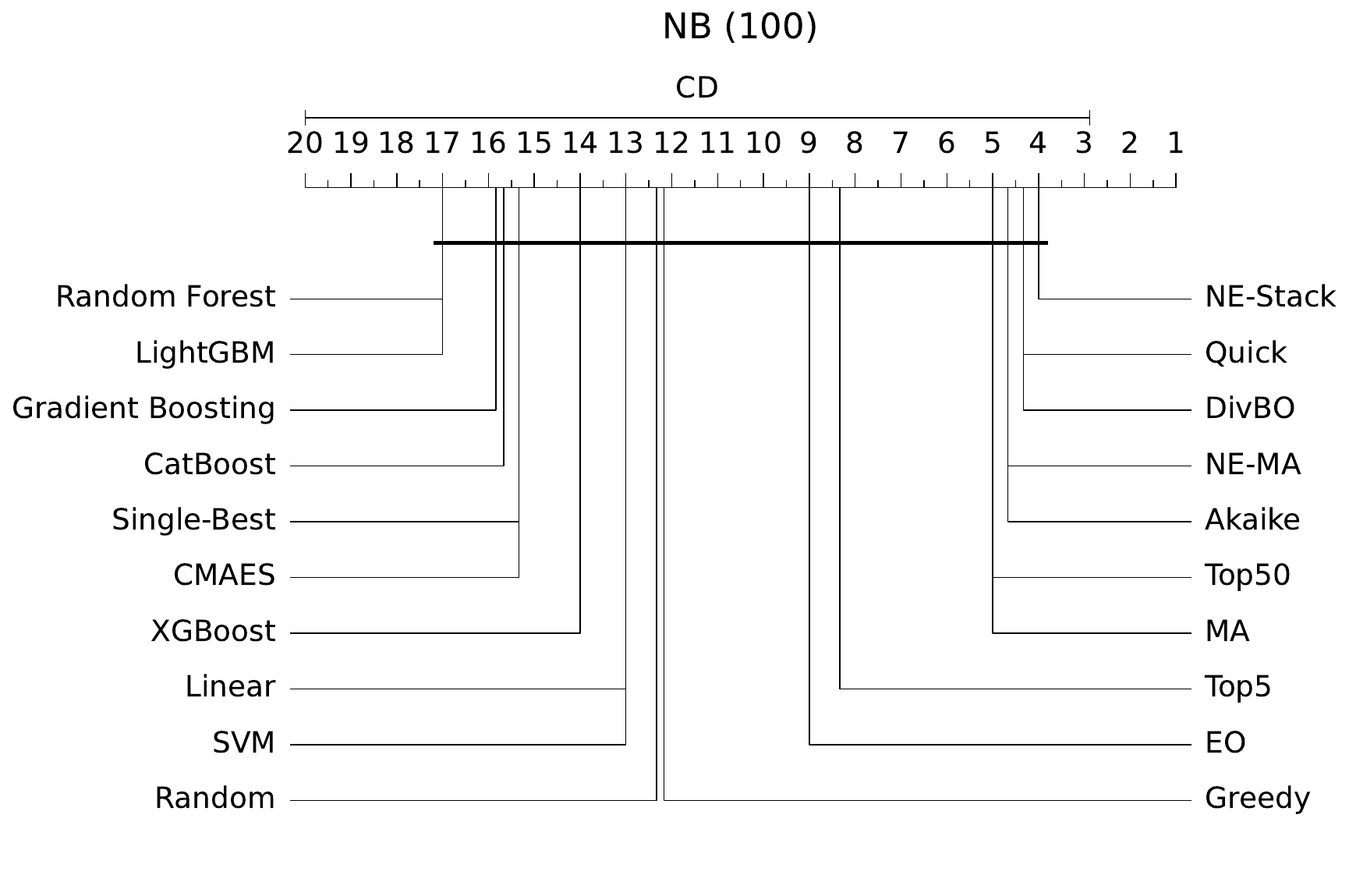}
        \caption{}
        \label{fig:cd_all_1}
    \end{subfigure}
    \hfill
    \begin{subfigure}{0.49\textwidth}
        \centering
        \includegraphics[width=\textwidth]{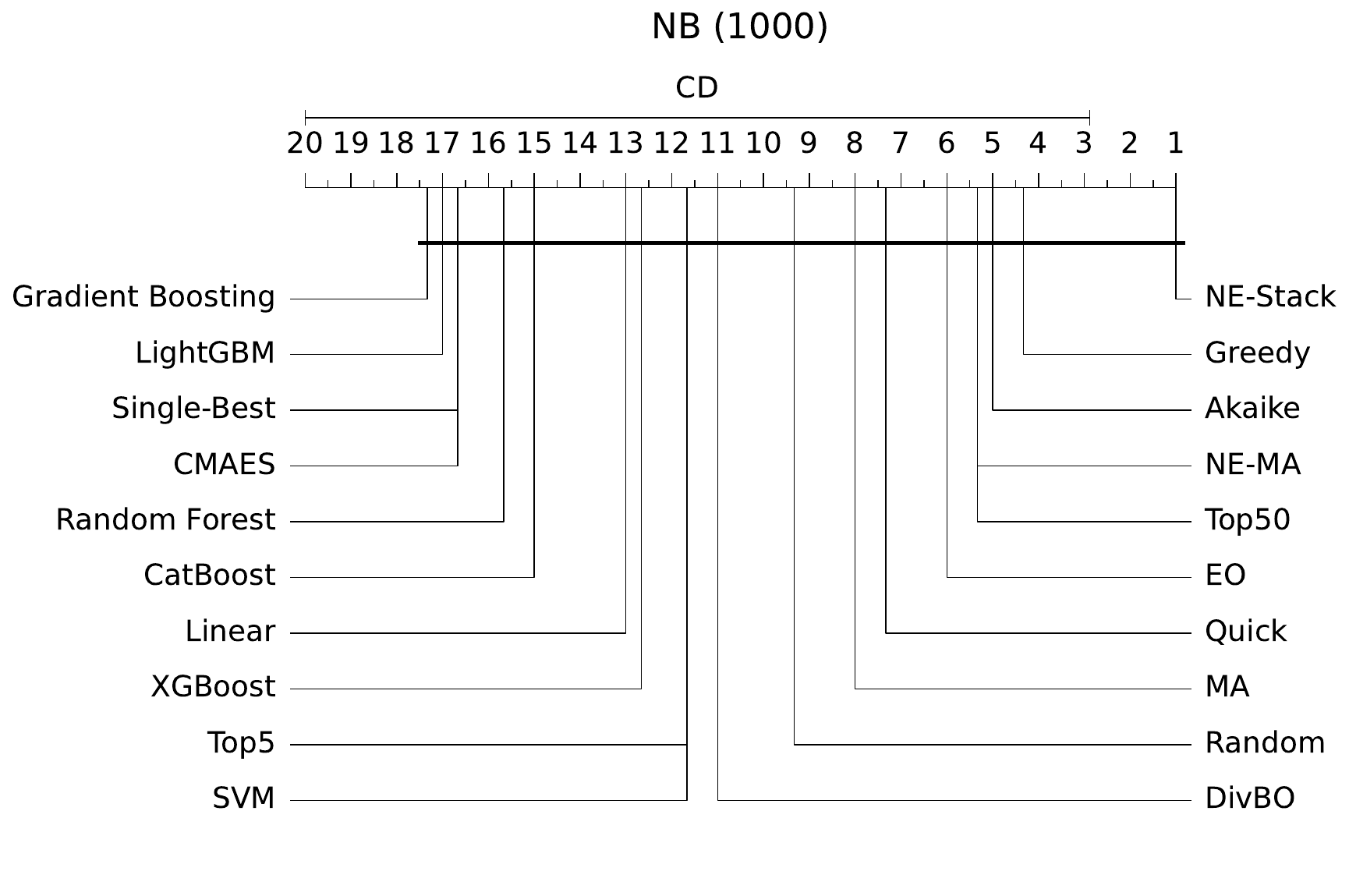}
        \caption{}
        \label{fig:cd_all_2}
    \end{subfigure}
    \vspace{0.5cm}
    \begin{subfigure}{0.49\textwidth}
        \centering
        \includegraphics[width=\textwidth]{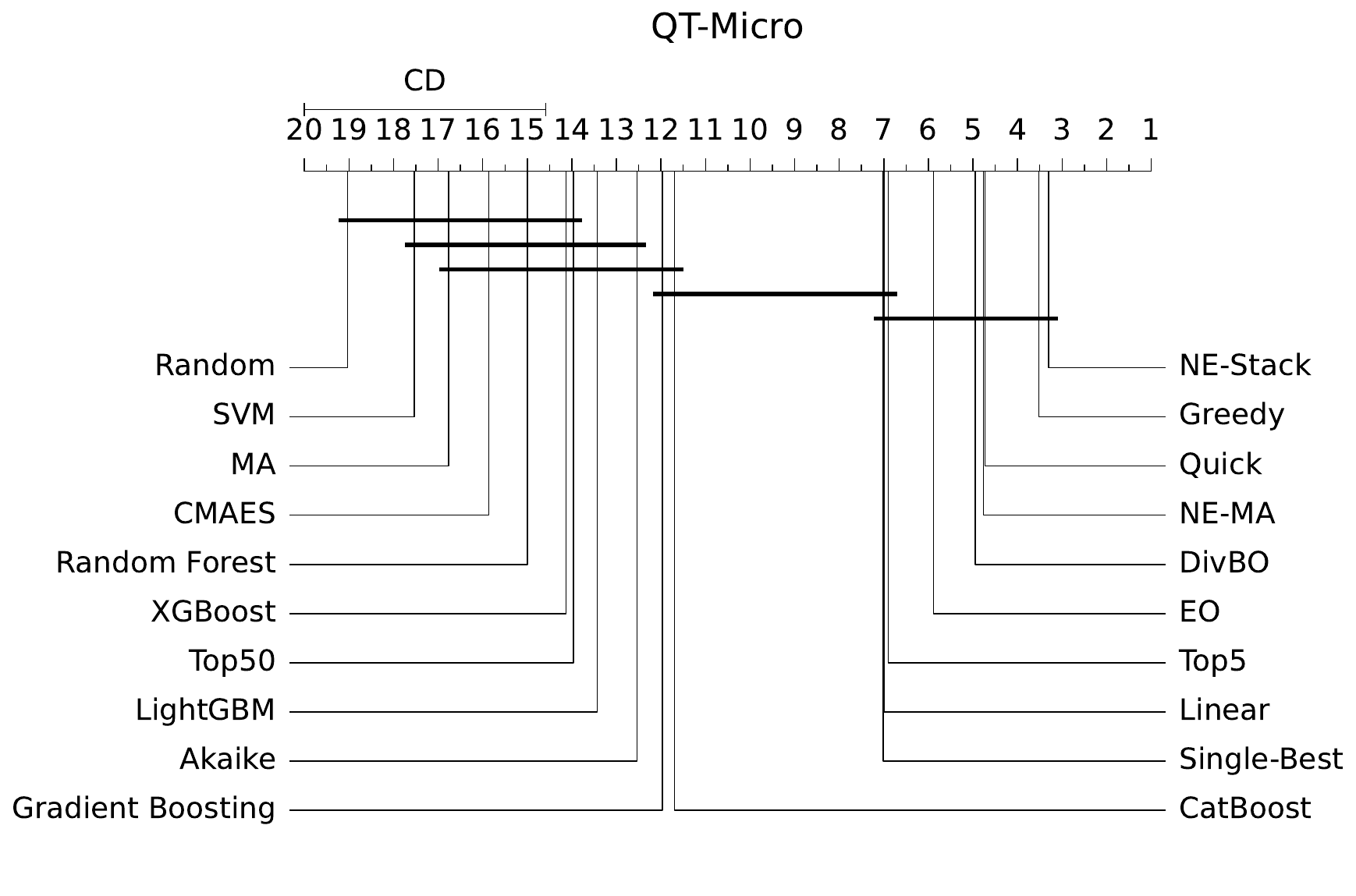}
        \caption{}
        \label{fig:cd_all_3}
    \end{subfigure}
    \hfill
    \begin{subfigure}{0.49\textwidth}
        \centering
        \includegraphics[width=\textwidth]{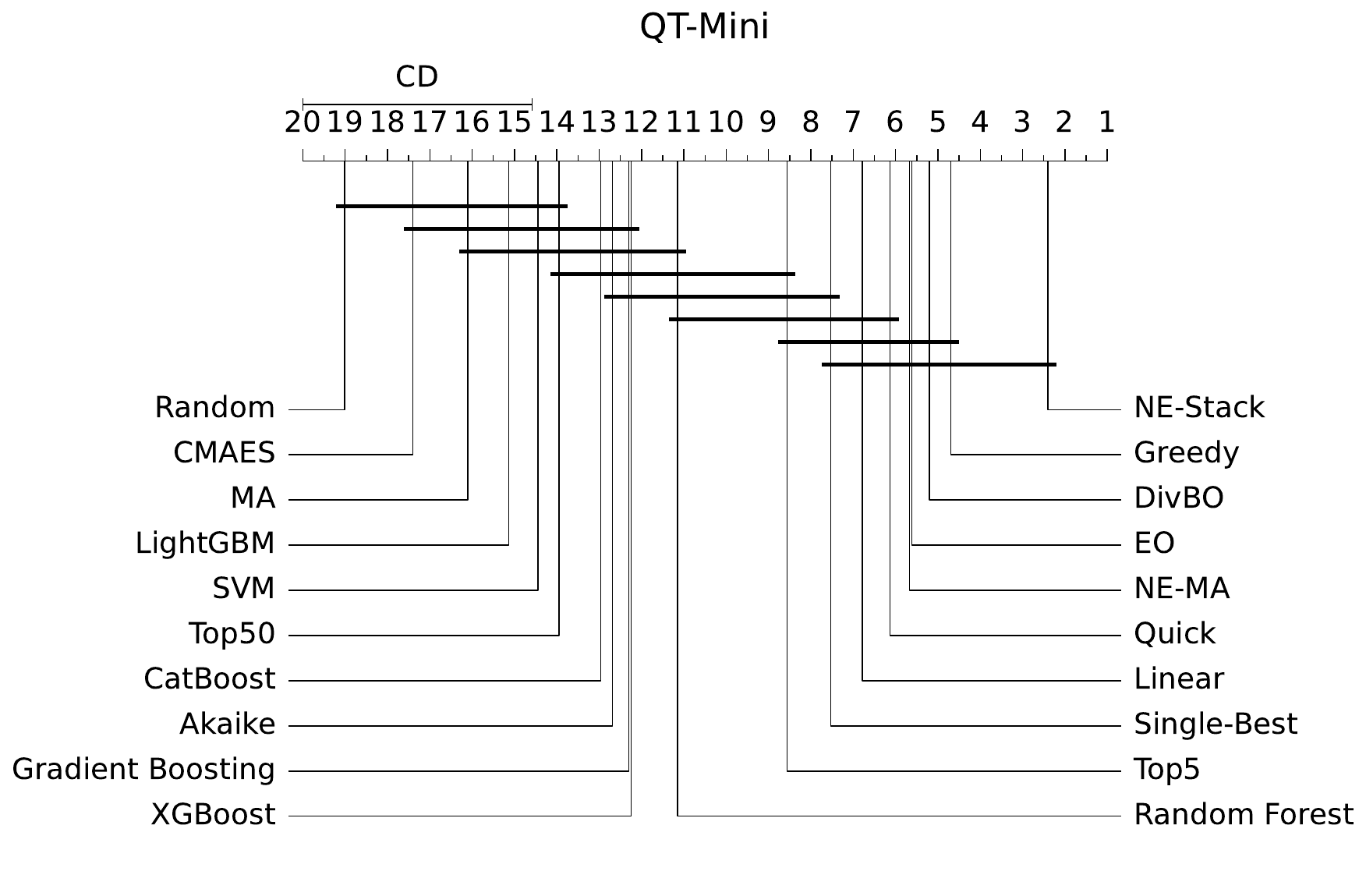}
        \caption{}
        \label{fig:cd_all_4}
    \end{subfigure}
    \vspace{0.5cm}
    \begin{subfigure}{0.49\textwidth}
        \centering
        \includegraphics[width=\textwidth]{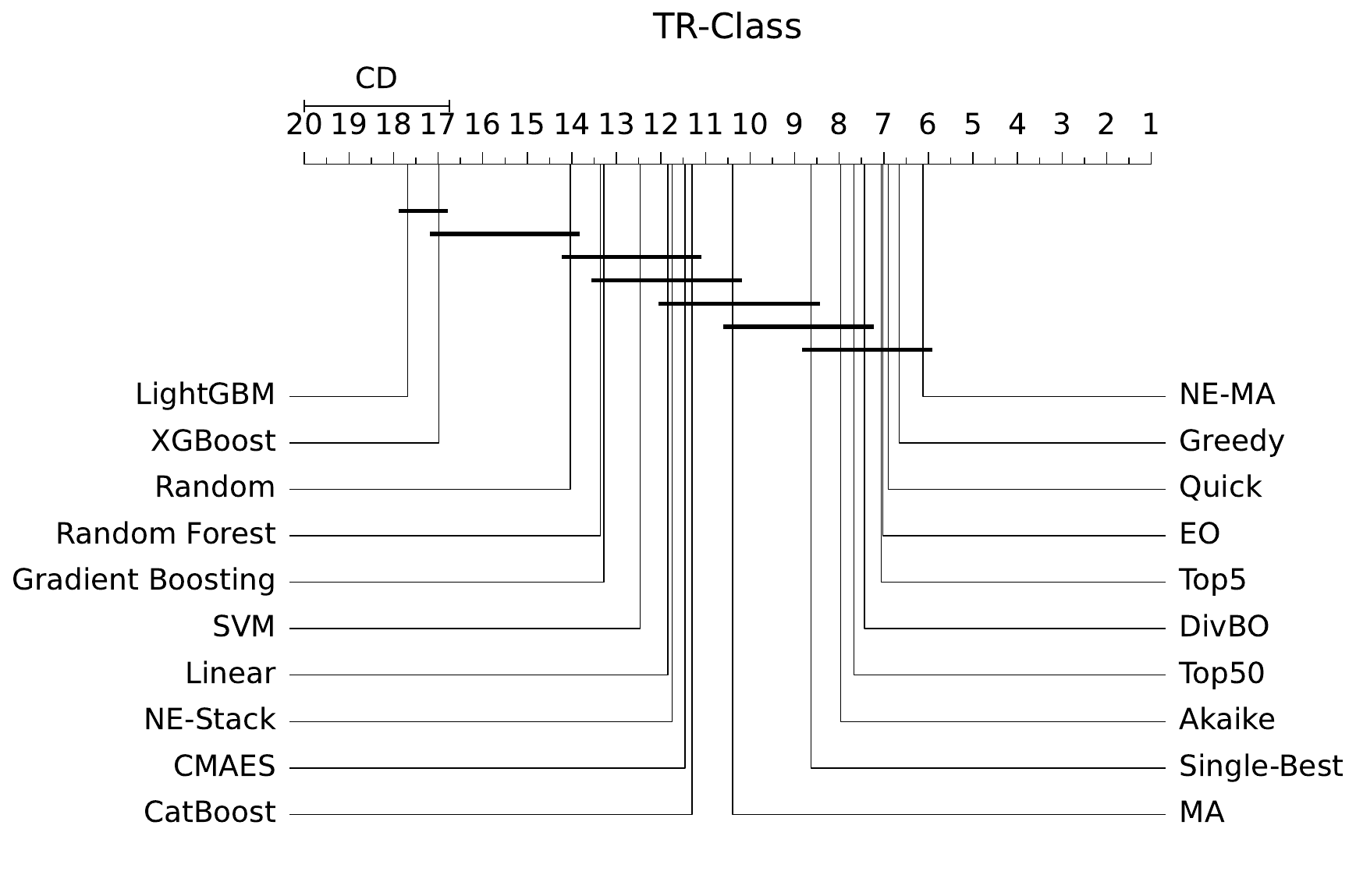}
        \caption{}
        \label{fig:cd_all_5}
    \end{subfigure}
    \hfill
    \begin{subfigure}{0.49\textwidth}
        \centering
        \includegraphics[width=\textwidth]{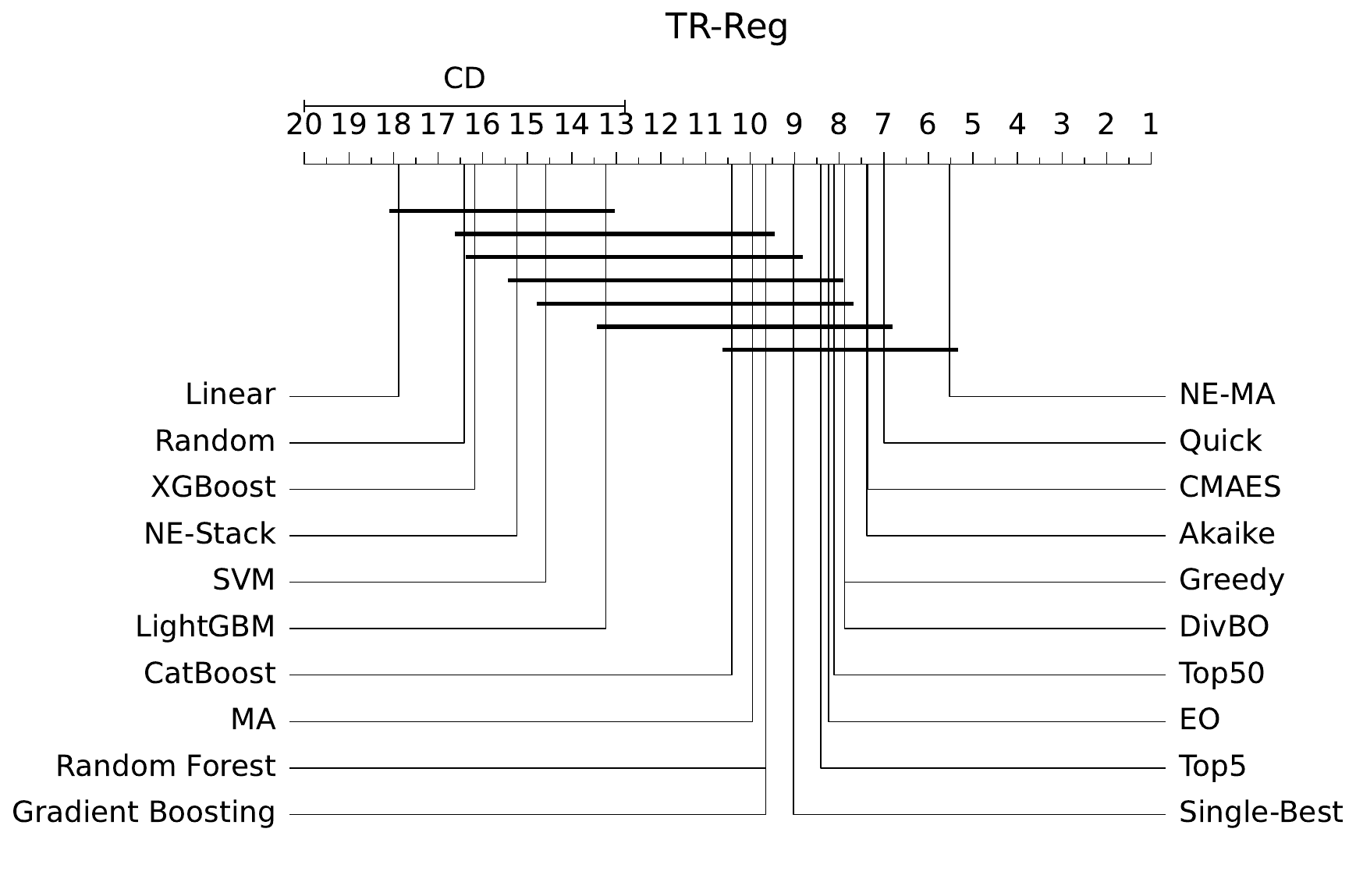}
        \caption{}
        \label{fig:cd_all_6}
    \end{subfigure}
    \label{fig:all_cd1}
    \caption{Critical Difference (CD) diagrams aggregating datasets across metadatasets}
\end{figure}

\begin{figure}
    \centering
    \begin{subfigure}{0.49\textwidth}
        \centering
        \includegraphics[width=\textwidth]{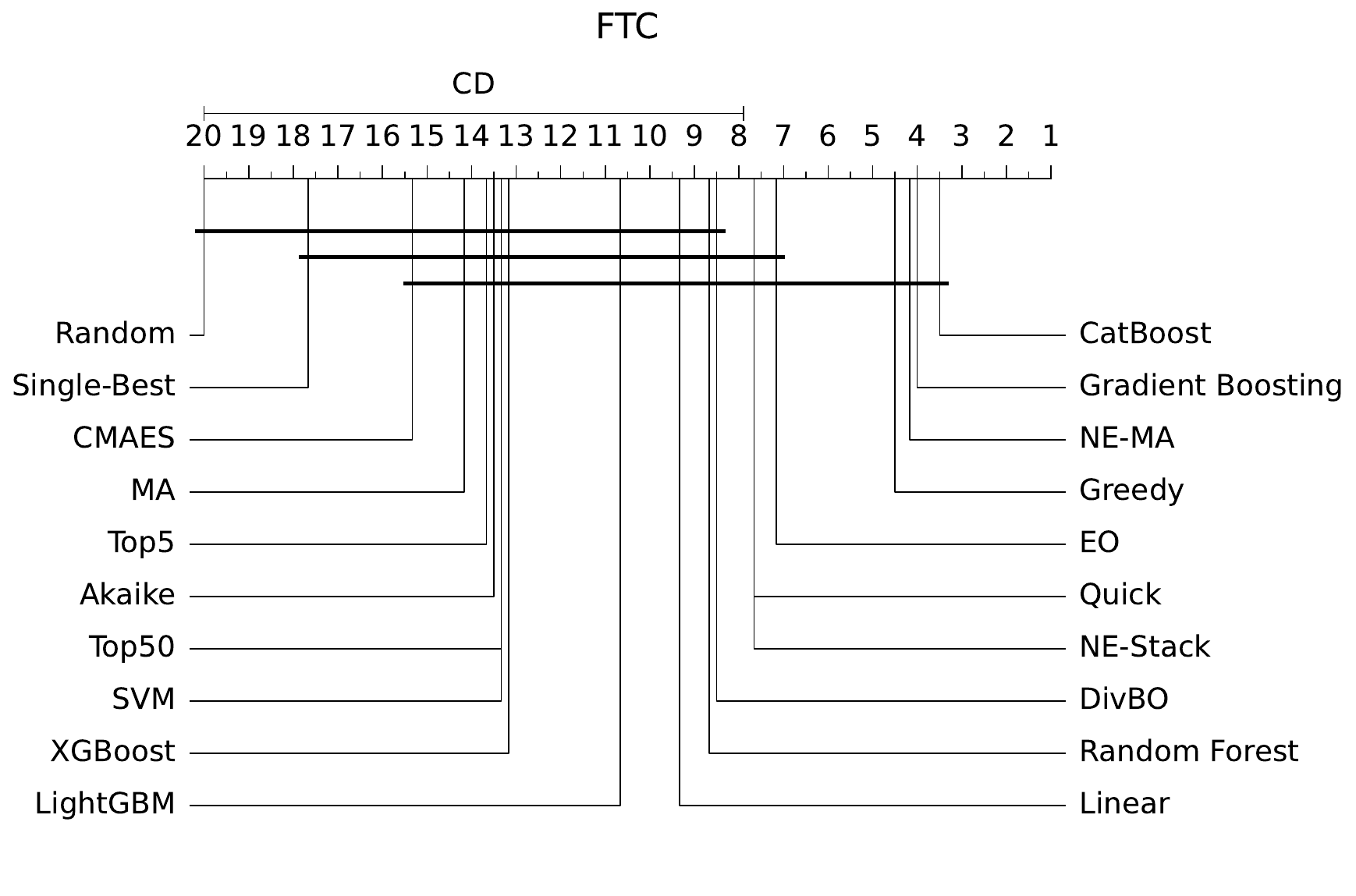}
        \caption{}
        \label{fig:cd_all_7}
    \end{subfigure}
    \hfill
    \begin{subfigure}{0.49\textwidth}
        \centering
        \includegraphics[width=\textwidth]{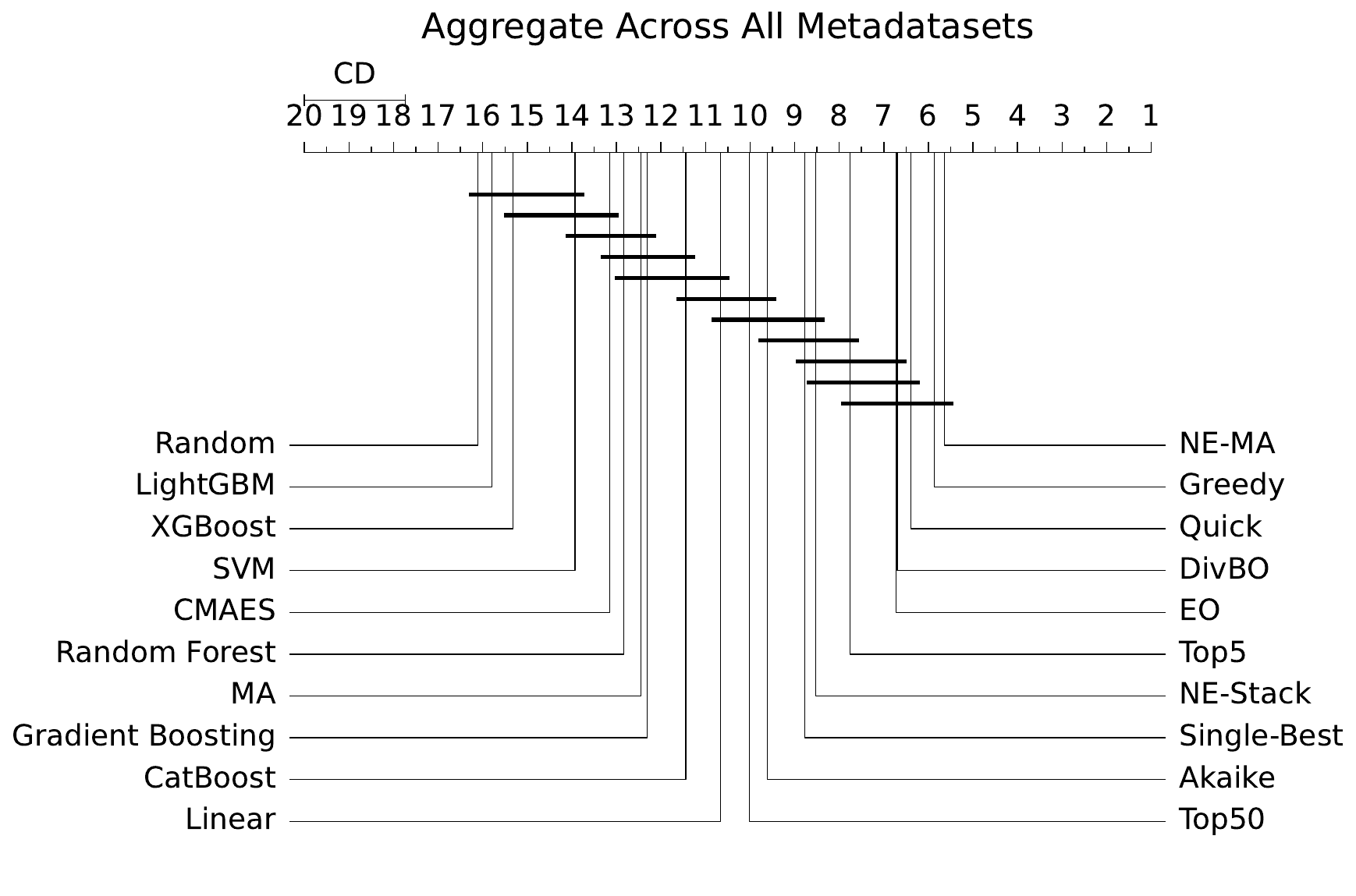}
        \caption{}
        \label{fig:cd_all_8}
    \end{subfigure}
    \label{fig:all_cd2}
    \caption{Critical Difference (CD) diagrams aggregating datasets across metadatasets}
\end{figure}

\section{Critical Difference Diagrams}
\label{sec:critical_difference_diagram}


We want to gain deeper insights into the difference between our proposed method and the baselines.

\textbf{Critical Difference Diagrams.} To this end, we present critical difference (CD) diagrams to visualize and statistically analyze the performance of different methods across multiple datasets. The CD diagrams are generated using the \texttt{autorank} library\footnote{\url{https://github.com/sherbold/autorank}}, which automates the statistical comparison by employing non-parametric tests like Friedman test followed by the Nemenyi post-hoc test. These diagrams show the average ranks of the methods along the horizontal axis, where methods are positioned based on their performance across datasets. The critical difference value, which depends on the number of datasets, is represented by a horizontal bar above the ranks. Methods not connected by this bar exhibit statistically significant differences in performance.

\textbf{Evaluation.} We computed CD diagrams across all datasets in all metadatasets (Figures \ref{fig:cd_all_1}-\ref{fig:cd_all_7}) and an aggregated diagram across all datasets in all metadatasets (Figure \ref{fig:cd_all_8}). We observe that although the performance difference is not substantial compared to other top-performing post-hoc ensembles like \textit{Greedy} and \textit{Quick}, our Neural Ensembler (\textbf{NE-MA}) consistently achieves the best performance in the aggregated results (Figure \ref{fig:cd_all_8}). We also highlight that the \textbf{NE} versions were the top-performing approaches across all metadatasets except FTC, even though we did not modify the method's hyperparameters. 

\textbf{Significance on Challenging Datasets.}
Given the high performance between \textit{Greedy} and \textbf{NE-MA}, we wanted to understand when the second one would obtain strong significant results. We found that that \textbf{NE-MA} is particularly well performing in challenging datasets with a large number of classes. Given Table \ref{tab:metadatasets}, we can see that four meta-datasets have a high ($>10$) number of classes, thus they have datasets with a lot of classes. We selected these metadatasets (\textit{NB(100), NB(1000), QT-Micro, QT-Mini}), and plotted the significance compared to \textit{Greedy} and \textit{Random Search}. The results reported in Figure \ref{fig:cd_selected_datasets} demonstrate that our approach is significantly better than Greedy in these metadatasets.

\begin{figure}
    \centering
    \includegraphics[width=0.5\linewidth]{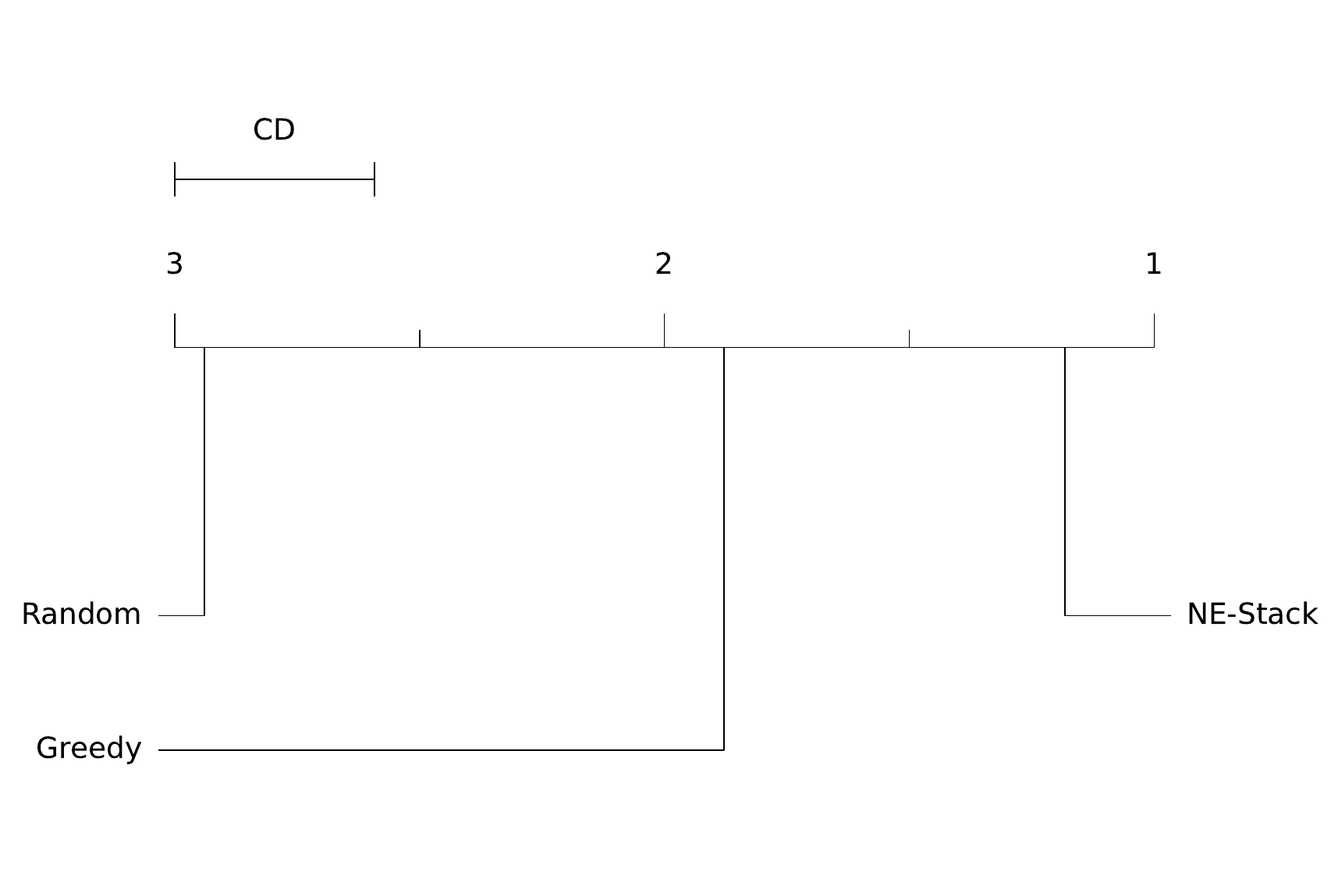}
    \caption{CD Diagram in datasets with a lot of classes.}
    \label{fig:cd_selected_datasets}
\end{figure}

\section{Empirical Analysis of Computational Cost}
\label{sec:analysis_of_computation_cost}


To better understand the trade-off between performance and computational cost, we analyze the average normalized Negative Log-Likelihood (NLL) and the runtime of various ensembling techniques. Our focus is on post-hoc ensembling methods, assuming that all base models are pre-trained. This allows us to isolate and compare the efficiency of the ensembling processes themselves, in contrast to methods like DivBO, which sequentially train models during the search, leading to higher computational demands. We measure only the post-hoc ensembling time, i.e. training the ensembler or selecting the group of base models for the final ensemble.

As shown in Figure~\ref{fig:performance_vs_cost}, the Neural Ensemblers (\textbf{NE-MA} and \textbf{NE-Stack}) achieve the best average performance while maintaining a competitive runtime. Notably, our method has a shorter runtime than the \textit{Greedy} ensembling method and surpasses it in terms of performance. Additionally, the Neural Ensemblers are faster than traditional machine learning models such as \textit{Gradient Boosting}, \textit{Support Vector Machines (SVM)}, \textit{Random Forests}, and optimization algorithms like \textit{CMAES}. While simpler methods like \textit{Top5} and \textit{Top50} exhibit faster runtimes, they do so at the expense of reduced accuracy.

\begin{figure}[H]
    \centering
    \includegraphics[width=0.7\linewidth]{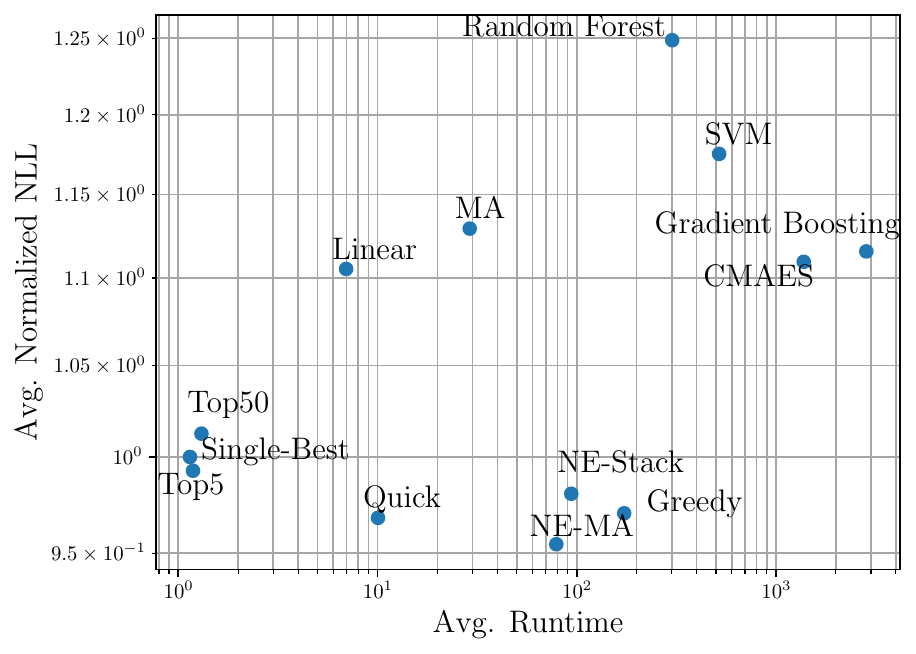}
    \caption{Trade-off between performance and computational cost for different ensembling methods. The Neural Ensemblers achieve the best performance with competitive runtime, outperforming other methods in both accuracy and efficiency.}
    \label{fig:performance_vs_cost}
\end{figure}



\section{Proof-of-Concept with Overparameterized Base Models}
\label{sec:poc_with_overparm_models}

A key question is whether dynamic ensemblers like our Neural Ensembler (NE) offer benefits when base models are overparameterized and potentially overfit the data. Specifically, does the advantage of the NE persist in scenarios where the base models have high capacity?

To explore this, we extend our Proof-of-Concept experiment by ensembling 10th-degree polynomials instead of 2nd-degree ones, thereby increasing the complexity of the base models and introducing the risk of overfitting. We follow the same protocol as in Section~\ref{sec:motivation_the_need}, comparing the performance of the NE with the fixed-weight ensemble method.

As illustrated in Figure~\ref{fig:overparm_base_models}, even with overparameterized base models that tend to overfit the training data, the NE (specifically the weighted Model-Averaging version) achieves the best performance on unseen data. This improvement occurs because the NE dynamically adjusts the ensemble weights based on the input. Thus, dynamic ensembling is advantageous not only when base models underfit but also when they overfit the data. In contrast, the fixed-weight approach lacks this adaptability and cannot compensate for the overfitting behavior of the base models.

\begin{figure}[H]
    \centering
    \includegraphics[width=1.0\linewidth]{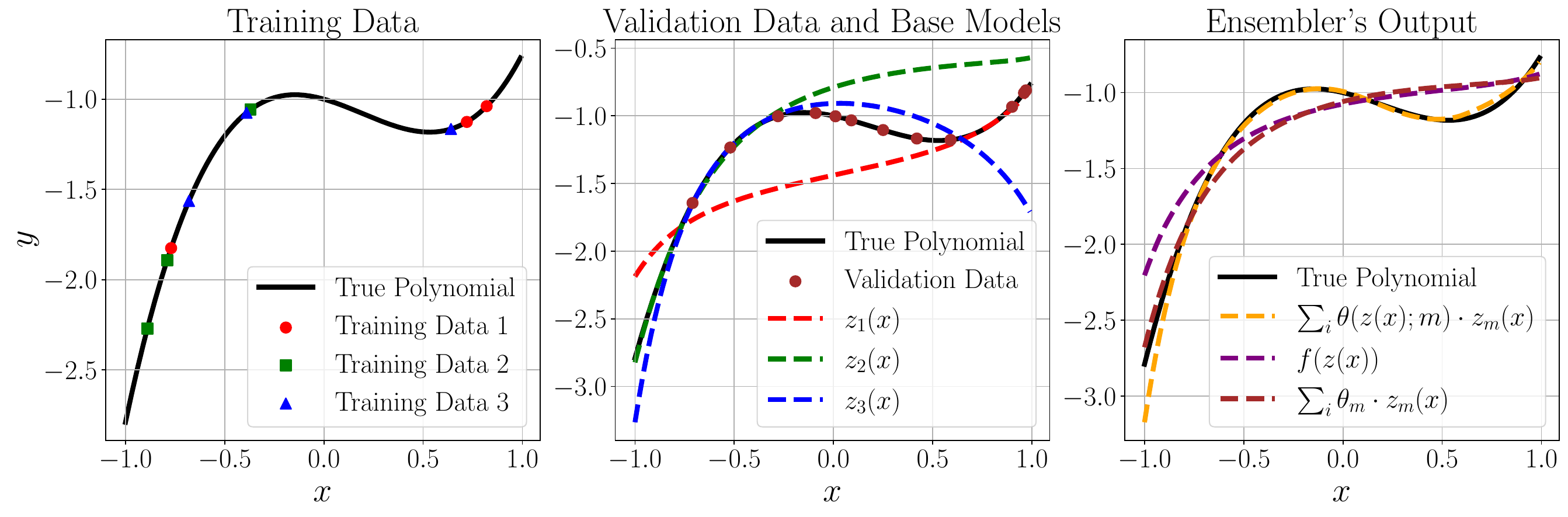}
    \caption{Proof-of-Concept experiment with overparameterized base models (10th-degree polynomials). The Neural Ensembler outperforms fixed-weight ensemble method by mitigating overfitting through dynamic, input-dependent weighting.}
    \label{fig:overparm_base_models}
\end{figure}

\section{Ablation Study on Validation Data Size}
\label{sec:validation_data_size}

To assess the Neural Ensembler's (NE) dependence on validation data size and its sample efficiency, we conducted an ablation study by varying the proportion of validation data used for training.

In this study, we evaluate the NE in stacking mode across all metadatasets, using different percentages of the available validation data: $1\%,5\%,10\%, 25\%, 50\%, 100\%$.  For each configuration, we compute the Negative Log Likelihood (NLL) on the test set and normalize these values by the performance achieved when using $100\%$ of the validation data. This normalization allows us to compare performance changes across different tasks, accounting for differences in metric scales. A normalized NLL value below 1 indicates performance better than the baseline with full validation data, while a value above 1 indicates a performance drop.

The results are presented in Figure~\ref{fig:pct_valid_data}. We observe that reducing the amount of validation data used for training the NE leads to a relative degradation of the performance. However, the performance drops are relatively modest in three metadatasets. For these experiments we used the same dropout rate 0.75.  We could improve the robustness in lower percentages of validation data by using a higher dropout rate. The DropOut mechanism prevents overfitting by randomly omitting base models during training, while parameter sharing reduces the number of parameters and promotes learning common representations.


\begin{figure}
    \centering
    \begin{subfigure}{0.40\textwidth}
        \centering
        \includegraphics[width=\textwidth]{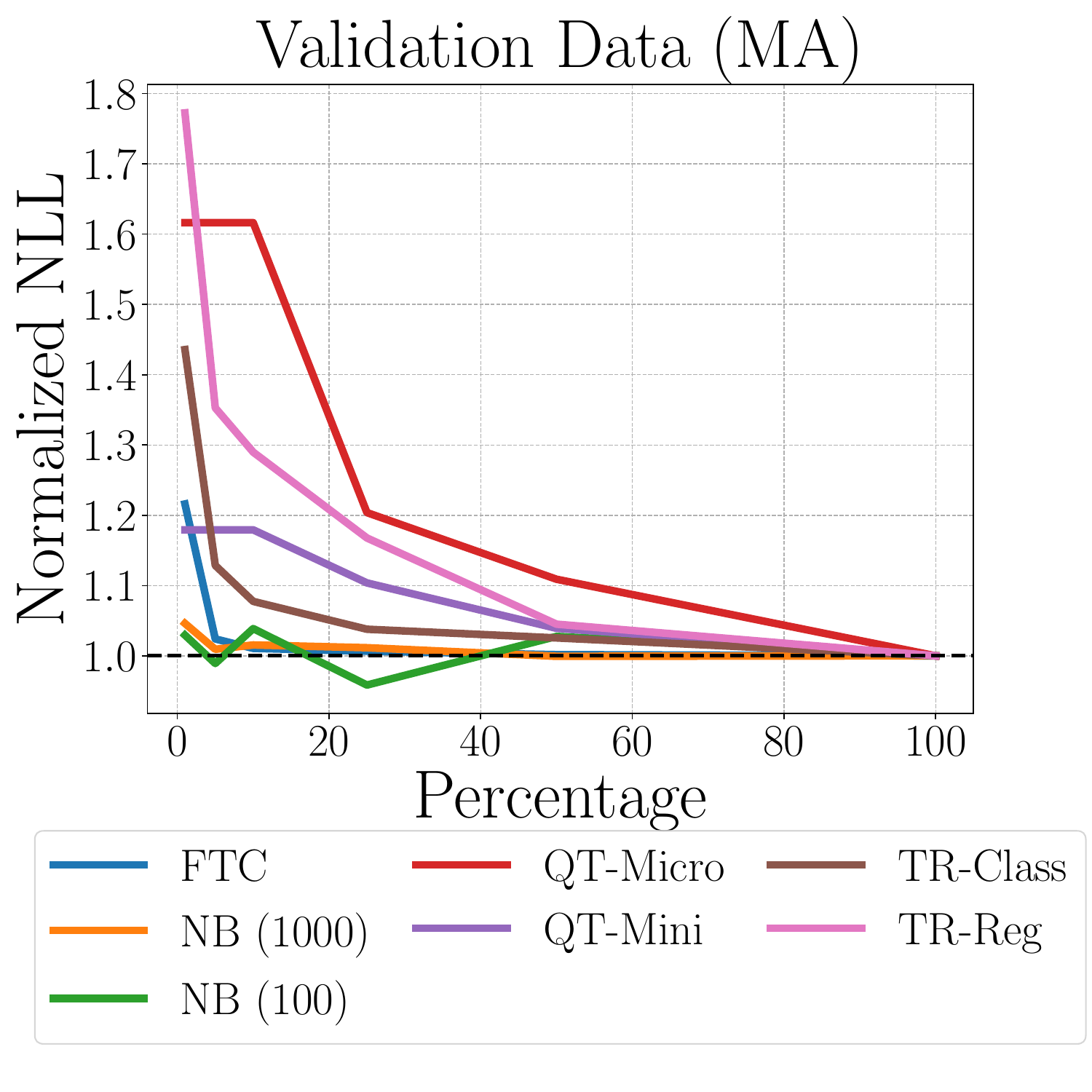}
        \caption{}
    \end{subfigure}
    \vspace{1pt}
    \begin{subfigure}{0.40\textwidth}
        \centering
        \includegraphics[width=\textwidth]{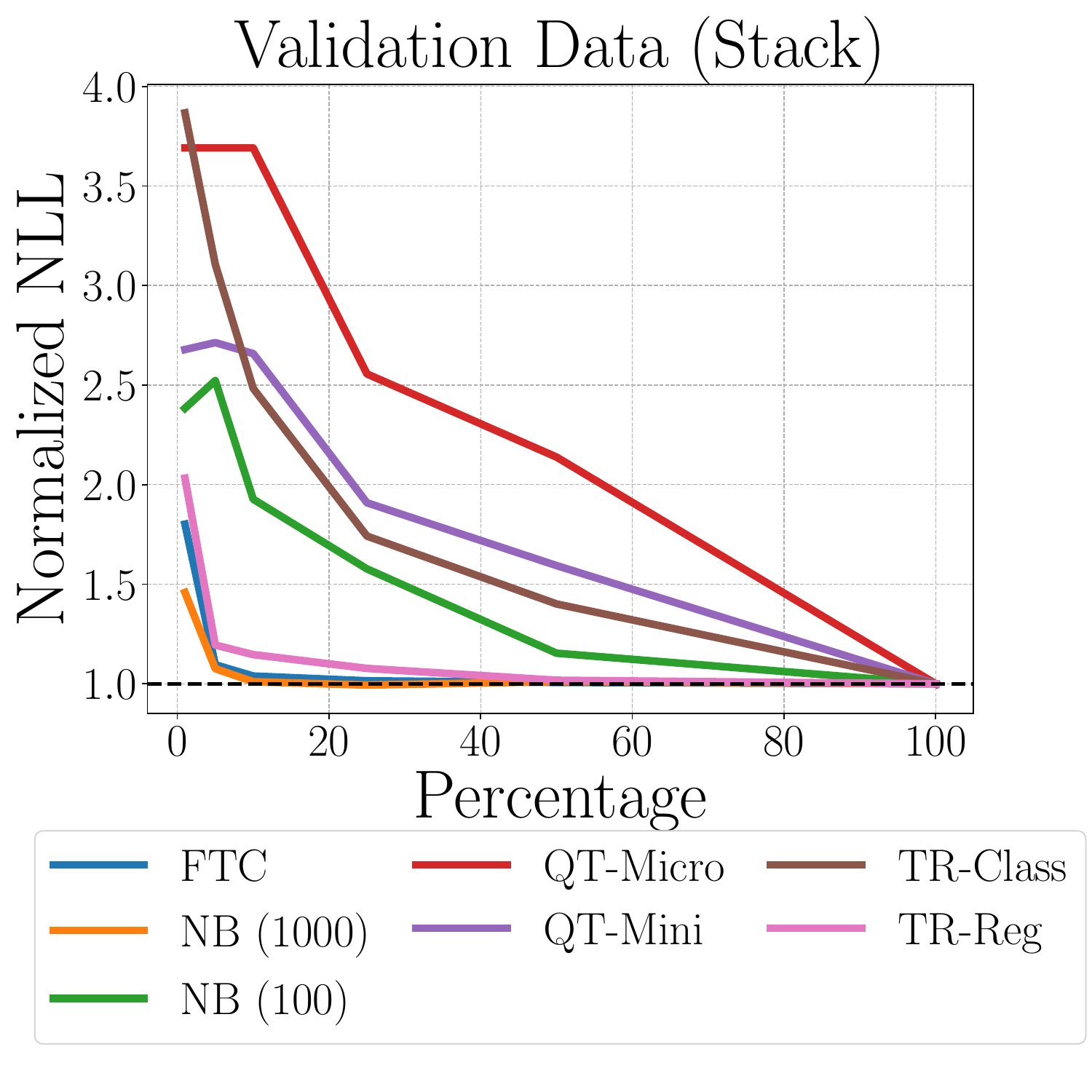}
        \caption{}
    \end{subfigure}
    \caption{Ablation study on the percentage of validation data used for training the Neural Ensembler. The normalized NLL is plotted against the percentage of validation data, with values below 1 indicating performance better than or equal to using the full validation set.}
    \label{fig:pct_valid_data}
\end{figure}

\section{Effect of Merging Training and Validation Data}
\label{sec:merging_training_and_validation_data}

In our experimental setup (Section~\ref{sec:experiments}), we train the base models using the training split and the ensemblers using the validation split, then evaluate on the test split. An important question is whether merging the training and validation data could improve the performance of both the base models and the ensemblers. Specifically, we explore:

\begin{enumerate}[label=(\alph*)]
    \item Can baseline methods that do not require a validation split, such as \textit{Random}, achieve better performance if the base models are trained on the merged dataset (training + validation)?
     \item Would training both the base models and the ensemblers on the merged dataset be beneficial, given that more data might enhance their learning?
\end{enumerate}

To investigate these questions, we conducted experiments on two metadatasets: \textit{Scikit-learn Pipelines} and \textit{FTC}. We trained the base models on the merged dataset and also trained the ensemblers on this same data. Five representative baselines were compared: \textbf{NE-MA}, \textbf{NE-Stack}, \textit{Greedy}, \textit{Random} and \textit{Single-best}.

Figure~\ref{fig:sk_merged_test} presents the test set performance on the \textit{Scikit-learn Pipelines} metadataset. Training on the merged dataset did not improve performance compared to training on the original splits. In fact, the results are similar to the \textit{Random} ensembling method, indicating no significant gain. This suggests that training both the base models and ensemblers on the same (larger) dataset may lead to \textbf{overfitting}, hindering generalization to unseen data.

Further evidence of overfitting is shown in Figure~\ref{fig:sk_merged_validation}, where we report the Negative Log Likelihood (NLL) on the merged dataset (effectively the training data). The NLL values are significantly lower for models trained on the merged dataset, confirming that they fit the training data well but do not generalize to the test set.

Similar observations were made on the \textit{FTC} metadataset, as presented in Table~\ref{tab:ftc_merged_split} and Figure~\ref{fig:merged_ftc}. Although the \textit{Random} method trained on the merged dataset shows a slight improvement, the overall performance gains are minimal.

From these results, we conclude that:
\begin{enumerate}[label=(\alph*)]
    \item Training base models on the merged dataset does not significantly enhance the performance of baseline methods that do not require a validation split.
     \item Using the merged dataset to train both the base models and the ensemblers is not beneficial and may lead to overfitting, reducing generalization to the test set.
\end{enumerate}

\begin{figure}
    \centering
    \begin{subfigure}{0.45\textwidth}
        \centering
        \includegraphics[width=\textwidth]{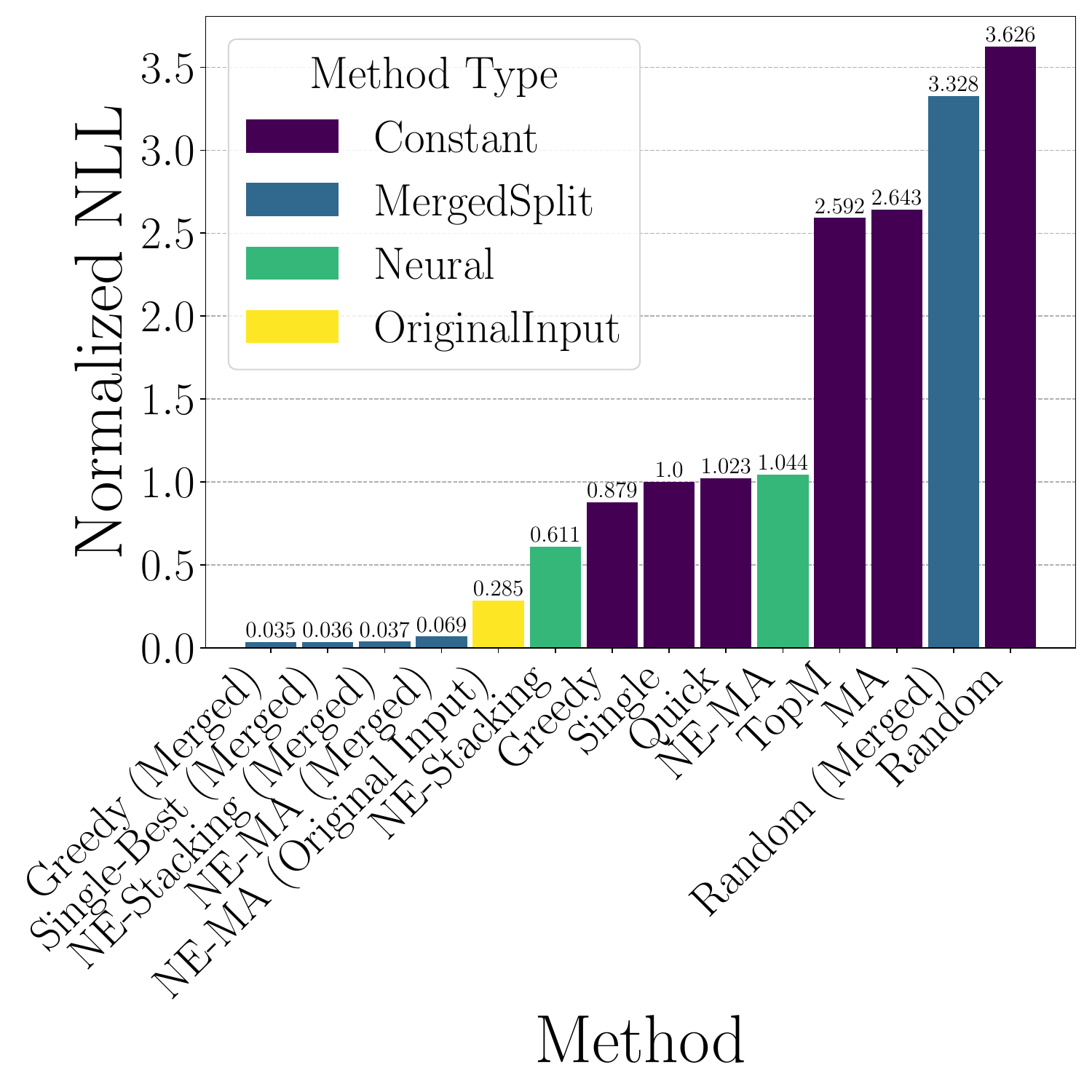}
        \caption{Performance on merged dataset (training data) for \textit{Scikit-learn Pipelines}.}
        \label{fig:sk_merged_validation}
    \end{subfigure}
    \hfill
    \begin{subfigure}{0.45\textwidth}
        \centering
        \includegraphics[width=\textwidth]{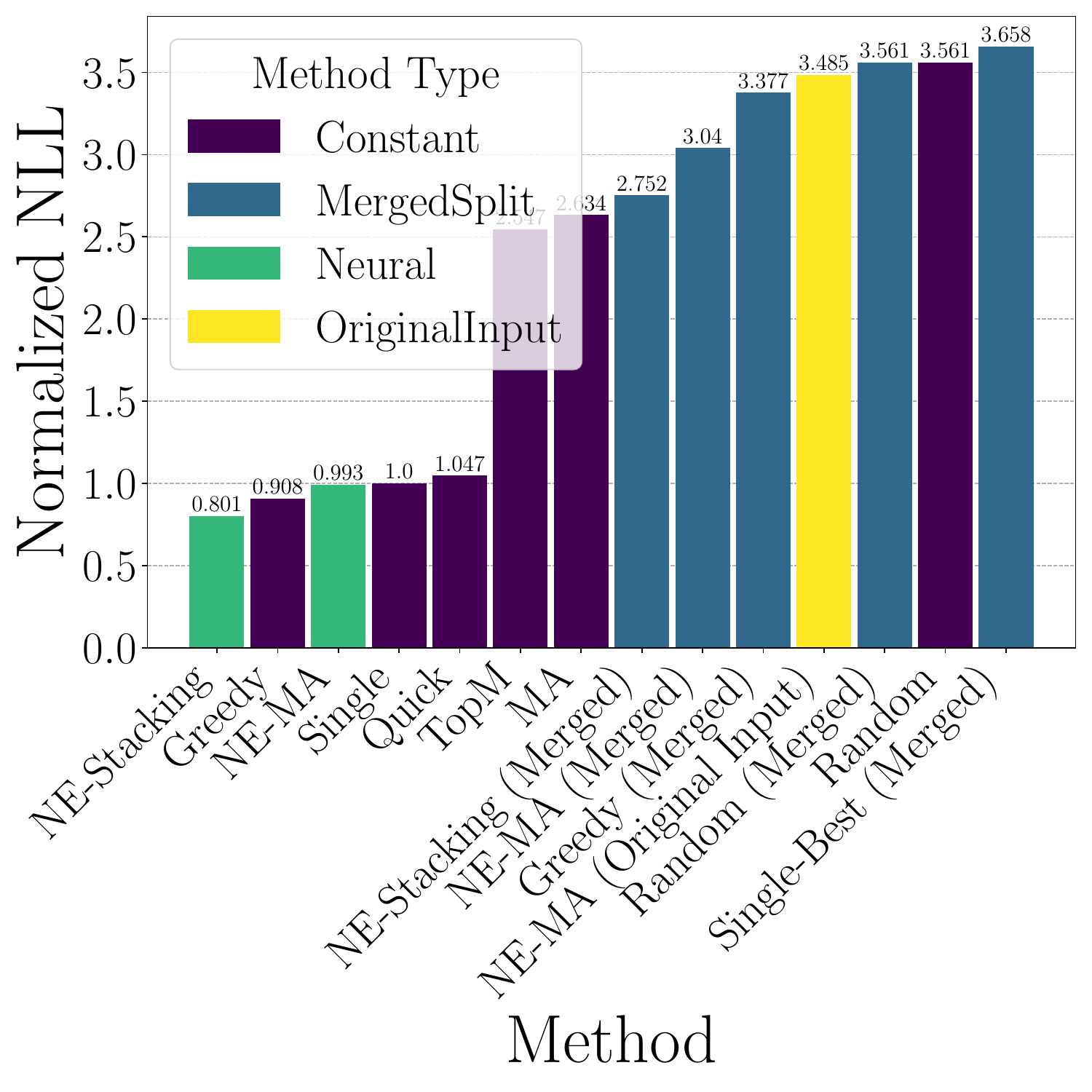}
        \caption{Test set performance on \textit{Scikit-learn Pipelines} with merged training.}
        \label{fig:sk_merged_test}
    \end{subfigure}
    \label{fig:sk_merged}
    \caption{Impact of training on merged dataset for \textit{Scikit-learn Pipelines}. Training on the merged dataset leads to overfitting, as evidenced by low NLL on training data but no improvement on test data.}
\end{figure}

\begin{figure}[H]
    \centering
    \begin{subfigure}[t]{0.45\textwidth}
        \vspace{0pt} 
        \centering

        \setlength{\tabcolsep}{4pt} 
        \renewcommand{\arraystretch}{1.0} 
        \resizebox{\textwidth}{!}{
            \begin{tabular}{lll} \toprule \textbf{Algorithm} & \textbf{FTC (Avg. Normalized NLL)} & \textbf{FTC (Avg. Rank)} \\ \midrule Single-Best & $1.0000_{\pm0.0000}$ & $11.4167_{\pm1.5626}$ \\ Single-Best (Merged) & $1.2816_{\pm0.6710}$ & $11.2500_{\pm4.0466}$ \\ Random & $1.5450_{\pm0.5289}$ & $14.0000_{\pm0.8944}$ \\ Random (Merged) & $1.5365_{\pm0.7142}$ & $13.0833_{\pm2.2454}$ \\ Top5 & $0.8406_{\pm0.0723}$ & $8.6667_{\pm2.9439}$ \\ Top50 & $0.8250_{\pm0.1139}$ & $8.3333_{\pm1.9664}$ \\ Quick & $0.7273_{\pm0.0765}$ & $4.6667_{\pm1.7512}$ \\ Greedy & $\mathbf{0.6943}_{\pm0.0732}$ & $\mathbf{2.8333}_{\pm1.6021}$ \\ Greedy (Merged) & $0.7537_{\pm0.2652}$ & $6.5833_{\pm4.3637}$ \\ CMAES & $1.2356_{\pm0.5295}$ & $10.3333_{\pm3.4448}$ \\ MA & $0.9067_{\pm0.1809}$ & $8.8333_{\pm2.4014}$ \\ NE-Stack & $0.7562_{\pm0.1836}$ & $5.3333_{\pm4.6762}$ \\ NE-Stacking (Merged) & $0.7428_{\pm0.2208}$ & $5.5833_{\pm3.3529}$ \\ NE-MA & $\underline{\mathit{0.6952}}_{\pm0.0730}$ & $\mathbf{2.8333}_{\pm2.2286}$ \\ NE-MA (Merged) & $0.7461_{\pm0.2084}$ & $6.2500_{\pm2.6410}$ \\ \bottomrule \end{tabular}
        }
        \caption{}
        \label{tab:ftc_merged_split}
    \end{subfigure}%
    \hfill
    \begin{subfigure}[t]{0.45\textwidth}
        \vspace{0pt} 
        \centering
        \includegraphics[width=\textwidth,height=\textheight,keepaspectratio]{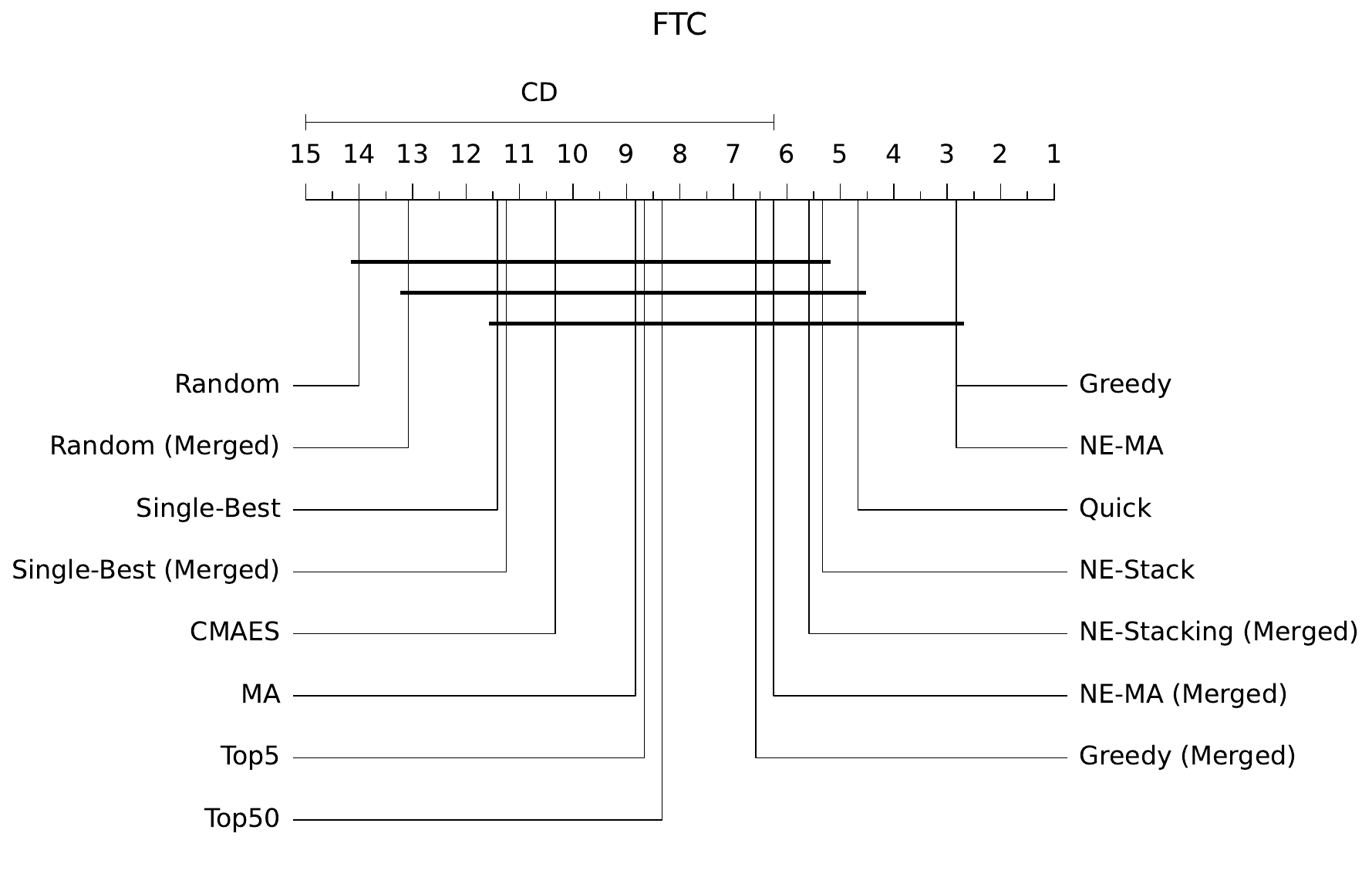}
         \caption{}
        \label{fig:merged_ftc}
    \end{subfigure}
        \label{fig:ftc_merged_side_by_side}
    \caption{Merged-split baselines on the \textit{FTC} metadataset: (a) Test set performance with merged training, (b) Critical Difference diagram.}
\end{figure}



\section{Neural Ensemblers Operating on the Original Input Space}
\label{sec:original_input_space}

In Section~\ref{sec:architecture}, we discussed that the Neural Ensembler in model-evaraging mode (\textbf{NE-MA}) computes weights $\theta_m(z;\beta)$ that rely solely on the base model predictions $z$ for each instance. Specifically, the weights are defined as:

\begin{equation}
    \theta_m(z;\beta)= \frac{\exp f_m(z;\beta)}{\sum_{m^{\prime}} f_{m^{\prime}}(z;\beta)}
\end{equation}

where $z$ represents the base model predictions for an instance $x \in \mathcal{X}$, and $\mathcal{X}$ denotes the original input space. As our experiments encompass different data modalities, $x$ can be a vector of tabular descriptors, an image, or text.

An alternative formulation involves computing the ensemble weights directly from the original input instances instead of using the base model predictions. This approach modifies the weight computation to:

\begin{equation}
    \hat{y} = \sum_m \theta_m(x;\beta) \cdot z_m(x) = \sum_m \frac{\exp f_m(x;\beta)}{\sum_{m'} \exp f_{m'}(x;\beta)} \cdot z_m(x)
\end{equation}

where $f_m$ now operates on the original input space $\mathcal{X}$ to produce unnormalized weights.

However, adopting this formulation introduces challenges in selecting an appropriate function $f_m$ for different data modalities. For example, if the instances are images, $f_m$ must be a network capable of processing images, such as a convolutional neural network. This requirement prevents us from using the same architecture across all modalities, limiting the generalizability of the approach.

To evaluate this idea, we tested this alternative neural ensembler on the \textit{Scikit-learn Pipelines} metadataset, which consists of tabular data. We implemented $f_m$ as a four-layer MLP. Our results, represented by the yellow bar in Figure~\ref{fig:sk_merged_test}, indicate that this strategy does not outperform the original approach proposed in Section~\ref{sec:method}, which uses the base model predictions as input.

We hypothesize that computing ensemble weights directly from the original input space may be more susceptible to overfitting, especially when dealing with datasets that have noisy or high-dimensional features. Additionally, this strategy may require tuning the hyperparameters of the network $f_m$ for each dataset to achieve optimal performance, reducing its effectiveness and generalizability across diverse datasets.

\section{Do Neural Ensemblers need a strong group of base models, i.e. found using Bayesian Optimization?}

\label{sec:strong_group}
\textbf{Experimental Protocol.}  Practitioners use some methods such as greedy ensembling as post-hoc ensemblers, i.e., they consider a set of models selected by a search algorithm such as Bayesian Optimization as base learners. \textit{DivBO} enhances the Bayesian Optimization by accounting for the diversity in the ensemble in the acquisition function. We run experiments to understand whether the Neural Ensemblers' performance depends on a strong subset of $50$ base models selected by \textit{DivBO}, and whether it can help other methods. We conduct additional experiments by randomly selecting $50$ models to understand the impact and significance of merely using a smaller set of base models. We normalize the base of the metric on the \textit{single-best} base model from the complete set contained in the respective dataset.

\textbf{Results.}  We report in Table~\ref{tab:rq3_aggregated} the results with the two selection methods (random and \textit{DivBO}) using a subset of common baselines, where we normalize using the metric of the \textit{single-best} from the whole set of models.
We can compare directly with the results in Table \ref{tab:rq1_nll_aggregated}. We observe that reducing the number of base models with \textit{DivBO} negatively affects the performance of the Neural Ensemblers. Surprisingly, randomly selecting the subset of base models improves the results in two metadatasets (\textit{TR-Class} and \textit{NB-1000}). We hypothesize that decreasing the number of base models is beneficial for these metadatasets. With over $1000$ base models available, the likelihood of identifying a preferred model and overfitting the validation data increases in these metadatasets. 
Naturally, decreasing the number of base models can also be detrimental for the Neural Ensemblers, as this happens for some metadatasets such as \textit{TR-Reg} and \textit{QT-Micro}. In contrast to the Neural Ensemblers, selecting a subset of strong models with \textit{DivBO} improves the performance for some baselines such as Model Averaging (MA) or \textit{TopK} ($K=25$). 
In other words, it works as a preprocessing method for these ensembling approaches.
Overall, the results in Table \ref{tab:rq3_aggregated} demonstrate that \textbf{Neural Ensemblers do not need a strong group of base models to achieve competitive results}.

\begin{table}[t]
    \caption{Average NLL for Subset of Base Models selected Randomly or using DivBO}
    \centering
    \resizebox{\textwidth}{!}{%
    \begin{tabular}{llllllll}
\toprule
 & \bfseries FTC & \bfseries NB-Micro & \bfseries NB-Mini & \bfseries QT-Micro & \bfseries QT-Mini & \bfseries TR-Class & \bfseries TR-Reg \\
\midrule
\bfseries Single & 1.0000$_{\pm0.0000}$ & 1.0000$_{\pm0.0000}$ & 1.0000$_{\pm0.0000}$ & \underline{\textit{1.0000}}$_{\pm0.0000}$ & 1.0000$_{\pm0.0000}$ & 1.0000$_{\pm0.0000}$ & \underline{\textit{1.0000}}$_{\pm0.0000}$ \\
\bfseries Single + DivBO & 1.0000$_{\pm0.0000}$ & 1.0000$_{\pm0.0000}$ & 0.8707$_{\pm0.3094}$ & 1.7584$_{\pm2.0556}$ & 1.1846$_{\pm0.2507}$ & 1.1033$_{\pm0.9951}$ & 1.0039$_{\pm0.0424}$ \\
\bfseries Random + DivBO & 0.9305$_{\pm0.3286}$ & 0.6538$_{\pm0.2123}$ & 0.9724$_{\pm0.0478}$ & 1.1962$_{\pm1.0189}$ & 0.9717$_{\pm0.1919}$ & 1.0107$_{\pm0.3431}$ & 1.0302$_{\pm0.1250}$ \\
\bfseries Top25 + DivBO & 0.7617$_{\pm0.1136}$ & \textbf{0.5564}$_{\pm0.1961}$ & 0.9762$_{\pm0.0413}$ & 1.1631$_{\pm0.9823}$ & 0.9431$_{\pm0.2035}$ & 1.0023$_{\pm0.3411}$ & 1.0247$_{\pm0.1473}$ \\
\bfseries Quick + DivBO & 0.7235$_{\pm0.0782}$ & 0.6137$_{\pm0.1945}$ & 0.9646$_{\pm0.0614}$ & 1.2427$_{\pm1.1130}$ & 0.9544$_{\pm0.2050}$ & 1.0014$_{\pm0.3423}$ & 1.0400$_{\pm0.1949}$ \\
\bfseries Greedy + DivBO & 0.7024$_{\pm0.0720}$ & 0.6839$_{\pm0.3003}$ & 0.9762$_{\pm0.0413}$ & 1.1659$_{\pm0.9789}$ & 0.9435$_{\pm0.2029}$ & 1.0024$_{\pm0.3410}$ & 1.0271$_{\pm0.1531}$ \\
\bfseries CMAES + DivBO & 0.8915$_{\pm0.1759}$ & 1.0000$_{\pm0.0000}$ & 1.0000$_{\pm0.0000}$ & \underline{\textit{1.0000}}$_{\pm0.0000}$ & 1.0000$_{\pm0.0000}$ & 1.0166$_{\pm0.3319}$ & 1.0265$_{\pm0.1521}$ \\
\bfseries Random Forest + DivBO & 0.7932$_{\pm0.1194}$ & 0.9338$_{\pm0.3436}$ & 1.1609$_{\pm0.2787}$ & 2.5998$_{\pm2.6475}$ & 1.7330$_{\pm0.9898}$ & 1.3308$_{\pm0.6640}$ & 1.0559$_{\pm0.1240}$ \\
\bfseries Gradient Boosting + DivBO & 0.7908$_{\pm0.1848}$ & 1.4011$_{\pm0.5359}$ & 1.0000$_{\pm0.0000}$ & 3.1558$_{\pm1.8843}$ & 3.0103$_{\pm1.3714}$ & 1.3173$_{\pm1.1519}$ & 1.1285$_{\pm0.4153}$ \\
\bfseries Linear + DivBO & 0.7433$_{\pm0.0870}$ & 0.6471$_{\pm0.2272}$ & 1.0199$_{\pm0.0345}$ & 1.5843$_{\pm1.6159}$ & 1.2392$_{\pm0.5343}$ & 1.0786$_{\pm1.0100}$ & 1.0326$_{\pm0.1265}$ \\
\bfseries SVM + DivBO & 0.8312$_{\pm0.0943}$ & 0.7406$_{\pm0.2612}$ & 1.0730$_{\pm0.1264}$ & 5.5177$_{\pm3.3684}$ & 5.0079$_{\pm3.4926}$ & 1.4542$_{\pm1.4518}$ & 2.7702$_{\pm2.9398}$ \\
\bfseries XGB + DivBO & 0.7884$_{\pm0.1286}$ & 0.7519$_{\pm0.2361}$ & 1.0000$_{\pm0.0000}$ & 3.1705$_{\pm2.8221}$ & 2.0159$_{\pm1.8026}$ & 1.7698$_{\pm1.6369}$ & 1.3226$_{\pm0.6368}$ \\
\bfseries CatBoost + DivBO & \underline{\textit{0.6941}}$_{\pm0.0953}$ & 0.8110$_{\pm0.2405}$ & 1.0000$_{\pm0.0000}$ & 2.3936$_{\pm2.4785}$ & 2.4792$_{\pm2.3078}$ & 1.3149$_{\pm1.3954}$ & 1.0730$_{\pm0.0815}$ \\
\bfseries LightGBM + DivBO & 0.7706$_{\pm0.1919}$ & 3.7392$_{\pm2.7136}$ & 1.0000$_{\pm0.0000}$ & 2.1979$_{\pm2.2093}$ & 2.2706$_{\pm2.2088}$ & 1.6938$_{\pm1.1246}$ & 1.6427$_{\pm2.0133}$ \\
\bfseries Akaike + DivBO & 0.8757$_{\pm0.0944}$ & 0.8159$_{\pm0.2196}$ & 1.0000$_{\pm0.0000}$ & 1.2270$_{\pm1.0049}$ & 1.0452$_{\pm0.2362}$ & 1.0202$_{\pm0.3379}$ & 1.0564$_{\pm0.1884}$ \\
\bfseries MA + DivBO & 0.7245$_{\pm0.0788}$ & 0.5712$_{\pm0.2185}$ & 0.9678$_{\pm0.0558}$ & 1.0559$_{\pm0.7452}$ & 0.9501$_{\pm0.1617}$ & 1.0068$_{\pm0.4141}$ & 1.0237$_{\pm0.1502}$ \\
\bfseries Single + Random & 1.0067$_{\pm0.0164}$ & 1.0000$_{\pm0.0000}$ & 0.9240$_{\pm0.3504}$ & 1.2915$_{\pm0.9952}$ & 1.1261$_{\pm0.3134}$ & 1.0225$_{\pm0.3353}$ & 1.1378$_{\pm0.4641}$ \\
\bfseries Top25 + Random & 0.8397$_{\pm0.1000}$ & 0.5848$_{\pm0.1980}$ & \underline{\textit{0.6526}}$_{\pm0.3019}$ & 3.6553$_{\pm2.7053}$ & 3.0436$_{\pm2.1378}$ & 1.2599$_{\pm1.5015}$ & 1.0611$_{\pm0.2799}$ \\
\bfseries Quick + Random & 0.7305$_{\pm0.0764}$ & 0.5958$_{\pm0.1917}$ & 0.6656$_{\pm0.2968}$ & 1.7769$_{\pm2.1443}$ & 1.1646$_{\pm0.3728}$ & 1.0797$_{\pm1.0007}$ & 1.0151$_{\pm0.1546}$ \\
\bfseries Greedy + Random & 0.7024$_{\pm0.0720}$ & 0.5783$_{\pm0.1857}$ & 0.6617$_{\pm0.2839}$ & 1.6723$_{\pm2.1446}$ & 0.9961$_{\pm0.1290}$ & 1.0725$_{\pm0.9978}$ & 1.0023$_{\pm0.0961}$ \\
\bfseries CMAES + Random & 1.2164$_{\pm0.3851}$ & 1.0000$_{\pm0.0000}$ & 1.0000$_{\pm0.0000}$ & \underline{\textit{1.0000}}$_{\pm0.0000}$ & 1.0000$_{\pm0.0000}$ & 1.1579$_{\pm0.9913}$ & 1.0004$_{\pm0.0851}$ \\
\bfseries RF + Random & 0.7505$_{\pm0.0915}$ & 0.8325$_{\pm0.2255}$ & 0.8654$_{\pm0.3179}$ & 3.8684$_{\pm2.8060}$ & 3.0156$_{\pm1.9975}$ & 1.2034$_{\pm0.4063}$ & 1.0079$_{\pm0.0795}$ \\
\bfseries GBT + Random & 0.7235$_{\pm0.1605}$ & 1.8377$_{\pm1.4510}$ & 0.9533$_{\pm0.0809}$ & 2.3407$_{\pm1.3498}$ & 3.1659$_{\pm2.0795}$ & 1.3344$_{\pm1.1611}$ & 1.0500$_{\pm0.1940}$ \\
\bfseries Linear + Random & 0.7541$_{\pm0.0897}$ & 0.7400$_{\pm0.2827}$ & 0.7732$_{\pm0.2331}$ & 2.0502$_{\pm2.2932}$ & 1.9028$_{\pm1.3239}$ & 1.0423$_{\pm1.0174}$ & 1.0430$_{\pm0.1224}$ \\
\bfseries SVM + Random & 0.8010$_{\pm0.0903}$ & 0.7767$_{\pm0.3006}$ & 0.9268$_{\pm0.5498}$ & 5.4773$_{\pm3.3607}$ & 5.5665$_{\pm3.3068}$ & 1.3964$_{\pm1.4267}$ & 2.8271$_{\pm3.0203}$ \\
\bfseries XGB + Random & 0.8288$_{\pm0.1463}$ & 0.7369$_{\pm0.2384}$ & 0.8875$_{\pm0.5032}$ & 3.7747$_{\pm3.1635}$ & 2.6163$_{\pm2.1103}$ & 1.7214$_{\pm1.5419}$ & 1.1962$_{\pm0.3298}$ \\
\bfseries CatBoost + Random & \textbf{0.6911}$_{\pm0.1007}$ & 0.8294$_{\pm0.2366}$ & 0.9472$_{\pm0.4823}$ & 2.6695$_{\pm2.6137}$ & 2.7315$_{\pm2.3219}$ & 1.2006$_{\pm1.2116}$ & 1.0559$_{\pm0.2144}$ \\
\bfseries LightGBM + Random & 0.7960$_{\pm0.1977}$ & 3.6975$_{\pm2.6631}$ & 5.2689$_{\pm4.6347}$ & 2.8698$_{\pm2.6137}$ & 3.6272$_{\pm3.2791}$ & 1.7133$_{\pm1.0934}$ & 1.6848$_{\pm2.1630}$ \\
\bfseries Akaike + Random & 0.8045$_{\pm0.0987}$ & 0.6538$_{\pm0.1831}$ & 0.6905$_{\pm0.2981}$ & 2.2022$_{\pm2.4118}$ & 1.5073$_{\pm0.6848}$ & 1.1180$_{\pm1.0381}$ & \textbf{0.9970}$_{\pm0.0886}$ \\
\bfseries MA + Random & 0.9069$_{\pm0.1812}$ & 0.8677$_{\pm0.2292}$ & 0.6698$_{\pm0.2898}$ & 4.8593$_{\pm3.1360}$ & 3.4575$_{\pm2.6490}$ & 1.4759$_{\pm1.9396}$ & 1.4286$_{\pm1.7242}$ \\ \midrule
\bfseries NE(S) + Random & 0.7709$_{\pm0.2204}$ & 0.7551$_{\pm0.2493}$ & \textbf{0.6187}$_{\pm0.2950}$ & \textbf{0.8292}$_{\pm0.5466}$ & \textbf{0.8160}$_{\pm0.3852}$ & \textbf{0.9540}$_{\pm0.5077}$ & 4.2183$_{\pm3.4808}$ \\
\bfseries NE(MA) + Random & 0.6972$_{\pm0.0712}$ & 0.7911$_{\pm0.2147}$ & 0.6650$_{\pm0.2750}$ & 1.6877$_{\pm2.1535}$ & 1.0903$_{\pm0.2578}$ & 1.0674$_{\pm0.9998}$ & 1.0277$_{\pm0.1994}$ \\
\bfseries NE(S) + DivBO & 0.7715$_{\pm0.2141}$ & 0.6204$_{\pm0.2234}$ & 1.0000$_{\pm0.0000}$ & 1.5040$_{\pm1.9442}$ & \underline{\textit{0.8329}}$_{\pm0.2659}$ & \underline{\textit{0.9729}}$_{\pm0.3952}$ & 6.9453$_{\pm3.4749}$ \\
\bfseries NE(MA) + DivBO & 0.7036$_{\pm0.0698}$ & \underline{\textit{0.5704}}$_{\pm0.2345}$ & 1.0000$_{\pm0.0000}$ & 1.1237$_{\pm0.9964}$ & 0.9200$_{\pm0.1966}$ & 1.0016$_{\pm0.3407}$ & 1.0070$_{\pm0.0977}$ \\
\bottomrule
\end{tabular}

    }
        \label{tab:rq3_aggregated}
\end{table}


\begin{figure}[H]
    \centering
    \includegraphics[width=0.9\linewidth]{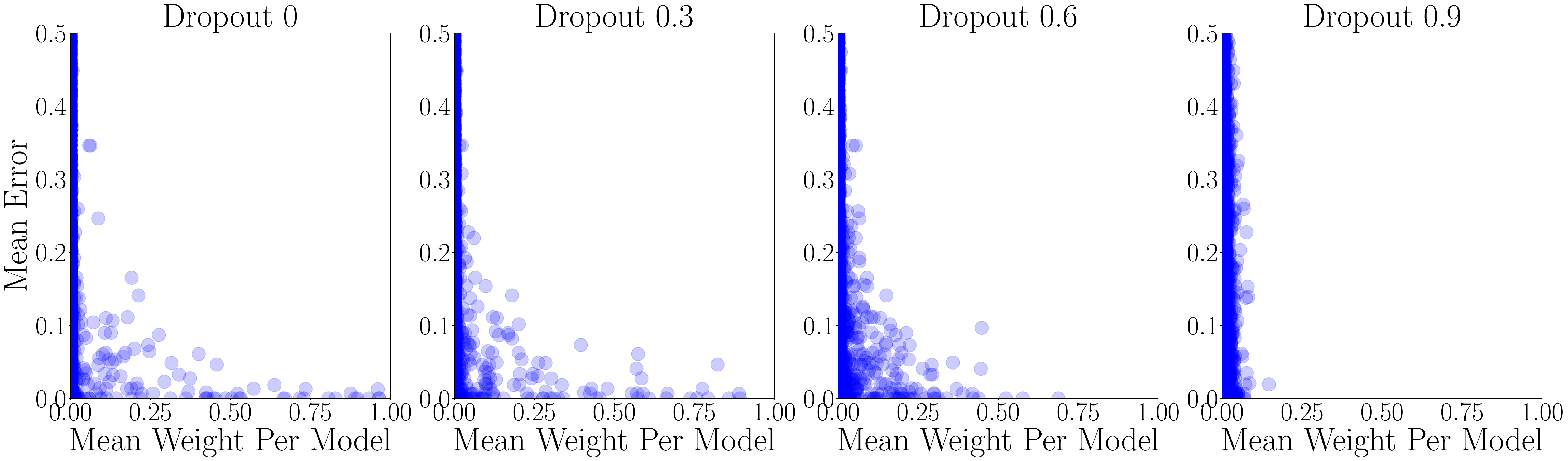}
    \caption{Mean weights assigned to the base models decrease with DropOut rate. Every data point is a model. The errors and weights are the mean values across many datasets for every model.}
    \label{fig:weight_vs_error}
    \vspace{-1em}
\end{figure}

\section{Visualizing the effect of the regularization on the model weights}
\label{sec:weight_vs_error}

In Figure \ref{fig:weight_vs_error}, we study the effect of the DropOut rate on the ensembler weights, in Model-Averaging model using the \textit{QT-Micro} meta-dataset. When there is no DropOut some weights are close to one, i.e. they are preferred models. As we increase the value, many models with high weights decrease. If the rate is very high (e.g. $0.9$), we will have many models contributing to the ensemble, with weights different from zero.

\section{Details On Meta-Datasets} 
\label{sec:details_metadatasets}

In Table \ref{tab:metadatasets} we provide further information about the meta-datasets, such as the number of datasets, the average samples in the test and validation splits and the average number of classes.

\subsection{Scikit learn Pipelines}
\label{sec:sk_pipelines_metadataset}
\paragraph{Search Space} Our primary motivation is to investigate the ensembling of automated machine learning pipelines to enhance performance across various classification tasks. To effectively study ensembling methods and benchmark different strategies, we require a diverse set of pipelines. Therefore, we construct a comprehensive search space inspired by the TPOT library~\citep{olson2016tpot}, encompassing a wide range of preprocessors, feature selectors, and classifiers. The pipelines are structured in three stages: preprocessor, feature selector, and classifier. This allows us to explore numerous configurations systematically. This extensive and diverse search space enables us to examine the impact of ensembling on a variety of models and serves as a robust benchmark for evaluating different ensembling techniques. Detailed descriptions of the components and their hyperparameters are provided in Tables~\ref{classifiers}, \ref{preprocessors}, and \ref{selectors}.

\textbf{Datasets} We utilized the OpenML Curated Classification benchmark suite 2018 (OpenML-CC18)~\citep{bischl2019oml} as the foundation for our meta-dataset. OpenML-CC18 comprises $72$ diverse classification datasets carefully selected to represent a wide spectrum of real-world problems, varying in size, dimensionality, number of classes, and domains. This selection ensures comprehensive coverage across various types of classification tasks, providing a robust platform for evaluating the performance and generalizability of different ensembling approaches.

\textbf{Meta-Dataset Creation} To construct our meta-dataset, we randomly selected $500$ pipeline configurations for each dataset from our comprehensive search space. Each pipeline execution was constrained to a maximum runtime of $15$ minutes. During this process, we had to exclude three datasets (\textit{connect-4, Devnagari-Script, Internet-Advertisements}) due to excessive computational demands that exceeded our runtime constraints. For data preprocessing, we standardized the datasets by removing missing values and encoding categorical features. We intentionally left other preprocessing tasks to be handled autonomously by the pipelines themselves, allowing them to adapt to the specific characteristics of each dataset. This approach ensures that the pipelines can perform necessary transformations such as scaling, normalization, or feature engineering based on their internal configurations, which aligns with our objective of evaluating automated machine learning pipelines in a realistic setting.

\begin{table}[H]
\centering
\caption{Classifiers and their hyperparameters used in the TPOT search space.}
\label{classifiers}
\resizebox{\textwidth}{!}{%
\begin{tabular}{ll}
\hline
\textbf{Classifier} & \textbf{Hyperparameters} \\ \hline
\texttt{sklearn.naive\_bayes.GaussianNB} & None \\ \hline
\texttt{sklearn.naive\_bayes.BernoulliNB} &
\begin{tabular}[t]{@{}l@{}}
\texttt{alpha} (float, [1e-3, 100.0], default=50.0) \\
\texttt{fit\_prior} (categorical, \{True, False\})
\end{tabular} \\ \hline
\texttt{sklearn.naive\_bayes.MultinomialNB} &
\begin{tabular}[t]{@{}l@{}}
\texttt{alpha} (float, [1e-3, 100.0], default=50.0) \\
\texttt{fit\_prior} (categorical, \{True, False\})
\end{tabular} \\ \hline
\texttt{sklearn.tree.DecisionTreeClassifier} &
\begin{tabular}[t]{@{}l@{}}
\texttt{criterion} (categorical, \{'gini', 'entropy'\}) \\
\texttt{max\_depth} (int, [1, 10], default=5) \\
\texttt{min\_samples\_split} (int, [2, 20], default=11) \\
\texttt{min\_samples\_leaf} (int, [1, 20], default=11)
\end{tabular} \\ \hline
\texttt{sklearn.ensemble.ExtraTreesClassifier} &
\begin{tabular}[t]{@{}l@{}}
\texttt{n\_estimators} (constant, 100) \\
\texttt{criterion} (categorical, \{'gini', 'entropy'\}) \\
\texttt{max\_features} (float, [0.05, 1.0], default=0.525) \\
\texttt{min\_samples\_split} (int, [2, 20], default=11) \\
\texttt{min\_samples\_leaf} (int, [1, 20], default=11) \\
\texttt{bootstrap} (categorical, \{True, False\})
\end{tabular} \\ \hline
\texttt{sklearn.ensemble.RandomForestClassifier} &
\begin{tabular}[t]{@{}l@{}}
\texttt{n\_estimators} (constant, 100) \\
\texttt{criterion} (categorical, \{'gini', 'entropy'\}) \\
\texttt{max\_features} (float, [0.05, 1.0], default=0.525) \\
\texttt{min\_samples\_split} (int, [2, 20], default=11) \\
\texttt{min\_samples\_leaf} (int, [1, 20], default=11) \\
\texttt{bootstrap} (categorical, \{True, False\})
\end{tabular} \\ \hline
\texttt{sklearn.ensemble.GradientBoostingClassifier} &
\begin{tabular}[t]{@{}l@{}}
\texttt{n\_estimators} (constant, 100) \\
\texttt{learning\_rate} (float, [1e-3, 1.0], default=0.5) \\
\texttt{max\_depth} (int, [1, 10], default=5) \\
\texttt{min\_samples\_split} (int, [2, 20], default=11) \\
\texttt{min\_samples\_leaf} (int, [1, 20], default=11) \\
\texttt{subsample} (float, [0.05, 1.0], default=0.525) \\
\texttt{max\_features} (float, [0.05, 1.0], default=0.525)
\end{tabular} \\ \hline
\texttt{sklearn.neighbors.KNeighborsClassifier} &
\begin{tabular}[t]{@{}l@{}}
\texttt{n\_neighbors} (int, [1, 100], default=50) \\
\texttt{weights} (categorical, \{'uniform', 'distance'\}) \\
\texttt{p} (categorical, \{1, 2\})
\end{tabular} \\ \hline
\texttt{sklearn.linear\_model.LogisticRegression} &
\begin{tabular}[t]{@{}l@{}}
\texttt{penalty} (categorical, \{'l1', 'l2'\}) \\
\texttt{C} (float, [1e-4, 25.0], default=12.525) \\
\texttt{dual} (categorical, \{True, False\}) \\
\texttt{solver} (constant, 'liblinear')
\end{tabular} \\ \hline
\texttt{xgboost.XGBClassifier} &
\begin{tabular}[t]{@{}l@{}}
\texttt{n\_estimators} (constant, 100) \\
\texttt{max\_depth} (int, [1, 10], default=5) \\
\texttt{learning\_rate} (float, [1e-3, 1.0], default=0.5) \\
\texttt{subsample} (float, [0.05, 1.0], default=0.525) \\
\texttt{min\_child\_weight} (int, [1, 20], default=11) \\
\texttt{n\_jobs} (constant, 1) \\
\texttt{verbosity} (constant, 0)
\end{tabular} \\ \hline
\texttt{sklearn.linear\_model.SGDClassifier} &
\begin{tabular}[t]{@{}l@{}}
\texttt{loss} (categorical, \{'log\_loss', 'modified\_huber'\}) \\
\texttt{penalty} (categorical, \{'elasticnet'\}) \\
\texttt{alpha} (float, [0.0, 0.01], default=0.005) \\
\texttt{learning\_rate} (categorical, \{'invscaling', 'constant'\}) \\
\texttt{fit\_intercept} (categorical, \{True, False\}) \\
\texttt{l1\_ratio} (float, [0.0, 1.0], default=0.5) \\
\texttt{eta0} (float, [0.01, 1.0], default=0.505) \\
\texttt{power\_t} (float, [0.0, 100.0], default=50.0)
\end{tabular} \\ \hline
\texttt{sklearn.neural\_network.MLPClassifier} &
\begin{tabular}[t]{@{}l@{}}
\texttt{alpha} (float, [1e-4, 0.1], default=0.05) \\
\texttt{learning\_rate\_init} (float, [0.0, 1.0], default=0.5)
\end{tabular} \\ \hline
\end{tabular}
}
\end{table}

\begin{table}[H]
\centering
\caption{Preprocessors and their hyperparameters used in the TPOT search space.}
\label{preprocessors}
\resizebox{\textwidth}{!}{%
\begin{tabular}{ll}
\hline
\textbf{Preprocessor} & \textbf{Hyperparameters} \\ \hline
\texttt{None} & None \\ \hline
\texttt{sklearn.preprocessing.Binarizer} &
\begin{tabular}[t]{@{}l@{}}
\texttt{threshold} (float, [0.0, 1.0], default=0.5)
\end{tabular} \\ \hline
\texttt{sklearn.decomposition.FastICA} &
\begin{tabular}[t]{@{}l@{}}
\texttt{tol} (float, [0.0, 1.0], default=0.0)
\end{tabular} \\ \hline
\texttt{sklearn.cluster.FeatureAgglomeration} &
\begin{tabular}[t]{@{}l@{}}
\texttt{linkage} (categorical, \{'ward', 'complete', 'average'\}) \\
\texttt{metric} (categorical, \{'euclidean', 'l1', 'l2', 'manhattan', 'cosine'\})
\end{tabular} \\ \hline
\texttt{sklearn.preprocessing.MaxAbsScaler} & None \\ \hline
\texttt{sklearn.preprocessing.MinMaxScaler} & None \\ \hline
\texttt{sklearn.preprocessing.Normalizer} &
\begin{tabular}[t]{@{}l@{}}
\texttt{norm} (categorical, \{'l1', 'l2', 'max'\})
\end{tabular} \\ \hline
\texttt{sklearn.kernel\_approximation.Nystroem} &
\begin{tabular}[t]{@{}l@{}}
\texttt{kernel} (categorical, \{'rbf', 'cosine', 'chi2', 'laplacian', 'polynomial', \\
\hspace{1em} 'poly', 'linear', 'additive\_chi2', 'sigmoid'\}) \\
\texttt{gamma} (float, [0.0, 1.0], default=0.5) \\
\texttt{n\_components} (int, [1, 10], default=5)
\end{tabular} \\ \hline
\texttt{sklearn.decomposition.PCA} &
\begin{tabular}[t]{@{}l@{}}
\texttt{svd\_solver} (categorical, \{'randomized'\}) \\
\texttt{iterated\_power} (int, [1, 10], default=5)
\end{tabular} \\ \hline
\texttt{sklearn.preprocessing.PolynomialFeatures} &
\begin{tabular}[t]{@{}l@{}}
\texttt{degree} (constant, 2) \\
\texttt{include\_bias} (categorical, \{False\}) \\
\texttt{interaction\_only} (categorical, \{False\})
\end{tabular} \\ \hline
\texttt{sklearn.kernel\_approximation.RBFSampler} &
\begin{tabular}[t]{@{}l@{}}
\texttt{gamma} (float, [0.0, 1.0], default=0.5)
\end{tabular} \\ \hline
\texttt{sklearn.preprocessing.RobustScaler} & None \\ \hline
\texttt{sklearn.preprocessing.StandardScaler} & None \\ \hline
\texttt{tpot.builtins.ZeroCount} & None \\ \hline
\texttt{tpot.builtins.OneHotEncoder} &
\begin{tabular}[t]{@{}l@{}}
\texttt{minimum\_fraction} (float, [0.05, 0.25], default=0.15) \\
\texttt{sparse} (categorical, \{False\}) \\
\texttt{threshold} (constant, 10)
\end{tabular} \\ \hline
\end{tabular}
}
\end{table}

\begin{table}[H]
\centering
\caption{Feature selectors and their hyperparameters used in the TPOT search space.}
\label{selectors}
\resizebox{\textwidth}{!}{%
\begin{tabular}{ll}
\hline
\textbf{Selector} & \textbf{Hyperparameters} \\ \hline
\texttt{None} & None \\ \hline
\texttt{sklearn.feature\_selection.SelectFwe} &
\begin{tabular}[t]{@{}l@{}}
\texttt{alpha} (float, [0.0, 0.05], default=0.025)
\end{tabular} \\ \hline
\texttt{sklearn.feature\_selection.SelectPercentile} &
\begin{tabular}[t]{@{}l@{}}
\texttt{percentile} (int, [1, 100], default=50)
\end{tabular} \\ \hline
\texttt{sklearn.feature\_selection.VarianceThreshold} &
\begin{tabular}[t]{@{}l@{}}
\texttt{threshold} (float, [0.0001, 0.2], default=0.1)
\end{tabular} \\ \hline
\texttt{sklearn.feature\_selection.RFE} &
\begin{tabular}[t]{@{}l@{}}
\texttt{step} (float, [0.05, 1.0], default=0.525) \\
\texttt{estimator} (categorical, \{'sklearn.ensemble.ExtraTreesClassifier'\}) \\
\textbf{Estimator Hyperparameters:} \\
\quad \texttt{n\_estimators} (constant, 100) \\
\quad \texttt{criterion} (categorical, \{'gini', 'entropy'\}) \\
\quad \texttt{max\_features} (float, [0.05, 1.0], default=0.525)
\end{tabular} \\ \hline
\texttt{sklearn.feature\_selection.SelectFromModel} &
\begin{tabular}[t]{@{}l@{}}
\texttt{threshold} (float, [0.0, 1.0], default=0.5) \\
\texttt{estimator} (categorical, \{'sklearn.ensemble.ExtraTreesClassifier'\}) \\
\textbf{Estimator Hyperparameters:} \\
\quad \texttt{n\_estimators} (constant, 100) \\
\quad \texttt{criterion} (categorical, \{'gini', 'entropy'\}) \\
\quad \texttt{max\_features} (float, [0.05, 1.0], default=0.525)
\end{tabular} \\ \hline
\end{tabular}
}
\end{table}

\begin{table}[H]
    \centering
        \caption{Metadatasets Information}
        \resizebox{\textwidth}{!}{%
    \begin{tabular}{cccccccc}
\toprule
  \bfseries Meta-Dataset & \bfseries Modality & \bfseries Task Information &\bfseries No. Datasets &\makecell[cc]{\bfseries Avg. Samples \\ \bfseries for Validation } &\makecell[cc]{\bfseries Avg. Samples \\ \bfseries for Test } &\makecell[cc]{\bfseries Avg. Models \\ \bfseries per Dataset }  &\makecell[cc]{\bfseries Avg. Classes \\ \bfseries per Dataset } \\
 \midrule
 \textbf{Nasbench (100)} & Vision & NAS, Classification~\citep{dong2020bench}& 3 & 11000 & 6000 & 100 & 76.6\\
  \textbf{Nasbench (1K)} & Vision & NAS, Classification~\citep{dong2020bench}& 3 & 11000   & 6000 & 1K & 76.6 \\
 \textbf{QuickTune (Micro)} & Vision& Finetuning, Classification~\citep{arango2024quicktune}& 30 & 160 & 160 & 255 & 20.\\
 \textbf{QuickTune (Mini)} & Vision &  Finetuning, Classification~\citep{arango2024quicktune}& 30 & 1088 & 1088 & 203 & 136.\\
  \textbf{FTC} & Language &  Finetuning, Classification, Section \citep{arango2024ensembling} & 6 & 39751 & 29957 & 105 & 4.6\\
 \textbf{TabRepo Clas.} & Tabular & Classification ~\citep{tabrepo2023}& 83 & 1134&  126 & 1530 & 3.4 \\
  \textbf{TabRepo Reg.} & Tabular & Regression ~\citep{tabrepo2023}& 17 & 3054 & 3397& 1530 & - \\
 \textbf{Sk-Learn Pipelines.} & Tabular & Classification, Section \ref{sec:details_metadatasets}& 69 & 1514 & 1514 & 500 & 5.08\\
\bottomrule
\end{tabular}
    }
    \label{tab:metadatasets}
\end{table}

\end{document}